\def\ba{\mathbf{a}}
\def\bb{\mathbf{b}}
\def\bx{\mathbf{x}}
\def\by{\mathbf{y}}
\def\bz{\mathbf{z}}
\def\bpi{{\boldsymbol  \pi}}
\def\bg{{\boldsymbol  g}}
\def\bt{{\boldsymbol  \theta}}
\def\D{\mathcal{D}}
\def\L{\mathcal{L}}
\def\R{\mathbb{R}}
\def\S{\mathcal{S}}
\def\K{\mathcal{K}}
\numberwithin{equation}{section}
\newtheorem{theorem}{Theorem}[section]
\newtheorem{lemma}{Lemma}[section]
\newtheorem{corollary}{Corollary}[section]
\newtheorem{definition}{Definition}[section]
\newtheorem{remark}{Remark}[section]
\newtheorem{example}{Example}[section]
\newtheorem{assumption}{Assumption}[section]
\author{
  Shenglong Zhou and Geoffrey Ye Li\\
  ITP Lab, Department of EEE\\  Imperial College London, United Kingdom \\
  Emails:\{shenglong.zhou, geoffrey.li\}@imperial.ac.uk
  
}
\title{\vspace{-1.0cm}
 Communication-Efficient
ADMM-based Federated Learning
\vspace{-0.25cm}}
\date{}
\begin{document}

\maketitle
 
\vspace{-1.3cm}

\begin{abstract}  
\noindent \textbf{Abstract:} Federated learning has shown its advances over the last few years but is facing many challenges, such as how algorithms save communication resources, how they reduce computational costs, and whether they converge. To address these issues, this paper proposes exact and inexact ADMM-based federated learning. They  are not only communication-efficient but also converge linearly under very mild conditions, such as convexity-free and irrelevance to data distributions. Moreover,  the inexact version has low computational complexity, thereby alleviating the computational burdens significantly.
\vspace{0.3cm}
 
\noindent{\bf \textbf{Keywords}:}  ADMM-based federated learning, efficient communications, low computational costs, global convergence with linear rate 
 
\end{abstract}

\numberwithin{equation}{section}
\section{Introduction}

Federated learning, as an effective machine learning technique, gains popularity in recent years due to its ability to deal with various issues like data privacy, data security, and data access  to heterogeneous data. Typical applications include  vehicular communications \cite{samarakoon2019distributed, pokhrel2020federated,
elbir2020federated, posner2021federated}, digital health 
\cite{rieke2020future}, and  smart manufacturing \cite{fazel2013hankel}, just to name a few. The earliest work for federated learning can be traced back to \cite{konevcny2015federated} in 2015  and \cite{konevcny2016federated} in 2016. It is still undergoing development and also facing many challenges.  For example, how do algorithms save the communication resources during the learning process? What is their computational and convergence performance? We refer to some nice surveys \cite{kairouz2019advances,li2020federated,qin2021federated} for more open questions.
\subsection{Related work}
{\bf (a) Saving communication resources.} In distributed learning,  local clients and a central server communicate frequently, which sometimes is relatively inefficient. For example, if the updated parameters from some local clients are negligible, then their impact on the global aggregation (or averaging) by the central server can be ignored. On the other hand, even though the updated parameters from some local clients are desirable, the channel conditions might be poor and thus  more communication resources (e.g., transmission power and bandwidth) are required  as a remedy. Therefore, saving communications resources which can be fulfilled by skipping or reducing unnecessary communication rounds tends to be inevitable in practical.

Motivated by this, there is an impressive body of work on developing algorithms that aims to reduce the communication rounds. The stochastic gradient descent (SGD) is one of the most extensively used schemes. It executes global aggregation in a periodic fashion so as to reduce the communication rounds \cite{DeepLearning2015,AsynchronousStochastic2017, stich2018local, yu2019parallel, Lin2020Don, wang2021cooperative}. However, to establish the convergence theory, the family of SGD frequently assumes the data from local clients to be identically and independently distributed (i.i.d.), which is apparently unrealistic for federated learning settings where data distributions are  heterogeneous. 

A separated line of research investigates algorithms that make assumptions on the objective functions themselves, and hence they are irrelevant to the distributions of the involved data, such as the distributed gradient descent algorithm \cite{wang2019adaptive} for the edge computing in wireless communication, lazily aggregated gradient algorithm \cite{LAG2018} for centralized learning,  lazy and approximate dual gradients algorithm \cite{liu2021decentralized} for 
decentralized learning, and federated (iterative) hard thresholding algorithm  \cite{tong2020federated}. Despite no assumptions on the data distributions, strong conditions are often imposed on the objective functions of the learning optimization problem, including (gradient) Lipschitz continuity, strong smoothness, convexity, or strong convexity.

{\bf (b) ADMM-based learning.} Over the last few decades, the alternating direction method  of multipliers (ADMM) has shown its advances both in theoretical and numerical aspects, with extensive applications into various disciplinarians. Fairly recently, there is a success of implementation ADMM into distributed learning \cite{boyd2011distributed,song2016fast,
zhang2018improving,zheng2018stackelberg, 
graf2019distributed,huang2019dp,elgabli2020fgadmm,issaid2020communication}. When it comes to federated learning settings,  a robust ADMM-based decentralized federated learning has been  proposed in \cite{Li2017RobustFL} to mitigate the influence of some falsified
data. Very lately, inexact ADMM has drawn some attention in federated learning, thanks to its ability to alleviate computational burdens. For instance,   an inexact ADMM-based federated meta-learning algorithm has been cast in \cite{Inexact-ADMM2021} for fast and continual edge learning and a differential private inexact ADMM-based federated learning algorithm has been designed in \cite{ryu2021differentially} to accelerate the computation and protect data privacy. Again, we shall emphasize that these algorithms have been provably convergent but still required some aforementioned restrictive assumptions.

\subsection{Our contributions}
In conventional federated learning, at every step, all local clients update their parameters in parallel and then send them to the central server for aggregation. When  ADMM  is applied into federated learning, it can be viewed as a scene where the local clients update not only their parameters but also the dual parameters of the target optimization problem, followed by the global aggregation from the central server using both parameters.  In this regard, the scheme of the conventional federated learning can be deemed as a special case of ADMM. As a by-product of this paper, we reveal the relationship between federated learning and linearised inexact ADMM-based federated learning. Their frameworks are presented in Figure \ref{fig:FL-ADMM}, from which one can see that the former can be deemed as a special case of the latter.

\begin{figure}[!th]
\begin{subfigure}{.47\textwidth}
	\centering
\includegraphics[width=.99\linewidth]{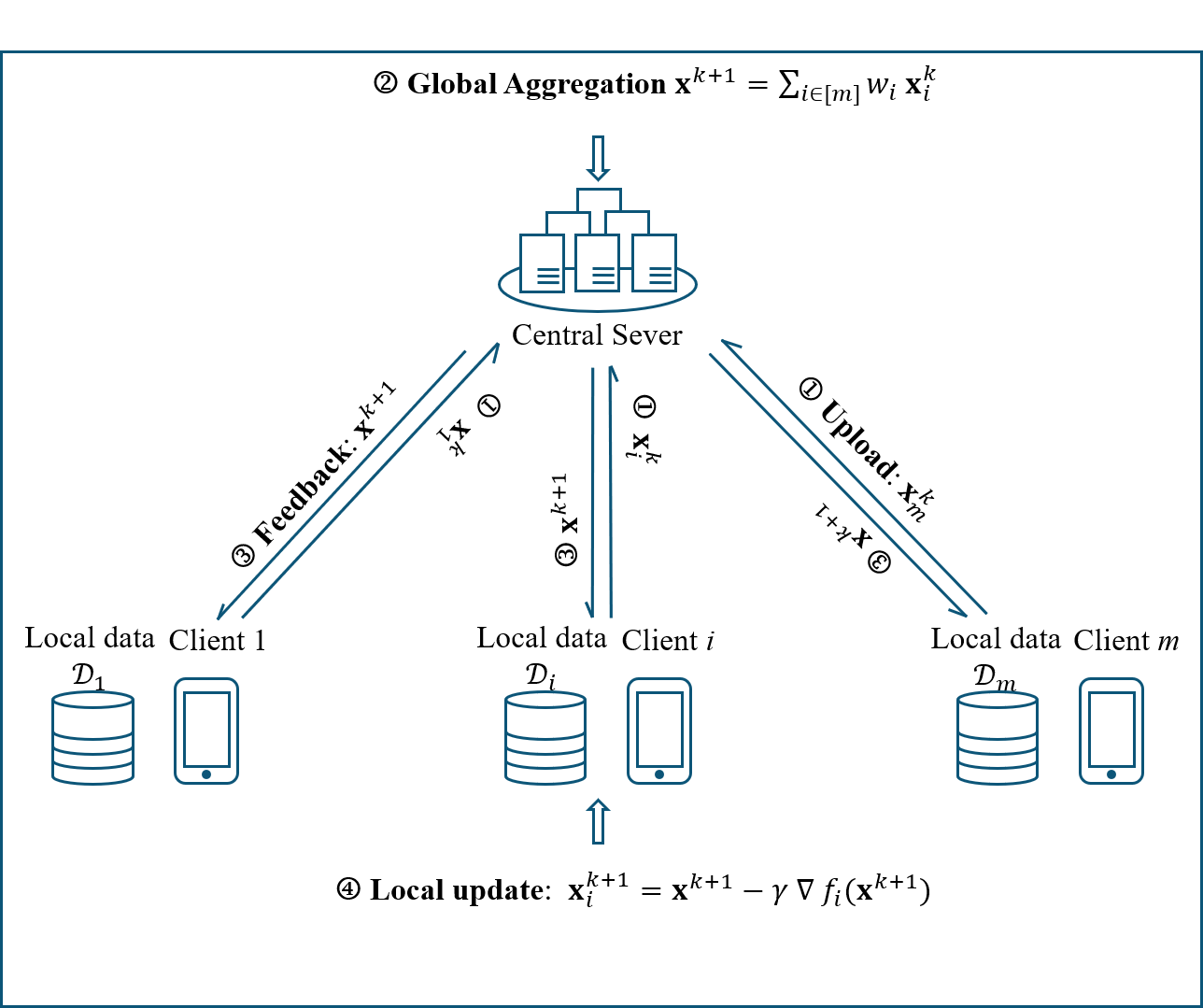}
	\caption{Federated learning.}
	\label{fig:succ-s-ex1-greedy}
\end{subfigure}  
\begin{subfigure}{.53\textwidth}
	\centering
	\includegraphics[width=.87\linewidth]{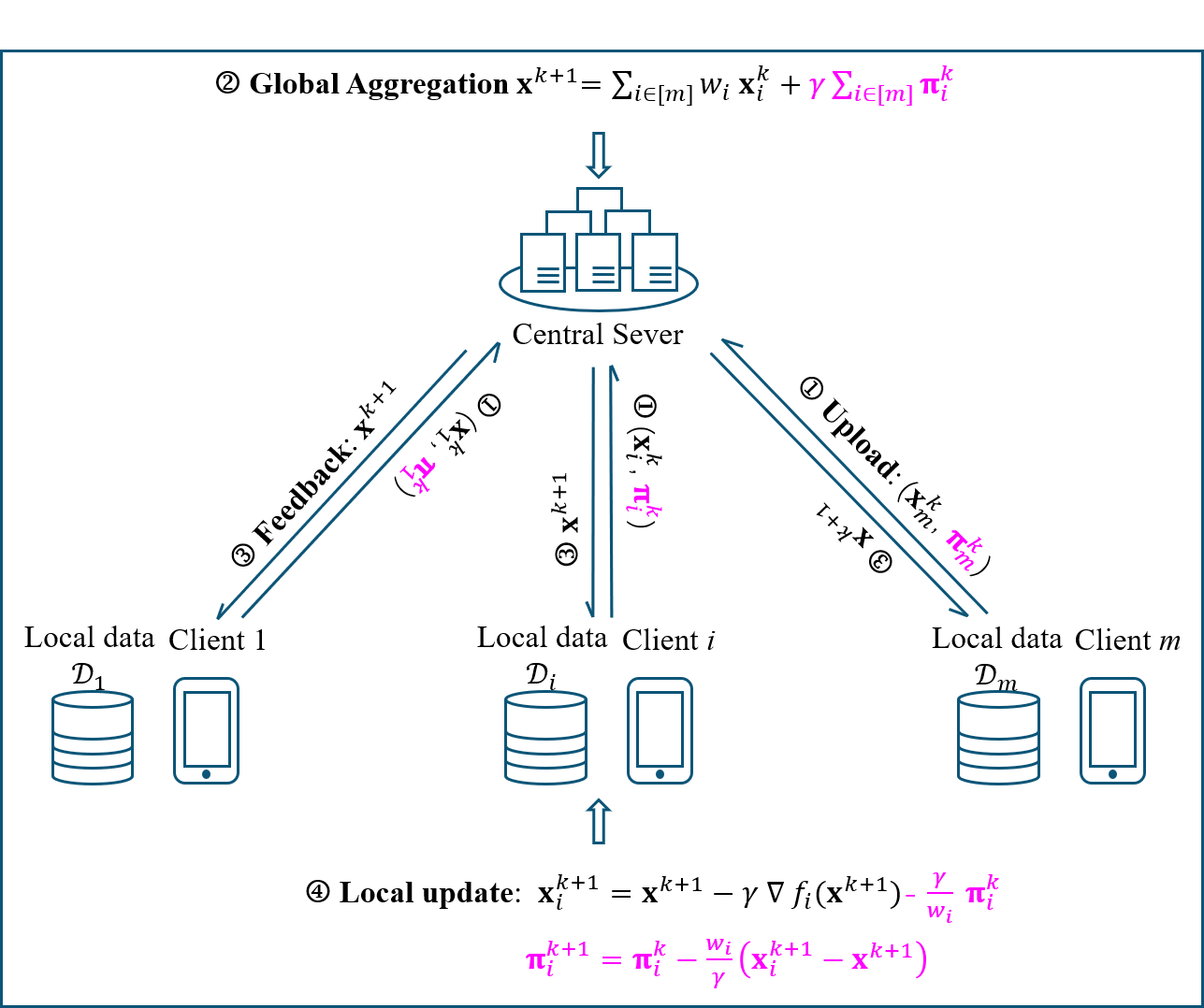}
	\caption{Linearised inexact ADMM-based federated learning.}
	\label{fig:succ-s-ex1-non-greedy}
\end{subfigure} 
\caption{Algorithm \ref{algorithm-FL} v.s. Algorithm \ref{algorithm-FL-riADMM}.\label{fig:FL-ADMM}}
\end{figure}

The main contribution of this paper is developing two ADMM-based federated learning algorithms  to save communication resources, reduce computational burdens, and converge under relatively weak assumptions. 

 I)  Based on the framework of ADMM, we design a communication-efficient ADMM ({\tt CEADMM}) algorithm  and  an inexact ADMM  ({\tt ICEADMM}) algorithm that  reduce the communication rounds between the local clients and  the central server. Precisely, between two rounds of communications, there are $k_0$ iterations allowing for local clients to update their parameters. The numerical experiments demonstrate that the larger $k_0$, the fewer communication rounds required to converge, as shown in Figures  \ref{fig:log-k0-aggr-1} or \ref{fig:log-k0-aggr-2}. This hints that setting a proper larger $k_0$ would save communication resources greatly.

II) To alleviate the computational burdens, {\tt ICEADMM} is also capable of cutting down the computational costs, thereby accelerating the entire learning process dramatically, as shown in Figures \ref{fig:k0-time-1} and \ref{fig:k0-time-2}. Nevertheless, the established theory and empirical numerical experiments show that {\tt ICEADMM} does not compromise any accuracy.
 
III) Both algorithms converge to a stationary point (see Definition \ref{def-sta}) of the learning optimization problem in \eqref{FL-opt} with a linear rate $O(k_0/k)$ only under two mild conditions: gradient Lipschitz continuity (also known for the L-smoothness in many publications)  and the boundedness of a level set, as in Theorems \ref{global-convergence-exact} and \ref{complexity-thorem-gradient}, where $k$ is the iteration number and $k_0$ is the number of the gap between two rounds of communications. These conditions are convexity-free and independent of distributions of the data. Hence, they are weaker than those used to establish convergence for the current distributed and federated learning algorithms. If we further assume the convexity, then both algorithms can achieve the optimal parameters, as shown in Corollary \ref{L-global-convergence}.

\subsection{Organization and notation}
This paper is organized as follows.  In the next section, we introduce federated learning and the framework of ADMM.  In  Section \ref{sec:ceadmm},  we design the communication-efficient ADMM ({\tt CEADMM}), followed by the establishment of its global convergence and the complexity analysis. Then similar results  are achieved for the inexact communication-efficient ADMM ({\tt ICEADMM}) in Section  \ref{sec:iceadmm}. In Section  \ref{sec:num}, we conduct some numerical experiments to demonstrate the performance of two algorithms and testify our established theorems.  Concluding remarks are given in the last section.

We end this section with summarizing the notation that will be employed throughout this paper. We use  plain,  bold, and capital letters to present scalars, vectors, and matrices, respectively, e.g., $\gamma$ and $\sigma$ are scalars, $\bx, \bx_i$ and $\bx_i^k$  are vectors, $X $ and $X ^k$ are matrices. Let $\lfloor t\rfloor$ represent  the smallest integer that is no greater than $t$ and $[m]:=\{1,2,\ldots,m\}$ with `$:=$' meaning define.  In this paper, $\R^n$ denotes the $n$-dimensional Euclidean space equipped with the inner product $\langle\cdot,\cdot\rangle$ defined by $\langle\bx,\by\rangle:=\sum_i x_iy_i$. Let $\|\cdot\|$ be the Euclidean norm, namely,  $\|\bx\|^2=\langle\bx,\bx\rangle$ and $\|\cdot\|_H$ be the  weighted norm defined by  $\|\bx\|_H^2:=\langle H\bx,\bx\rangle$.
 Write the identity matrix as $I$ and a positive semidefinite matrix $A$ as $A\succeq 0$. In particular, $A\succeq B$ represents $A-B\succeq 0$.  A function, $f$, is said to be gradient Lipschitz continuous with a constant $r>0$ if 
 \begin{eqnarray}\label{Lip-r} 
\|\nabla f(\bx)-\nabla f(\bz  ) \|  \leq r\| \bx -\bz  \|.
  \end{eqnarray}
for any $\bx$ and $\bz$, where $\nabla  f(\bx)$  represents the gradient of $f$ with respect to $\bx$.

 \section{Federated Learning and ADMM}
Suppose we have $m$ local clients/edge nodes with datasets $\{\D_i\}_{i\in[m]}$ as shown in Figure \ref{fig:succ-s-ex1-non-greedy}. Each client has the total loss $f_i(\bx) := \sum_{\bt\in\D_i} \ell_i(\bx; \bt)$, where $\ell_i(\cdot; \bt):\R^n\mapsto\R$ is a continuous loss function and bounded from below,  and $\bx\in\R^n$ is the parameter to be learned.
Below are two examples that will be used for our numerical experiments.

\begin{example}[Least square loss] \label{ex:lr} Suppose the $i$th client has data $\D_i=\{(\ba^i_1,b^i_1),\ldots,(\ba^i_{d_i},b^i_{d_i})\}$, where $\ba^i_j\in\R^n$, $b^i_j\in\R$, and $d_i$ is the cardinality of $\D_i$.  Let $\bt:=(\ba,b)$, then the least square loss is defined by
\begin{eqnarray} \label{least-squares}
 \arraycolsep=1.4pt\def\arraystretch{1.75}
\begin{array}{llll}
f_i(\bx)&=& \sum_{j=1}^{d_i} \frac{1}{2}(\langle \ba^i_j,\bx\rangle-b^i_j)^2.
\end{array} 
\end{eqnarray}
\end{example}
\begin{example}[$\ell_2$ norm regularized logistic loss]    \label{ex:lg}
Similarly, the $i$th client has data $\D_i$ but with $b^i_j\in\{0,1\}$. The $\ell_2$ norm regularized logistic loss is given by 
\begin{eqnarray} \label{logist-loss}
 \arraycolsep=1.4pt\def\arraystretch{1.75}
\begin{array}{llll}
f_i(\bx)&=&
\sum_{j=1}^{d_i}[\ln (1+e^{\langle\ba^i_j,\bx\rangle} )-b^i_j\langle\ba^i_j,\bx\rangle+\frac{\mu}{2d_i}\|\bx\|^2],
\end{array} 
\end{eqnarray}
where  $\mu>0$ is a penalty parameter.
\end{example}
The overall loss function  can be defined by 
\begin{eqnarray*}\begin{array}{lll}
f(\bx) :=   \sum_{i=1}^{m}  w_i f_i(\bx),
\end{array}\end{eqnarray*}
where  $w_i, i\in[m]$ are positive weights  that  satisfy $\sum_{i=1}^{m}  w_i=1.$ 
 Particular choices for such weights are $ w_i = d_i/d$ with   $d:= \sum_{i=1}^{m}  d_i$. 
  Federated learning aims to learn a best parameter $\bx^*$ at the central server that minimizes the overall loss function, $f$, namely,
\begin{eqnarray}\label{FL-opt}\begin{array}{lll}
 \bx^*:={\rm argmin}_{\bx\in\R^n }~f(\bx).
\end{array}\end{eqnarray}
Since $f_i$ is bounded from below, we have
\begin{eqnarray}\label{FL-opt-lower-bound}\begin{array}{lll}
 f^*:=f(\bx^*)>-\infty.
\end{array}\end{eqnarray}
 By introducing auxiliary variables, $\bx_i=\bx$,  problem  \eqref{FL-opt} can be rewritten as
\begin{eqnarray}\label{FL-opt-ver1}\begin{array}{lll}
 \underset{ \bx, \bx_1, \ldots,\bx_m\in\R^n}{\min}~~  \sum_{i=1}^{m}  w_i f_i(\bx_i),~~{\rm s.t.}~~\bx_i=\bx,~i\in[m].
\end{array}\end{eqnarray}
Throughout the paper, we shall place our interest on the above optimization problem. For simplicity, we also denote
\begin{eqnarray}\label{FX}
\begin{array}{lll}
F(X):=  \sum_{i=1}^{m} w_i f_i(\bx_i), \qquad{\rm where}\qquad X:=(\bx_1, \bx_2,\ldots,\bx_m).
\end{array}\end{eqnarray}
It is easy to see that $f(\bx)=F(X)$ if $X=(\bx,\bx,\ldots,\bx)$.

\subsection{Conventional federated learning}
The conventional federated learning  can be summarized as   Algorithm \ref{algorithm-FL}.
\begin{algorithm}[!th]
\SetAlgoLined
{Initialize}  $\bx_i^0=\bx^0,i\in[m]$, a step size $\gamma >0$. Set $k \Leftarrow 0$.
 
\For{$k=0,1,2,\ldots$}{

{\it Weights upload:}  Each  client sends the parameter $\bx^{k}_i$ to the central server. 

{\it Global aggregation:} The central server calculates the average parameter  $\bx^{k+1}$  by 
\begin{eqnarray}\label{FL-global-aggregation}\begin{array}{lll}
\bx^{k+1} = \sum_{i=1}^{m} {w_i \bx^{k}_i}.
\end{array}\end{eqnarray} 
	  
{\it  Weights feedback:} The central server broadcasts the  parameter $\bx^{k+1}$ to  every local  client.

\For{$i=1,2,\ldots,m$}{{\it Local update:}  Each client updates its parameter  locally and in parallel by 
\begin{eqnarray}\label{FL-local-update}\begin{array}{lll} 
\bx^{k+1}_i = \bx^{k+1} - \gamma \nabla f_i( \bx^{k+1}).
\end{array}\end{eqnarray}   
}}
\caption{Federated Learning \label{algorithm-FL}}
\end{algorithm}
Since we initialize $\bx_i^0=\bx^0,i\in[m]$ and  $\sum_{i=1}^{m} w_i=1$, the first step has $ \bx^{1} = \sum_{i=1}^{m} {w_i \bx^{0}_i} = \bx^0$. Therefore, \begin{eqnarray}\label{x-i-1}
\begin{array}{lll} 
\bx^{1}_i = \bx^{1} -  \gamma \nabla f_i( \bx^{1}) = \bx^0 - \gamma \ \nabla f_i( \bx^0) = \bx^0_i - \gamma \nabla f_i( \bx^0_i).
\end{array}\end{eqnarray}  
So the framework of Algorithm \ref{algorithm-FL} is same as the standard one where the first step does the local update in \eqref{x-i-1}, followed by the global aggregation. We prefer the framework of Algorithm \ref{algorithm-FL} because it has a clearer and closer link to the framework of ADMM  introduced in the sequel.
\subsection{ADMM}
We first briefly introduce some backgrounds of the alternating direction method of multipliers (ADMM). For more details, one can refer to the earliest work \cite{gabay1976dual} and a nice book \cite{boyd2011distributed}. Suppose we are given an optimization problem,
\begin{eqnarray} \label{ADMM-opt-prob}
\begin{array}{cll}
\min\limits_{\bx\in\R^n,\bz\in\R^q}~~f (\bx)+  g(\bz),\qquad
{\rm s.t.}~~ A\bx+B\bz=\bb, 
\end{array}\end{eqnarray}
where $A\in\R^{p\times n}$,  $B\in\R^{p\times q}$, and   $\bb\in\R^{p}$.
To implement ADMM, we need the so-called augmented Lagrange function of  problem  \eqref{ADMM-opt-prob}, which is defined by
\begin{eqnarray}\label{ADMM-opt-ver11}
\begin{array}{lll}
\L(\bx,\bz ,\bpi) := f (\bx)+  g(\bz) + \langle A\bx+B\bz-\bb, \bpi \rangle + \frac{\sigma}{2}\|A\bx+B\bz-\bb\|^2,
\end{array}\end{eqnarray}
where $\bpi$ is known for the Lagrange multiplier and $\sigma>0$. Based on the augmented Lagrange function, ADMM  executes the following steps for a given starting point $(\bx^0, \bz^0,\bpi^0)$ and any $k\geq 0$,
\begin{eqnarray}\label{framework-ADMM-0}
 \arraycolsep=1.4pt\def\arraystretch{1.15}
\begin{array}{llll} 
 \bx^{k+1} &=&  {\rm argmin}_{\bx \in\R^n}~\L(\bx,\bz^k ,\bpi^k), \\ 
  \bz^{k+1} &=& {\rm argmin}_{\bz\in\R^q}~\L(\bx^{k+1},\bz,\bpi^k), \\
  \bpi^{k+1} &=&    \bpi^{k} + \sigma(A\bx^{k+1}+B\bz^{k+1}-\bb).
\end{array} 
\end{eqnarray}
Therefore, to implement ADMM for our problem  in \eqref{FL-opt-ver1},  
the corresponding augmented Lagrange function  can be defined by, 
\begin{eqnarray}\label{FL-opt-ver11}
\begin{array}{lll}
\L(\bx,X ,\Pi) := \sum_{i=1}^{m}   L(\bx,\bx_i ,\bpi_i),
\end{array}\end{eqnarray}
where $X :=(\bx_1,\bx_2,\ldots,\bx_m),~\Pi:=(\bpi_1,\bpi_2,\ldots,\bpi_m)$, and $L(\bx,\bx_i ,\bpi_i)$ is  
\begin{eqnarray} \label{Def-L}
\begin{array}{lll}
L(\bx,\bx_i ,\bpi_i):= w_i f_i(\bx_i)+    \langle \bx_i-\bx, \bpi_i\rangle + \frac{\sigma_i}{2}\|\bx_i-\bx\|^2.
\end{array}\end{eqnarray}
Here, $\bpi_i\in\R^n,i\in[m]$ are the Lagrange multipliers and $\sigma_i>0,i\in[m]$. Similar to  \eqref{framework-ADMM-0}, we have the framework of ADMM for problem \eqref{FL-opt-ver1}. That is,   for an initialized point  $(\bx^0, X^0, \Pi^0)$ and any $k\geq0$, perform the following updates iteratively,
\begin{eqnarray}\label{framework-ADMM}
 \arraycolsep=1.4pt\def\arraystretch{1.15}
\begin{array}{llll} 
 \bx^{k+1} &=&  {\rm argmin}_{\bx \in\R^n}~\L(\bx,X ^k, \Pi^k), \\ 
  \bx^{k+1}_i &=& {\rm argmin}_{\bx_i\in\R^n}~L(\bx^{k+1},\bx_i, \bpi_i^k ),~~& i\in[m], \\
  \bpi^{k+1}_i &=&    \bpi_i^{k} + \sigma_i(\bx_i^{k+1}-\bx^{k+1}),~~& i\in[m] .
\end{array} 
\end{eqnarray}

\begin{algorithm}[!th]
\SetAlgoLined
Initialize $\bx_i^0, \bpi_i^0, \sigma_i>0, i\in[m]$. Set $k \Leftarrow 0$. 

\For{$k=0,1,2,\ldots$}{

{\it Weights upload:}  Each  client sends the parameters $\bx^{k}_i$ and $\bpi_i^{k}$ to the central server. 

 {\it Global aggregation:} The central server calculates the average parameter $\bx^{k+1}$ by   \begin{eqnarray} \label{admm-sub1}
 \begin{array}{llll}
\bx^{k+1} &=&  {\rm argmin}_{\bx}~ \L(\bx,X ^k, \Pi^k).
\end{array}
\end{eqnarray}
{\it Weights feedback:} The central server broadcasts the parameter $\bx^{k+1}$ to every local  client.  

\For{$i=1,2,\ldots,m$}{
{\it Local update:} Each client updates its parameters locally and in parallel  by
\begin{eqnarray*} 
&& \begin{array}{llll}
\bx^{k+1}_i &=&{\rm argmin}_{\bx_i}~L(\by^{k+1},\bx_i, \bpi_i^k ),~~\qquad~~ 
    \end{array}\\ 
&&\begin{array}{llll}   
\bpi^{k+1}_i &=&    \bpi_i^{k} +\sigma_i(\bx_i^{k+1}-\by ^{k+1}).  
\end{array}
\end{eqnarray*} }} 
\caption{ADMM-based federated learning \label{algorithm-FL-ADMM}}
\end{algorithm}

Integrating the standard ADMM into federated learning, we derive the algorithmic framework presented in Algorithm \ref{algorithm-FL-ADMM}, where  subproblem  \eqref{admm-sub1} can be derived by
\begin{eqnarray} \label{admm-sub1-closed}
 \begin{array}{llll}
\bx^{k+1} =  {\rm argmin}_{\bx}~ \L(\bx,X ^k, \Pi^k) =  \sum_{i=1}^{m}  \frac{{\sigma_i}\bx^{k}_i}{\sigma}   +   \sum_{i=1}^{m}   \frac{\bpi_i^k}{\sigma},
\end{array}
\end{eqnarray}  with \begin{eqnarray}\label{sum-sigma-i}\begin{array}{lll}
 \sigma : = \sum_{i=1}^{m}  \sigma_i.
\end{array}\end{eqnarray} 
It is noted that in traditional federated learning,   as shown in (\ref{FL-global-aggregation}), the central server usually calculates the parameter by using the averaged value of all local parameters, $\bx_i^k$.  Besides that,   ADMM also exploits parameters $\bpi_i^k$ from the dual problem, see (\ref{admm-sub1-closed}). If we  set $\sigma_i=w_i/\gamma$ for all $i\in[m]$ and 
 a  given step size $\gamma$, then \eqref{admm-sub1-closed}  turns to
\begin{eqnarray}\label{sub-1-g0-0}\begin{array}{lll}
\bx^{k+1} =  \sum_{i=1}^{m}  w_i \bx^{k}_i +  {\gamma}\sum_{i=1}^{m}  \bpi_i^k. 
\end{array}\end{eqnarray}
It is very similar to (\ref{FL-global-aggregation}) but with an additional term involving the dual parameter, $\bpi_i^k$.

\subsection{Stationary points}
To end this section, we present the optimality conditions of problems \eqref{FL-opt-ver1} and (\ref{FL-opt}).
\begin{definition}\label{def-sta} A point $(\bx^*, X^*,\Pi^*)$ is a stationary point of   problem  (\ref{FL-opt-ver1}) if it satisfies 
\begin{eqnarray}\label{opt-con-FL-opt-ver1}
  \left\{\begin{array}{rcll}
 w_i\nabla  f_i(\bx_i^*)+\bpi_i^* &=&   0, ~~&i\in[m],  \\ 
 \bx_i^*-\bx^* &=&0,&i\in[m],\\ 
  \sum_{i=1}^{m} \bpi_i^* &=& 0.
\end{array} \right.
\end{eqnarray} 
A point $\bx^*$ is  a stationary point of   problem  (\ref{FL-opt}) if it satisfies 
\begin{eqnarray}\label{opt-con-FL-opt}
 \begin{array}{llll}
\nabla  f(\bx^*)=0.
\end{array}
\end{eqnarray} 
\end{definition}
It is not difficult to prove that any locally optimal solution to  problem  \eqref{FL-opt-ver1} (resp. (\ref{FL-opt})) must satisfy \eqref{opt-con-FL-opt-ver1} (resp. \eqref{opt-con-FL-opt}). If  $f_i$ is convex for every $i\in[m]$, then a point  is a globally optimal solution to  problem  \eqref{FL-opt-ver1} (resp.(\ref{FL-opt})) if and only if it satisfies condition \eqref{opt-con-FL-opt-ver1} (resp. \eqref{opt-con-FL-opt}).   Moreover, it is easy to see that a stationary point $(\bx^*, X^*,\Pi^*)$  of   problem  \eqref{FL-opt-ver1} indicates
\begin{eqnarray} \label{grad-x-*=0}
\begin{array}{llll}
\nabla   f(\bx^*) = \sum_{i=1}^{m}   w_i \nabla   f_i(\bx^*) =  \sum_{i=1}^{m}  w_i\nabla    f_i(\bx^*_i)=0.\end{array}
\end{eqnarray} 
That is, $\bx^*$ is also a stationary point of problem  \eqref{FL-opt}.  

%

\section{Communication-Efficient ADMM}\label{sec:ceadmm}

The framework  of ADMM in \eqref{framework-ADMM}  repeats the global aggregation and local update in every step. In federated learning (see Algorithm \ref{algorithm-FL-ADMM}), this manifests that  local clients and the central server have to communicate in every step. That is, the central server broadcasts to weight $\bx^{k+1}$ all local clients, and each client uploads their weights $\bx^{k+1}_i$ and $\bpi^{k+1}_i$  to the central server afterwards. However, frequent communications would come at a huge price, such as extremely long learning time and large amounts of resources, which should be avoided in reality. 

\begin{algorithm}[!th]
\SetAlgoLined
 Initialize $\bx_i^0,\bpi_i^0, \sigma_i>0, i\in[m]$,   an integer $k_0>0$. Set $k \Leftarrow 0$. 

\For{$k=0,1,2,\ldots$}{

\If{$  k\in\K:=\{0,k_0,2k_0,3k_0,\ldots\}$}{
{\it Weights upload:}  Each  client sends  its parameters $\bx^{k}_i$ and $\bpi_i^{k}$ to the central server. 

 {\it Global aggregation:} The central server calculates the average parameter  $\bx^{k+1}$ by \begin{eqnarray}\label{ceadmm-sub1}
 \begin{array}{llll}
\bx^{k+1} 
=  \sum_{i=1}^{m}  \frac{{\sigma_i}\bx^{k}_i}{\sigma}   +   \sum_{i=1}^{m} \frac{\bpi_i^k}{\sigma}.
\end{array}
\end{eqnarray}
{\it Weights feedback:} The central server broadcasts the parameter $\bx ^{k+1}$ to every  client.  
}
\For{$i=1,2,\ldots,m$}{
{{\it Local update:}} By letting $$\by^{k+1} :=\bx^{\tau_k+1},\qquad {\rm where}~~ \tau_k:=\lfloor k/k_0 \rfloor k_0,$$  each client update its parameters locally and in parallel via solving  
\begin{eqnarray} 
\label{ceadmm-sub2}
&& \begin{array}{llll}
\bx^{k+1}_i  
    &=&  {\rm argmin}_{\bx_i}~w_i f_i(\bx_i) +   \langle \bx_i-\by^{k+1}, \bpi_i^{k}\rangle +\frac{\sigma_i}{2}\|\bx_i-\by^{k+1}\|^2,
    \end{array}\\ 
\label{ceadmm-sub3}  
&&\begin{array}{llll}   
\bpi^{k+1}_i &=&    \bpi_i^{k} +\sigma_i(\bx_i^{k+1}-\by ^{k+1}).  
\end{array}
\end{eqnarray} }
}
\caption{{\tt CEADMM}: Communication-efficient ADMM-based federated learning \label{algorithm-CEADMM}}
\end{algorithm}

\subsection{Algorithmic framework}
Therefore, it is inevitable to reduce the communication rounds since they decide the efficiency of the learning process. To proceed with that, we allow local clients to update their parameters a few times and then upload their weights to the central server. In other words, the central server collects parameters from local clients only in some steps \cite{DeepLearning2015,AsynchronousStochastic2017, stich2018local, yu2019parallel, Lin2020Don, wang2021cooperative}. Following this idea, we design a communication-efficient ADMM ({\tt CEADMM}) for federated learning in  Algorithm \ref{algorithm-CEADMM}.

The framework of {\tt CEADMM} indicates that communications only occur  when $k\in\K=\{0,k_0,2k_0,\ldots\},$ where $k_0$ is a predefined positive integer. Therefore, communications rounds (e.g., weights feedback and weights upload)  can be reduced, thereby  saving the cost vastly. 

 For local updates in Algorithm \ref{algorithm-CEADMM}, we introduce an auxiliary point $\by^{k+1} =\bx^{\tau_k+1}$, where $\tau_k=\lfloor k/k_0 \rfloor k_0.$  It is easy to see that $ \tau_k < k< \tau_k+k_0$ if $k\notin\K$ and  $k=\tau_k$  if $k\in\K$.   Because of this, $\by^{k+1}$ has the following updates:
\begin{eqnarray}\label{x-y-relation} 
\by^{k+1}=\left\{ \begin{array}{llll}
  \bx^{k+1},&~~{\rm if}~~&k\in\K,\\ 
   \bx^{\tau_k+1},&~~{\rm if}~~& k\notin\K.
\end{array}\right.
\end{eqnarray}

\subsection{Global convergence}
For notational simplicity, hereafter, we denote
\begin{eqnarray} 
 \label{decreasing-property-0}   
\begin{array}{llllll}
 \bg_i^{k}&:=&w_i\nabla f_i(\bx_i^{k}),~~&\L^k&:=&\L(\by^{k},X^{k},\Pi^{k})
    \end{array}  
 \end{eqnarray} 
 and let   $\ba^{k} \rightarrow \ba$ stand for $\lim_{k\rightarrow\infty} \ba^{k} = \ba$. To establish the convergence properties, we need some assumptions on $f_i, i\in[m]$. 
\begin{assumption}\label{ass-fi} Every $f_i, i\in[m]$ is gradient Lipschitz continuous with a constant $r_i>0$. 
\end{assumption}
 With the help of the above assumption, our first result shows that the whole sequence of three objective function values $\{\L^{k}\}$, $\{ F(X^{k})  \}$, and $\{f (\by^{k})\}$ converge. 

\begin{theorem}\label{global-obj-convergence-exact}   Let $\{(\by^{k},X^{k},\Pi^{k})\}$ be the sequence generated by Algorithm \ref{algorithm-CEADMM} with $\sigma_i> 2w_ir_i$ for every $i\in[m]$. The following results hold under Assumption \ref{ass-fi}.
 \begin{itemize}
 \item[i)] 
  Three  sequences $\{\L^{k}\}$, $\{ F(X^{k})  \}$, and $\{f (\by^{k})\}$ converge to the same value, namely,
   \begin{eqnarray}  \label{L-local-convergence-limit-1}
   \begin{array}{lll}
 {\lim}_{k \rightarrow \infty}  \L^{k} = {\lim}_{k \rightarrow \infty} F(X^{k})  ={\lim}_{k \rightarrow\infty} f(\by^{k}).
    \end{array} 
 \end{eqnarray} 
 \item[ii)]  $\nabla F(X^{k})$ and $\nabla f(\by^{k})$ eventually vanish, namely, 
    \begin{eqnarray}  \label{L-local-convergence-limit-grad}
   \begin{array}{lll}
 {\lim}_{k \rightarrow \infty}\nabla F(X^{k})  ={\lim}_{k \rightarrow\infty} \nabla f(\by^{k}) =0.
    \end{array} 
 \end{eqnarray} 
 \end{itemize}
 \end{theorem}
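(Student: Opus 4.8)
The plan is the classical ``sufficient decrease $+$ bounded below $+$ telescoping'' argument for the augmented Lagrangian $\L^k$, adapted to the $k_0$-block structure of {\tt CEADMM}. The cornerstone is a closed form for the multipliers: writing the first-order optimality condition of the (strongly convex) subproblem \eqref{ceadmm-sub2}, namely $\bg_i^{k+1}+\bpi_i^k+\sigma_i(\bx_i^{k+1}-\by^{k+1})=0$, and substituting the dual step \eqref{ceadmm-sub3} gives
\[
\bpi_i^{k+1}=-\bg_i^{k+1}=-w_i\nabla f_i(\bx_i^{k+1}),\qquad i\in[m],\ k\ge 0 .
\]
Together with Assumption \ref{ass-fi} this yields the key estimate $\|\bpi_i^{k+1}-\bpi_i^k\|=w_i\|\nabla f_i(\bx_i^{k+1})-\nabla f_i(\bx_i^k)\|\le w_i r_i\|\bx_i^{k+1}-\bx_i^k\|$ for $k\ge1$, which is what lets the (otherwise uncontrolled) dual-ascent term be absorbed into the primal decrease.

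Next I would prove the descent inequality
\[
\L^{k+1}\ \le\ \L^k-\sum_{i=1}^m c_i\|\bx_i^{k+1}-\bx_i^k\|^2-\tfrac{\sigma}{2}\|\by^{k+1}-\by^k\|^2,\qquad c_i:=\tfrac{\sigma_i-w_ir_i}{2}-\tfrac{w_i^2r_i^2}{\sigma_i},
\]
by splitting $\L^{k+1}-\L^k$ into its $\Pi$-, $X$- and $\by$-increments. The $\Pi$-increment equals $\sum_{i=1}^m\sigma_i^{-1}\|\bpi_i^{k+1}-\bpi_i^k\|^2$ and is $\le\sum_{i=1}^m(w_i^2r_i^2/\sigma_i)\|\bx_i^{k+1}-\bx_i^k\|^2$ by the previous paragraph; the $X$-increment is $\le-\sum_{i=1}^m\tfrac{\sigma_i-w_ir_i}{2}\|\bx_i^{k+1}-\bx_i^k\|^2$ because $\sigma_i>w_ir_i$ makes the objective of \eqref{ceadmm-sub2} $(\sigma_i-w_ir_i)$-strongly convex with minimiser $\bx_i^{k+1}$; and the $\by$-increment is $\le-\tfrac{\sigma}{2}\|\by^{k+1}-\by^k\|^2$ because, by \eqref{x-y-relation}, $\by^{k+1}=\by^k$ when $k\notin\K$, whereas when $k\in\K$ the point $\by^{k+1}=\bx^{k+1}$ minimises the $\sigma$-strongly convex quadratic $\L(\cdot,X^k,\Pi^k)$. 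The factorisation $\sigma_i(\sigma_i-w_ir_i)-2w_i^2r_i^2=(\sigma_i-2w_ir_i)(\sigma_i+w_ir_i)$ then shows $c_i>0$ exactly under the hypothesis $\sigma_i>2w_ir_i$. For a lower bound, $\bpi_i^k=-\bg_i^k$ together with the descent lemma gives $\L^k\ge f(\by^k)+\sum_{i=1}^m\tfrac{\sigma_i-w_ir_i}{2}\|\bx_i^k-\by^k\|^2\ge f(\by^k)\ge f^*>-\infty$, by \eqref{FL-opt-lower-bound}. Hence $\{\L^k\}$ is nonincreasing and bounded below, so it converges to some $\L^\star$, and telescoping the descent inequality forces $\|\bx_i^{k+1}-\bx_i^k\|\to0$ and $\|\by^{k+1}-\by^k\|\to0$. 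Since $\bx_i^k-\by^k=\sigma_i^{-1}(\bpi_i^k-\bpi_i^{k-1})$, the multiplier estimate then yields $\|\bx_i^k-\by^k\|\le(w_ir_i/\sigma_i)\|\bx_i^k-\bx_i^{k-1}\|\to0$.

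To obtain (i): a symmetric use of the descent lemma also gives the matching upper bound $\L^k\le f(\by^k)+\sum_{i=1}^m\tfrac{\sigma_i+w_ir_i}{2}\|\bx_i^k-\by^k\|^2$, so that $0\le\L^k-f(\by^k)\to0$, whence $f(\by^k)\to\L^\star$; and from the identity $\L^k-F(X^k)=\sum_{i=1}^m\big(\langle\bx_i^k-\by^k,\bpi_i^k\rangle+\tfrac{\sigma_i}{2}\|\bx_i^k-\by^k\|^2\big)$, once the coupling terms $\langle\bx_i^k-\by^k,\bpi_i^k\rangle$ are pushed to zero, also $F(X^k)\to\L^\star$, which is \eqref{L-local-convergence-limit-1}. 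For (ii): $\bg_i^k=-\bpi_i^k$ gives $\nabla F(X^k)=-\Pi^k$, while $\nabla f(\by^k)=\sum_{i=1}^m w_i\nabla f_i(\by^k)=-\sum_{i=1}^m\bpi_i^k+\sum_{i=1}^m w_i\big(\nabla f_i(\by^k)-\nabla f_i(\bx_i^k)\big)$, with the last sum $\le\sum_i w_ir_i\|\by^k-\bx_i^k\|\to0$; so \eqref{L-local-convergence-limit-grad} comes down to the vanishing of the multipliers. For the sum $\sum_i\bpi_i^k$ this follows from the aggregation rule \eqref{ceadmm-sub1}: at $k\in\K$ it reads $\sum_{i=1}^m\bpi_i^k=\sum_{i=1}^m\sigma_i(\by^{k+1}-\bx_i^k)$ with $\by^{k+1}-\bx_i^k=(\by^{k+1}-\by^k)+(\by^k-\bx_i^k)\to0$; and for general $k$ one writes $\sum_i\bpi_i^k=\sum_i\bpi_i^{\tau_k}+\sum_{s=\tau_k+1}^k\sum_i\sigma_i(\bx_i^s-\by^s)$, a sum of $\sum_i\bpi_i^{\tau_k}\to0$ and at most $k_0$ increments each $\to0$.

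The step I expect to be the main obstacle is making the last vanishings rigorous: (a) absorbing the dual-ascent increment sharply enough that the threshold comes out to be exactly $\sigma_i>2w_ir_i$ and no larger; and, more seriously, (b) upgrading $\|\bx_i^k-\by^k\|\to0$ to the vanishing of the coupling terms $\langle\bx_i^k-\by^k,\bpi_i^k\rangle$ (needed for $\lim F(X^k)$) and of the individual multiplier blocks $\bpi_i^k=-w_i\nabla f_i(\bx_i^k)$, not merely their sum (needed for $\nabla F(X^k)$). Turning the inner product of a null sequence with $\{\bpi_i^k\}$ into $o(1)$ requires the multipliers --- equivalently the iterates $\bx_i^k$ --- to remain bounded, which is precisely where the boundedness of the sublevel set $\{\bx:f(\bx)\le\L^0\}$ (finite since $\L^k\le\L^0$ forces $f(\by^k)\le\L^0$) is invoked, in tandem with $\|\bx_i^k-\by^k\|\to0$; everything else is bookkeeping over the $k_0$-blocks.
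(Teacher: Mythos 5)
Your architecture coincides with the paper's own proof: the same sufficient-decrease inequality with the same coefficient (your $c_i$ equals $\theta_i/2$ with $\theta_i=\sigma_i-w_ir_i-2w_i^2r_i^2/\sigma_i$ from \eqref{def-ci}, positive iff $\sigma_i>2w_ir_i$ via exactly your factorisation), the same lower bound $\L^k\ge f(\by^k)\ge f^*$, the same telescoping to obtain $\triangle\bx_i^{k+1}\to0$, $\triangle\by^{k+1}\to0$ and $\bx_i^k-\by^k\to0$, and the same key identity $\bpi_i^{k+1}=-\bg_i^{k+1}$. Your route to $\lim_k\L^k=\lim_k f(\by^k)$ via the two-sided sandwich $f(\by^k)\le\L^k\le f(\by^k)+\sum_{i=1}^m\tfrac{\sigma_i+w_ir_i}{2}\|\bx_i^k-\by^k\|^2$ is a clean (arguably cleaner) variant of the paper's Mean Value Theorem computation in \eqref{f-x-f-y-exact}, and it correctly needs no boundedness. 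Your accumulation of $\sum_i\bpi_i^k$ over the $k_0$-window from the aggregation step is also essentially the paper's argument for \eqref{limit-K-grad-pi-exact}.

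The one substantive deviation is your appeal, at the end, to the boundedness of $\S(\L^0)$. That hypothesis is not available in this theorem; it enters only in Theorem \ref{global-convergence-exact} (and note that $\by^k\in\S(\L^0)$ is automatic, whereas boundedness of $\S(\L^0)$ is a genuine extra assumption on $f$). Two points. First, for part (ii) you do not need the individual blocks $\bpi_i^k$ to vanish: in this paper $\nabla F(X^k)$ is read as $\sum_{i}w_i\nabla f_i(\bx_i^k)=-\sum_i\bpi_i^k$ (see \eqref{limit-K-grad-i-exact}), the gradient along the consensus direction, so only the sum matters --- which your own window argument already delivers with no boundedness. Indeed the individual blocks cannot vanish in general, since at a stationary point $\bpi_i^*=-w_i\nabla f_i(\bx^*)\ne0$; so obstacle (b) is an artifact of reading $\nabla F$ as the full block gradient. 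Second, for $\lim_k F(X^k)=\lim_k\L^k$ the paper does not invoke the level set either: it rewrites the coupling term as $\langle\bx_i^k-\by^k,\bpi_i^k\rangle=\tfrac{1}{\sigma_i}\langle\triangle\bpi_i^k,\bpi_i^k\rangle=\tfrac{1}{2\sigma_i}\bigl(\|\bpi_i^k\|^2-\|\bpi_i^{k-1}\|^2\bigr)+\tfrac{1}{2\sigma_i}\|\triangle\bpi_i^k\|^2$, a telescoping difference plus a vanishing term (see \eqref{L-f-pi}). You are right to sense that turning $\langle\bx_i^k-\by^k,\bpi_i^k\rangle$ into $o(1)$ directly requires control of $\|\bpi_i^k\|$ --- and one may fairly ask whether the paper's telescoping identity fully closes that step without it --- but the remedy cannot be to import the $\S(\L^0)$ hypothesis into a theorem that does not assume it; you should either adopt the paper's telescoping mechanism or explicitly restate this part of the claim under the additional assumption.
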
  
 
  Theorem \ref{global-obj-convergence-exact}  states that the objective function values converge. In the below theorem, we would like to see the convergence
performance of sequence $\{(\by^{k},X^{k},\Pi^{k})\}$ itself. To proceed with that, we need the assumption on the boundedness of the following level set    \begin{eqnarray} \label{level-set-S} 
  \begin{array}{l}
  \S(\alpha):=\{\bx\in\R^n: f(\bx)\leq \alpha\}
     \end{array}
  \end{eqnarray} 
   for a given $\alpha>0$.  It is worth mentioning that the boundedness of the level set is frequently used in establishing the convergence property of optimization algorithms. There are many functions satisfying this condition, such as the coercive functions\footnote{A continuous function $f:\R^n\mapsto \R$  is coercive if $ f(\bx)\rightarrow+\infty$ when $\|\bx\|\rightarrow +\infty$.}.
  
  \begin{theorem}\label{global-convergence-exact}   Let $\{(\by^{k},X^{k},\Pi^{k})\}$ be the sequence generated by Algorithm \ref{algorithm-CEADMM} with $\sigma_i> 2w_ir_i$ for every $i\in[m]$. The following results hold under Assumption \ref{ass-fi} and the boundedness of $\S(\L^0)$.
  \begin{itemize}
\item[i)]  Then the sequence $\{(\by^{k},X^{k},\Pi^{k})\}$ is bounded, and any its accumulating point $(\by^{\infty},X^{\infty},\Pi^{\infty})$ is a stationary point of  problem  (\ref{FL-opt-ver1}), where $\by^{\infty}$ is a stationary point of  problem  (\ref{FL-opt}).
\item[ii)] If further assume that $\by^{\infty}$ is isolated, then the whole sequence $\{(\by^{k},X^{k},\Pi^{k})\}$ converges to $(\by^{\infty},X^{\infty},\Pi^{\infty})$. 
  \end{itemize}
   \end{theorem}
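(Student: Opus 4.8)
The plan is to lean on Theorem~\ref{global-obj-convergence-exact} together with the descent property $\L^{k+1}\le\L^k$ that its proof establishes (hence $\L^k\le\L^0$ for all $k$), and to proceed in three steps: (a) pin down the form of $\Pi^k$, (b) derive boundedness of $\{(\by^k,X^k,\Pi^k)\}$ together with $\|\bx_i^k-\by^k\|\to0$, and (c) identify the accumulation points, upgrading to convergence of the whole sequence once $\by^\infty$ is isolated.

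For (a) I would write down the optimality condition of the local subproblem~\eqref{ceadmm-sub2} and combine it with the dual update~\eqref{ceadmm-sub3}; this collapses to $\bg_i^k+\bpi_i^k=0$ for all $i\in[m]$ and all $k\ge1$, so $\Pi^k=-\nabla F(X^k)$ and Theorem~\ref{global-obj-convergence-exact}(ii) forces $\Pi^k\to0$. For (b) I would apply the descent lemma to each $f_i$ under Assumption~\ref{ass-fi}, namely $w_if_i(\by^k)\le w_if_i(\bx_i^k)-\langle\bg_i^k,\bx_i^k-\by^k\rangle+\tfrac{w_ir_i}{2}\|\bx_i^k-\by^k\|^2$, sum over $i$, and compare termwise with the expansion of $\L^k$ (with $\bpi_i^k$ replaced by $-\bg_i^k$), getting
\begin{equation*}
f(\by^k)\ \le\ \L^k-\sum_{i=1}^{m}\tfrac{\sigma_i-w_ir_i}{2}\|\bx_i^k-\by^k\|^2\ \le\ \L^k\ \le\ \L^0,\qquad k\ge1,
\end{equation*}
where the middle inequality uses $\sigma_i>2w_ir_i>w_ir_i$. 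Hence $\by^k\in\S(\L^0)$, so $\{\by^k\}$ is bounded; and since $\L^k-f(\by^k)\to0$ by Theorem~\ref{global-obj-convergence-exact}(i), the displayed chain forces $\|\bx_i^k-\by^k\|\to0$ for every $i$, whence $\{X^k\}$ is bounded and $\{\Pi^k\}$ is bounded (it converges). This is the boundedness claimed in~(i).

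For the remainder of (i) I would take any convergent subsequence $(\by^{k_j},X^{k_j},\Pi^{k_j})\to(\by^\infty,X^\infty,\Pi^\infty)$ and verify the three lines of~\eqref{opt-con-FL-opt-ver1}: $\Pi^\infty=0$ gives $\sum_i\bpi_i^\infty=0$; $\bg_i^{k_j}+\bpi_i^{k_j}=0$ together with continuity of $\nabla f_i$ gives $w_i\nabla f_i(\bx_i^\infty)+\bpi_i^\infty=0$; and $\|\bx_i^k-\by^k\|\to0$ gives $\bx_i^\infty=\by^\infty$; then~\eqref{grad-x-*=0} shows $\by^\infty$ is stationary for~\eqref{FL-opt}. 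For (ii) the crucial extra fact is $\|\by^{k+1}-\by^k\|\to0$: by~\eqref{x-y-relation} the vector $\by^k$ is constant on each block and jumps only at the indices $jk_0+1$, where $\by^{jk_0+1}-\by^{jk_0}=\bx^{jk_0+1}-\bx^{(j-1)k_0+1}$; substituting the aggregation rule~\eqref{ceadmm-sub1} and using $\sum_i\sigma_i/\sigma=1$ with $\by^{jk_0}=\bx^{(j-1)k_0+1}$, this difference equals $\sum_i\tfrac{\sigma_i}{\sigma}(\bx_i^{jk_0}-\by^{jk_0})+\tfrac1\sigma\sum_i\bpi_i^{jk_0}\to0$. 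A bounded sequence that is asymptotically regular and has $\by^\infty$ as an isolated accumulation point converges to it, so $\by^k\to\by^\infty$; combined with $\bx_i^k-\by^k\to0$ and $\Pi^k\to0$ this gives convergence of the whole sequence $(\by^k,X^k,\Pi^k)$.

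The step I expect to be the main obstacle is (ii): handling the block-indexed, piecewise-constant definition of $\by^k$ carefully enough to see that its only jumps vanish, and then invoking the ``bounded $+$ asymptotically regular $+$ isolated limit point $\Rightarrow$ convergent'' lemma correctly — it is precisely here that the aggregation being a genuine convex combination, $\sum_i\sigma_i=\sigma$ from~\eqref{sum-sigma-i}, is essential. The remaining steps reduce to routine bookkeeping with the descent lemma and the ADMM relation $\bg_i^k+\bpi_i^k=0$.
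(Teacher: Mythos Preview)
Your overall architecture is the paper's: show $f(\by^k)\le\L^k\le\L^0$ to put $\by^k\in\S(\L^0)$, pull $\{X^k\}$ along via $\|\bx_i^k-\by^k\|\to0$, verify \eqref{opt-con-FL-opt-ver1} on any cluster point, and for (ii) invoke the ``bounded $+$ $\triangle\by^k\to0$ $+$ isolated accumulation point $\Rightarrow$ convergent'' lemma (the paper cites \cite[Lemma~4.10]{more1983computing}). Your derivations of $\|\bx_i^k-\by^k\|\to0$ from Theorem~\ref{global-obj-convergence-exact}(i) and of $\triangle\by^k\to0$ from the aggregation formula are valid alternatives to the paper's route, which reads both facts directly off the descent inequality in Lemma~\ref{L-bounded-decreasing}.

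There is, however, a genuine error. You assert ``$\Pi^k=-\nabla F(X^k)$ and Theorem~\ref{global-obj-convergence-exact}(ii) forces $\Pi^k\to0$'', and later ``$\Pi^\infty=0$''. In this paper $\nabla F(X^k)$ denotes the \emph{single} vector $\sum_{i}\bg_i^k$ (see the computations in the proof of Theorem~\ref{global-obj-convergence-exact} and in Lemma~\ref{grad-L-bounded}), so Theorem~\ref{global-obj-convergence-exact}(ii) yields only $\sum_i\bpi_i^k\to0$, not $\bpi_i^k\to0$ for each $i$. Indeed at a stationary point $\bpi_i^\infty=-w_i\nabla f_i(\by^\infty)$, which is typically nonzero. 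This breaks two of your steps: boundedness of $\{\Pi^k\}$ cannot be argued from ``it converges'', and in (ii) the convergence $\Pi^k\to\Pi^\infty$ cannot come from $\Pi^k\to0$. The fix --- and this is what the paper does --- is to use the identity $\bpi_i^k=-\bg_i^k$ directly: boundedness then follows from the Lipschitz bound $\|\bg_i^k\|\le\|\bg_i^0\|+w_ir_i\|\bx_i^k-\bx_i^0\|$ together with boundedness of $\{\bx_i^k\}$, and convergence of $\{\Pi^k\}$ in (ii) follows from $\bx_i^k\to\by^\infty$ and continuity of $\nabla f_i$.
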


One can see that if $f$ is locally strongly convex at $\by^{\infty}$, then $\by^{\infty}$ is unique and hence is isolated. However, being isolated is a weaker assumption than locally strong convexity. It is worth mentioning that the establishment of Theorem \ref{global-convergence-exact} does not require the convexity of $f_i$ or $f$, because of this, the sequence is guaranteed to converge to the stationary point of problems (\ref{FL-opt-ver1}) and (\ref{FL-opt}). In this regard, if we further assume the convexity of $f$, then the sequence is capable of converging to the optimal solution to problems (\ref{FL-opt-ver1}) and (\ref{FL-opt}), which is stated by the following corollary.
 
\begin{corollary}\label{L-global-convergence}Let $\{(\by^{k},X^{k},\Pi^{k})\}$ be the sequence generated by Algorithm \ref{algorithm-CEADMM} with $\sigma_i>2w_ir_i$ for every $i\in[m]$.    The following results hold under Assumption \ref{ass-fi}, the boundedness of $\S(\L^0)$, and the convexity of $f$.
\begin{itemize}
\item[i)] Three  sequences $\{\L^{k}\}$, $\{ F(X^{k})\}$, and $\{ f (\by^{k})\}$ converge to the optimal function value of  problem   (\ref{FL-opt}), namely
 \begin{eqnarray}  \label{L-global-convergence-limit}
   \begin{array}{lll}
 {\lim}_{k \rightarrow \infty}  \L^{k} = {\lim}_{k \rightarrow \infty}  F(X^{k}) = {\lim}_{k \rightarrow\infty} f(\by^{k}) =  f^*.
    \end{array} 
 \end{eqnarray}   
  \item[ii)] Any accumulating point  $(\by^{\infty},X^{\infty},\Pi^{\infty})$ of   sequence  $\{(\by^{k},X^{k},\Pi^{k})\}$ is an optimal solution to  (\ref{FL-opt-ver1}), where $\by^{\infty}$ is an optimal solution to   (\ref{FL-opt}). 
 
\item [iii)]  If further assume $f$ is strongly convex. Then whole sequence  $\{(\by^{k},X^{k},\Pi^{k})\}$ converges the unique optimal solution $(\bx^*,X^*,\Pi^*)$ to  (\ref{FL-opt-ver1}), where $\bx^*$ is the unique optimal solution to  (\ref{FL-opt}).  
\end{itemize} 
\end{corollary}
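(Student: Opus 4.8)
The plan is to treat this corollary as a short consequence of Theorems \ref{global-obj-convergence-exact} and \ref{global-convergence-exact}: those already give that the three objective sequences converge and that every accumulating point of $\{(\by^{k},X^{k},\Pi^{k})\}$ is a stationary point, so the only genuinely new ingredient is that convexity of $f$ upgrades ``stationary'' to ``optimal'' and pins the common limit down to $f^*$. No new estimate is needed; the work reduces to verifying a few identities at an accumulating point and invoking continuity.

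For parts (i) and (ii), I would first apply Theorem \ref{global-convergence-exact} i): under Assumption \ref{ass-fi} and boundedness of $\S(\L^0)$ the sequence $\{(\by^{k},X^{k},\Pi^{k})\}$ is bounded, hence has an accumulating point $(\by^{\infty},X^{\infty},\Pi^{\infty})$ that is a stationary point of \eqref{FL-opt-ver1}; by Definition \ref{def-sta} this forces $\bx_i^{\infty}=\by^{\infty}$ for all $i\in[m]$, $\sum_{i=1}^m\bpi_i^{\infty}=0$, $w_i\nabla f_i(\bx_i^{\infty})+\bpi_i^{\infty}=0$, and thus $\nabla f(\by^{\infty})=0$. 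Since $f$ is convex, $\nabla f(\by^{\infty})=0$ yields $f(\by^{\infty})=f^*$ with $f^*$ as in \eqref{FL-opt-lower-bound}. Substituting $\bx_i^{\infty}=\by^{\infty}$ into $F$ in \eqref{FX} and into $L$ in \eqref{Def-L} (the inner-product and quadratic terms vanish) gives $F(X^{\infty})=f(\by^{\infty})=f^*$ and $\L(\by^{\infty},X^{\infty},\Pi^{\infty})=f^*$. Passing to the limit along the convergent subsequence, which is legitimate because Assumption \ref{ass-fi} makes each $f_i$, hence $F$, $f$ and $\L$, continuous, shows that the subsequential limits of $\{\L^{k}\}$, $\{F(X^{k})\}$ and $\{f(\by^{k})\}$ all equal $f^*$; since Theorem \ref{global-obj-convergence-exact} i) already guarantees these full limits exist and coincide, this proves \eqref{L-global-convergence-limit}, i.e.\ part (i). Part (ii) then follows by eliminating the constraint $\bx_i=\bx$ in \eqref{FL-opt-ver1}, which identifies its optimal value with $f^*=\min f$, attained exactly when $\bx=\bx^*$ minimizes $f$ and every $\bx_i$ equals it; the accumulating point is feasible with $F(X^{\infty})=f^*$, so it is an optimal solution of \eqref{FL-opt-ver1} and $\by^{\infty}$ is an optimal solution of \eqref{FL-opt}.

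For part (iii), strong convexity of $f$ makes the minimizer $\bx^*$ of \eqref{FL-opt} unique, so every accumulating point $\by^{\infty}$ of $\{\by^{k}\}$, being a solution of $\nabla f(\by^{\infty})=0$, must equal $\bx^*$; in particular $\by^{\infty}$ is isolated. Theorem \ref{global-convergence-exact} ii) then yields convergence of the whole sequence $\{(\by^{k},X^{k},\Pi^{k})\}$ to $(\by^{\infty},X^{\infty},\Pi^{\infty})$, and the stationarity relations pin this limit down uniquely: $\by^{\infty}=\bx^*$, $X^*=(\bx^*,\ldots,\bx^*)$, and $\bpi_i^*=-w_i\nabla f_i(\bx^*)$, which is therefore the unique optimal solution of \eqref{FL-opt-ver1}, with $\bx^*$ the unique optimal solution of \eqref{FL-opt}. (Alternatively, one can avoid Theorem \ref{global-convergence-exact} ii) and argue directly from boundedness together with uniqueness of the accumulating point of each of $\{\by^{k}\}$, $\{X^{k}\}$, $\{\Pi^{k}\}$.)

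The main point requiring care is part (ii): problem \eqref{FL-opt-ver1} is \emph{not} in general a convex program when only $f$, and not each $f_i$, is assumed convex, since its objective $F(X)=\sum_{i=1}^m w_i f_i(\bx_i)$ need not be convex; hence one cannot simply invoke ``a KKT point of a convex problem is optimal.'' The remedy, as above, is to route through the constraint-eliminated problem \eqref{FL-opt}, where convexity of $f$ does the job, and then lift the conclusion back. The only other subtlety is the continuity argument used to identify the common limit of the three objective sequences with $f^*$ rather than merely with $f(\by^{\infty})$ for some unspecified accumulating point, and this is immediate from Assumption \ref{ass-fi}.
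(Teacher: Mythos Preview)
Your proposal is correct and reaches the same conclusions as the paper, but the execution differs in each part. For (i), the paper avoids subsequences entirely: it writes the convexity sandwich $f(\by^{k})\geq f^{*}\geq f(\by^{k})+\langle\nabla f(\by^{k}),\bx^{*}-\by^{k}\rangle$ and passes to the limit directly using $\nabla f(\by^{k})\to 0$ from Theorem~\ref{global-obj-convergence-exact} ii) together with boundedness of $\{\by^{k}\}$, so the full limit $f(\by^{k})\to f^{*}$ drops out without ever naming an accumulating point. For (iii), the paper again argues directly: strong convexity gives $f(\by^{k})-f^{*}\geq\tfrac{\nu}{2}\|\by^{k}-\bx^{*}\|^{2}$, and since $f(\by^{k})\to f^{*}$ from part (i) this forces $\by^{k}\to\bx^{*}$; convergence of $\bx_i^{k}$ and $\bpi_i^{k}$ then follows from $(\bx_i^{k}-\by^{k})\to 0$ and the identity $\bpi_i^{k}=-\bg_i^{k}$. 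Your route via ``unique minimizer $\Rightarrow$ isolated $\Rightarrow$ Theorem~\ref{global-convergence-exact} ii)'' is equally valid but less direct. On the other hand, your treatment of (ii) is more careful than the paper's: the paper simply asserts that stationary points are optimal ``if $f$ is convex,'' glossing over the fact that \eqref{FL-opt-ver1} itself need not be a convex program; your detour through the constraint-eliminated problem \eqref{FL-opt} makes this step rigorous.
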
 
  \begin{remark}Regarding the assumption in Corollary \ref{L-global-convergence}, we note that $f$ being strongly convex does not require that every $f_i,i\in[m]$ is strongly convex. If one of $f_i$s is strongly convex and the remaining is convex, then $f=\sum_{i=1}^{m} w_if_i$ is strongly convex. Therefore, the strong convexity of $f$  is not a very strict assumption.  Moreover, the strongly convexity suffices to the boundedness of level set  $\S(\alpha)$ for any $\alpha$. Therefore, under the strongly convexity, the assumption on the boundedness of $\S(\L^0)$ can be exempted.
  \end{remark}
 \subsection{Complexity analysis}
 In this subsection, we investigate the convergence speed of the proposed  Algorithm \ref{algorithm-CEADMM}. The following result states that the minimal value among $\| \nabla  F(X^{j})\|^2,\|\nabla f (\by^{j})\|^2, j\in[k]$  vanishes with a  rate $O(k_0/k)$.
 \begin{theorem}\label{complexity-thorem-gradient}   Let $\{(\by^{k},X^{k},\Pi^{k})\}$ be the sequence generated by Algorithm \ref{algorithm-CEADMM} with $\sigma_i>2w_ir_i$ for every  $i\in[m]$. If Assumption \ref{ass-fi} holds, then  it follows
       \begin{eqnarray*}
  \begin{array}{lllll}
   \underset{j=1,2,\ldots,k}{\min}\max\{\| \nabla  F(X^{j})\|^2, \| \nabla f (\by^{j})\|^2\}
 \leq  \frac{\rho k_0 }{k} (\L^0 -f^*),
   \end{array}
  \end{eqnarray*} 
  where $\rho:=\max_{i\in[m]} {8m\sigma_i^2}/{\theta_i}$ with $\theta_i$ being given by (\ref{def-ci}).  
 \end{theorem}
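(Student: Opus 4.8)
The plan is a Lyapunov/telescoping argument built on a sufficient–decrease inequality for $\L^k=\L(\by^k,X^k,\Pi^k)$ from \eqref{decreasing-property-0}. First I would establish, for every $k\ge 0$, a one-step decrease of the form $\L^k-\L^{k+1}\ge\sum_{i=1}^m\theta_i\|\bx_i^{k+1}-\bx_i^k\|^2$ with $\theta_i>0$ the constant in \eqref{def-ci}; this is the estimate that already underlies Theorem~\ref{global-obj-convergence-exact}. Its ingredients are: the first-order optimality of each local subproblem \eqref{ceadmm-sub2}, which combined with the dual step \eqref{ceadmm-sub3} gives $w_i\nabla f_i(\bx_i^{k+1})=-\bpi_i^{k+1}$ and hence, via \eqref{Lip-r} under Assumption~\ref{ass-fi}, $\|\bpi_i^{k+1}-\bpi_i^k\|\le w_ir_i\|\bx_i^{k+1}-\bx_i^k\|$; the $\sigma_i$-strong convexity in $\bx_i$ of the subproblem objective, giving descent at least $\tfrac{\sigma_i}{2}\|\bx_i^{k+1}-\bx_i^k\|^2$; the fact that the dual-ascent increase $\tfrac1{\sigma_i}\|\bpi_i^{k+1}-\bpi_i^k\|^2$ is dominated by that descent once $\sigma_i>2w_ir_i$; and the observations that the aggregation \eqref{ceadmm-sub1} only decreases $\L$ when $k\in\K$, while for $k\notin\K$ the $\bx$-slot of $\L$ is frozen because $\by^{k+1}=\by^k$. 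Since each $f_i$ is bounded below, \eqref{FL-opt-lower-bound} together with Theorem~\ref{global-obj-convergence-exact}(i) yield $\L^k\ge f^*$, so summing the decrease over $j=0,\dots,k-1$ gives $\sum_{j=1}^k\sum_{i=1}^m\theta_i\|\bx_i^j-\bx_i^{j-1}\|^2\le\L^0-f^*$.

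Next I would control $\|\nabla F(X^j)\|^2$ and $\|\nabla f(\by^j)\|^2$ at the communication iterates $j=\ell k_0+1$. For such $j$ the point $\by^j=\bx^{\tau+1}$ with $\tau=\ell k_0\in\K$ is exactly the aggregate in \eqref{ceadmm-sub1}; substituting \eqref{ceadmm-sub1} into \eqref{ceadmm-sub3} produces the identity $\sum_i\bpi_i^j=\sum_i\sigma_i(\bx_i^j-\bx_i^\tau)$, while $w_i\nabla f_i(\bx_i^j)=-\bpi_i^j$ from the previous step and the bound $\|\bx_i^j-\by^j\|\le\tfrac{w_ir_i}{\sigma_i}\|\bx_i^j-\bx_i^{j-1}\|$ (again from the subproblem optimality and \eqref{Lip-r}) let me rewrite $\nabla f(\by^j)=\sum_i w_i\nabla f_i(\by^j)$ as $\sum_i[w_i\nabla f_i(\by^j)-w_i\nabla f_i(\bx_i^j)]-\sum_i\bpi_i^j$ and bound each piece by $\|\bx_i^j-\bx_i^{j-1}\|$. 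A few Cauchy--Schwarz steps then give an estimate of the shape $\max\{\|\nabla F(X^j)\|^2,\|\nabla f(\by^j)\|^2\}\le 8m\sum_{i=1}^m\sigma_i^2\|\bx_i^j-\bx_i^{j-1}\|^2\le\rho\sum_{i=1}^m\theta_i\|\bx_i^j-\bx_i^{j-1}\|^2$, the last step being precisely why $\rho=\max_{i\in[m]}8m\sigma_i^2/\theta_i$ is the right constant.

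Finally I would combine. Among the indices in $[k]$ there are $\lfloor k/k_0\rfloor$ communication iterates $j_\ell$; summing the bound of the second paragraph over them and using the telescoped sum of the first gives $\sum_\ell\max\{\|\nabla F(X^{j_\ell})\|^2,\|\nabla f(\by^{j_\ell})\|^2\}\le\rho(\L^0-f^*)$, so the minimum over the $j_\ell$ --- and a fortiori over all $j\in[k]$ --- is at most $\rho(\L^0-f^*)/\lfloor k/k_0\rfloor\le\rho k_0(\L^0-f^*)/k$, which is the claim. Note that the factor $k_0$ enters exactly here: only one iterate per communication block carries a usable residual bound, so restricting the minimum to these costs a factor $k_0$.

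I expect the crux to be the second paragraph. Because $w_i\nabla f_i(\bx_i^j)=-\bpi_i^j$, controlling the stationarity residuals really means controlling the Lagrange multipliers, and a naive bound on $\bpi_i^j$ accumulates contributions from every past iteration; so one must exploit the aggregation identity $\sum_i\bpi_i^j=\sum_i\sigma_i(\bx_i^j-\bx_i^\tau)$ --- which hinges on \eqref{ceadmm-sub1} being an ``average plus dual correction'' rather than a plain average --- to convert this into a single block's worth of primal movement. Organizing the several triangle- and Cauchy--Schwarz-inequality constants so that they collapse into the one factor $\rho$, and handling the first step (where $w_i\nabla f_i(\bx_i^0)=-\bpi_i^0$ need not hold) so that it does not spoil the $O(k_0/k)$ rate, are the remaining bookkeeping details.
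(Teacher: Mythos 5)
Your proposal is essentially sound and rests on the same two pillars as the paper's proof --- the sufficient-decrease inequality \eqref{decreasing-property-0} for $\L^k$ and a bound on the stationarity residuals in terms of successive primal differences $\triangle\bx_i^j$ --- but it combines them along a genuinely different route. The paper (Lemma \ref{grad-L-bounded}) bounds $\max\{\|\nabla F(X^{j})\|^2,\|\nabla f(\by^{j})\|^2\}$ at \emph{every} iterate $j$, using both $\triangle\bx_i^{j}$ and the most recent communication-block increment $\triangle\bx_i^{\tau_{j-1}+1}$; it then averages over all $j\in[k]$, and the factor $k_0$ appears because each term $\L^{\tau_j}-\L^{\tau_j+1}$ is repeated up to $k_0$ times in the resulting sum (inequality \eqref{fact-sum-0K}). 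You instead bound the residuals only at the post-communication iterates $j=\ell k_0+1$, where the aggregation identity $\sum_i\bpi_i^{j}=\sum_i\sigma_i\triangle\bx_i^{j}$ makes the estimate a single-block affair, and you extract the factor $k_0$ from the sparsity of those iterates. Your version is simpler (it avoids the part of Lemma \ref{grad-L-bounded} that handles $k\notin\K$ and the double-counting argument), at the cost of certifying the minimum only over the communication iterates --- which still upper-bounds the minimum over all of $[k]$, so the theorem as stated follows. Your key intermediate bound does check out: at such $j$ one has $\|\nabla F(X^{j})\|^2\le m\sum_i\sigma_i^2\|\triangle\bx_i^{j}\|^2$ and $\|\nabla f(\by^{j})\|^2\le \tfrac{17m}{8}\sum_i\sigma_i^2\|\triangle\bx_i^{j}\|^2$ under $\sigma_i>2w_ir_i$, comfortably within your claimed $8m\sum_i\sigma_i^2\|\triangle\bx_i^{j}\|^2$ and hence within $\rho\sum_i\tfrac{\theta_i}{2}\|\triangle\bx_i^{j}\|^2\le\rho(\L^{j-1}-\L^{j})$.

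Two bookkeeping points need fixing for the stated constant to come out. First, the descent lemma gives $\L^{k}-\L^{k+1}\ge\sum_i\tfrac{\theta_i}{2}\|\triangle\bx_i^{k+1}\|^2$, with the factor $\tfrac12$; you stated it without the $\tfrac12$, and the definition $\rho=\max_i 8m\sigma_i^2/\theta_i$ is calibrated to the $\tfrac{\theta_i}{2}$ version (this is absorbed by the slack noted above, but you must track it). Second, your count of communication iterates in $[k]$ as $\lfloor k/k_0\rfloor$ is off: the post-aggregation indices in $\{1,\dots,k\}$ are $1,k_0+1,2k_0+1,\dots$, numbering $\lfloor (k-1)/k_0\rfloor+1\ge k/k_0$. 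With your count, the final step $\rho(\L^0-f^*)/\lfloor k/k_0\rfloor\le \rho k_0(\L^0-f^*)/k$ fails whenever $k$ is not a multiple of $k_0$ (e.g.\ $k=k_0+1$); with the correct count it goes through. The issue you raise about the first iteration ($\bpi_i^0$ need not equal $-\bg_i^0$, which is what \eqref{Lip-fi} implicitly uses at $k=0$) is real but is equally present in the paper's own Lemma \ref{lemma-decreasing-0}, and affects only one term of the telescoping sum.
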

 We would like to point out that the establishment of such a convergence rate requires nothing but the assumption of gradient Lipschitz continuity, namely, Assumption \ref{ass-fi}. 

\section{Inexact CEADMM}\label{sec:iceadmm}
From Algorithm \ref{algorithm-CEADMM}, each client $i$ needs to calculate two parameters $\bx^{k+1}_i$ and $ \bpi^{k+1}_i$ after receiving global parameter $\by^{k+1}$. The latter parameter can be calculated directly by \eqref{ceadmm-sub3}  while the former is obtained by solving  problem  \eqref{ceadmm-sub2},  which generally does not admit a closed-form solution, thereby leading to expensive computational cost. To accelerate the computation for local clients,  many strategies aim to solve subproblem  \eqref{ceadmm-sub2} approximately.
\subsection{Inexact updates}

A common approach to find an approximate solution to \eqref{ceadmm-sub2} takes advantage of the second-order Taylor-like expansion. More precisely, expand $f_i$ at point $\bz_i^k$ near $\bx_i$  by
\begin{eqnarray} 
\label{f-i-h-i} 
 \begin{array}{lll} 
 h_i(\bx_i;\bz_i^k,H_i):= f_i(\bz_i^k) +  \langle \nabla f_i(\bz_i^k), \bx_i-\bz_i^k\rangle + \frac{1}{2}  \| \bx_i-\bz_i^k\|^2_{H_i}. 
    \end{array}
\end{eqnarray}
Then \eqref{ceadmm-sub2} can be addressed approximately by
\begin{eqnarray} 
\label{framework-ADMM-sub-2-tylor} 
 \begin{array}{lll}
  \bx^{k+1}_i 
    &=&  {\rm argmin}_{\bx_i}~w_i  h_i(\bx_i;\bz_i^k,H_i )  + \langle \bx_i-\by^{k+1}, \bpi_i^{k}\rangle +\frac{\sigma_i}{2}\|\bx_i-\bx^{k+1}\|^2\\[1.15ex] 
    &=&\bz^{k}_i - (w_iH_i  + \sigma_i I)^{-1} \Big[\sigma_i( \bz^{k}_i-\by^{k+1})+w_i \nabla f_i(\bz_i^k) + \bpi_i^{k}\Big].
    \end{array}
\end{eqnarray}
Here, $H_i\succeq0 $ can be chosen to  satisfy  $H_i \approx \nabla^2 f_i $. If $f_i$ is  gradient Lipschitz continuous with a constant $r_{i}>0$, then  $H_i$ can be chosen as $H_i \approx r_iI  $. 
For local point $\bz_i^k$, we have two potential candidates: previous local parameter $\bx_i^k$ or  updated parameter $\bx^{k+1}$ from the central server. 

\textit{Choice 1}: If $\bz_i^k=\bx_i^k$, then \eqref{framework-ADMM-sub-2-tylor} turns to
 \begin{eqnarray*} 
 \begin{array}{lll}
  \bx^{k+1}_i  
    &=&\bx_i^{k}- (w_iH_i  + \sigma_i I)^{-1}  \Big[\sigma_i( \bx_i^{k}-\by^{k+1})+\bg_i^{k} + \bpi_i^{k}\Big].
    \end{array}
\end{eqnarray*}

\textit{Choice 2:}  If $\bz_i^k=\by^{k+1}$, then \eqref{framework-ADMM-sub-2-tylor} becomes
\begin{eqnarray} 
\label{framework-ADMM-sub-2-xk1} 
 \begin{array}{lll}
  \bx^{k+1}_i 
        &=& \bx^{k+1} - (w_iH_i + \sigma_i I)^{-1}\Big[ w_i\nabla f_i(\by^{k+1})+ \bpi_i^{k}\Big].
    \end{array}
\end{eqnarray}

  \subsection{Standard linearised inexact  ADMM}

\begin{algorithm}[!th]
\SetAlgoLined

Initialize $\bx_i^0,\bpi_i^0$, a step size  $\gamma>0$. Set $k \Leftarrow 0$. 

\For{$k=0,1,2,\ldots$}{
{\it Weights upload:}  Each client sends its parameters $\bx^{k}_i$ and $\bpi_i^{k}$ to the central server. 

 {\it Global aggregation:} The central server calculates the average parameter  $\bx^{k+1}$ by
  \begin{eqnarray}\label{framework-ADMM-sub-1-T-R}
 \begin{array}{llll}
\bx^{k+1} ~=~ \sum_{i=1}^{m}  w_i \bx^{k}_i +  {\gamma}\sum_{i=1}^{m}  \bpi_i^k .
\end{array}
\end{eqnarray}

{\it Weights feedback:} The central server broadcasts the parameter $\bx^{k+1}$ to every local client.  

\For{$i=1,2,\ldots,m$}{
{\it Local update:} Each  client update its parameters locally and in parallel by
\begin{eqnarray} 
\label{framework-ADMM-sub-2-T-R}
\hspace{-8mm}&& 
\begin{array}{lll}
\bx^{k+1}_i  
    &=&  \bx^{k+1} - \gamma \nabla f_i(\bx^{k+1}) - \frac{\gamma}{w_i}\bpi_i^{k},~~~~
    \end{array}\\ 
\label{framework-ADMM-sub-3-T-R}  
\hspace{-8mm}&&
\begin{array}{lll}   
\bpi^{k+1}_i &=&    \bpi_i^{k} + \frac{w_i}{\gamma} (\bx_i^{k+1}-\bx^{k+1}).  
\end{array}
\end{eqnarray}}}
\caption{{\tt LIADMM}: Linearised inexact ADMM-based federated learning \label{algorithm-FL-riADMM}}
\end{algorithm}

To compare with the entire framework of the federated learning described in Algorithm \ref{algorithm-FL}, we focus on  the following settings for Algorithm \ref{algorithm-CEADMM}: 
\begin{itemize}

\item $k_0=1$; This means $\by^{k+1}=\bx^{k+1}$. 
\item $\sigma_i=w_i/\gamma, i\in[m]$ for a given step size $\gamma$. Since $\sum_{i=1}^{m} w_i=1$, we have $\sum_{i=1}^{m} \sigma_i=1/\gamma$. This turns  update \eqref{ceadmm-sub1} to   update \eqref{framework-ADMM-sub-1-T-R}.  
\item By exploiting choice 2 (namely, $\bz_i^k=\bx^{k+1}$) and $H_i =0$ (namely, using the linearisation of $f_i$),  subproblem  \eqref{framework-ADMM-sub-2-xk1}  becomes \eqref{framework-ADMM-sub-2-T-R} due to $\sigma_i=w_i/\gamma$, $\by^{k+1}=\bx^{k+1}$, and 
  \begin{eqnarray*}  
 \begin{array}{lll} 
  \bx^{k+1}_i 
= \bx^{k+1} -  \frac{w_i}{\sigma_i }\nabla f_i(\by^{k+1}) -\frac{\bpi_i^{k}}{\sigma_i }  = \bx^{k+1} - \gamma \nabla f_i(\bx^{k+1}) - \frac{\gamma}{w_i}\bpi_i^{k}.
 \end{array}\end{eqnarray*}
\end{itemize}
 
Based on these settings, we derive the framework of standard linearised inexact ADMM  in Algorithm \ref{algorithm-FL-riADMM}.  In comparison with \eqref{FL-global-aggregation}  and  \eqref{FL-local-update}  in Algorithm \ref{algorithm-FL},  both \eqref{framework-ADMM-sub-1-T-R}  and \eqref{framework-ADMM-sub-2-T-R} in Algorithm \ref{algorithm-FL-riADMM} have an additional term  associated with the dual parameters. In this regard, the framework of the conventional federated learning falls into a special case of the linearised inexact ADMM ({\tt LIADMM}). Their similarities and dissimilarities have been shown in Figure \ref{fig:FL-ADMM}.

\subsection{Inexact communication-efficient ADMM}
\begin{algorithm}[!th]
\SetAlgoLined
 Initialize $\bx_i^0,\bpi_i^0, \sigma_i>0, H_i\succeq0, i\in[m]$ and an integer $k_0>0$. Set $k \Leftarrow 0$. 

\For{$k=0,1,2,\ldots$}{

\If{$  k\in\K:=\{0,k_0,2k_0,3k_0,\ldots\}$}{
{\it Weights upload:}  Each  client sends its parameters  $\bx^{k}_i$ and $\bpi_i^{k}$ to the central server. 

 {\it Global aggregation:} The central server calculates the average parameter $\bx^{k+1}$ by \begin{eqnarray}\label{iceadmm-sub1}
 \begin{array}{llll}
\bx^{k+1} =    \sum_{i=1}^{m}  \frac{{\sigma_i}\bx^{k}_i}{\sigma}   +   \sum_{i=1}^{m} \frac{\bpi_i^k}{\sigma}.
\end{array}
\end{eqnarray}
{\it Weights feedback:} The central server broadcasts the parameter $\bx ^{k+1}$ to every   client.  
}
\For{$i= 1,2,\ldots,m$}{
{{\it Local update:}} By letting $$\by^{k+1} :=\bx^{\tau_k+1},\qquad {\rm where}~~ \tau_k=\lfloor k/k_0 \rfloor k_0,$$  each client update its parameters locally and in parallel via solving 
\begin{eqnarray} 
\label{iceadmm-sub2}
&& \begin{array}{llll}
\bx^{k+1}_i  
& = &   \bx^{k}_i - (w_iH_i  + \sigma_i I)^{-1}\Big[\sigma_i(\bx^{k}_i-\by^{k+1})+\bg_i^{k}+ \bpi_i^{k}\Big], 
    \end{array}\\ 
\label{iceadmm-sub3}  
&&\begin{array}{llll}   
\bpi^{k+1}_i &=&    \bpi_i^{k} +\sigma_i(\bx_i^{k+1}-\by ^{k+1}).  
\end{array}
\end{eqnarray} }
}
\caption{{\tt ICEADMM}: Inexact communication-efficient ADMM-based federated learning \label{algorithm-ICEADMM}}
\end{algorithm}

Algorithm \ref{algorithm-FL-riADMM} focuses on $k_0=1$ which is not communication-efficient.  Therefore, following the idea of Algorithm \ref{algorithm-CEADMM}, we set $k_0>1$.  Moreover,   different with Algorithm \ref{algorithm-FL-riADMM} that exploits choice 2 in \eqref{framework-ADMM-sub-2-T-R}, we take advantage of choice 1 (i.e., $\bz_i^k=\bx_i^{k}$) to expand $f_i$ since the approximation function, $h_i(\bx_i^{k+1};\bx_i^k,H_i )$, would be closer to $h_i(\bx_i^{k};\bx_i^k,H_i )=f_i(\bx_i^{k})$ than  $h_i(\bx^{k+1};\bx_i^k,H_i )$ when $k_0>1$. Overall, instead of solving subproblem  \eqref{ceadmm-sub2} to update $\bx_i^{k+1}$, we address the  problem
\begin{eqnarray} 
\label{iceadmm-sub2-0}
 \arraycolsep=1.4pt\def\arraystretch{1.25}
\begin{array}{llll}
\bx^{k+1}_i  
& = &  {\rm argmin}_{\bx_i}~w_i  h_i(\bx_i;\bx_i^k,H_i )  + \langle \bx_i-\by^{k+1}, \bpi_i^{k}\rangle +\frac{\sigma_i}{2}\|\bx_i-\by^{k+1}\|^2\\ 
 &=&  {\rm argmin}_{\bx_i}~  \langle w_i\nabla f_i(\bx_i^k) + \bpi_i^{k}, \bx_i\rangle  + \frac{w_i}{2}  \| \bx_i-\bx_i^k\|^2_{H_i} + \frac{\sigma_i}{2}\|\bx_i-\by^{k+1}\|^2\\ 
&=&\bx^{k}_i - (w_iH_i  + \sigma_i I)^{-1} \Big[\sigma_i(\bx^{k}_i-\by^{k+1})+\bg_i^{k}+ \bpi_i^{k}\Big]. 
    \end{array}  
\end{eqnarray} 
To summarize, the framework of the inexact CEADMM is presented in Algorithm \ref{algorithm-ICEADMM}.

We would like to emphasize the computational complexity of {\tt ICEADMM} is much lower than {\tt CEADMM} since subproblem  \eqref{iceadmm-sub2} can be solved more efficiently than \eqref{ceadmm-sub2}. For example, if $H_i$ is chosen as $r_i I$, then the major computation is calculating  $\bg_i^k$, which is quite cheap in comparison with addressing \eqref{ceadmm-sub2}. Therefore, {\tt ICEADMM} alleviates the computational burdens for local clients dramatically.
\subsection{Global convergence}

To establish the convergence property for Algorithm \ref{algorithm-ICEADMM}, suppose every $f_i,i\in[m]$ is gradient Lipschitz continuous with a constant  $r_i>0$.  Then there exists a $\Theta_i$ such that $r_iI\succeq \Theta_i \succeq 0$ and
\begin{eqnarray} \label{grad-lip-theta}
 \begin{array}{l}
f_i(\bx_i)\leq   f_i(\bz_i )+\langle \nabla  f_i(\bz_i ), \bx_i-\bz_i \rangle + \frac{1}{2}\| \bx_i-\bz_i \|^2_{\Theta_i},
\end{array}
\end{eqnarray} 
for any $\bx_i,\bz_i\in\R^n$. The existence is obvious as we at least can choose $\Theta_i=r_iI$.
\begin{theorem}\label{global-obj-convergence-inexact}  Let   $\{(\by^{k},X^{k},\Pi^{k})\}$ be the sequence generated by Algorithm \ref{algorithm-CEADMM} with $H_i=\Theta_i$ and $\sigma_i>3\sqrt{2}w_ir_i$ for every $i\in[m]$. The following results hold under Assumption \ref{ass-fi}.
 \ \begin{itemize}
 \item[i)] 
  Three  sequences $\{\L^{k}\}$, $\{ F(X^{k})  \}$, and $\{f (\by^{k})\}$ converge to the same value, namely,
   \begin{eqnarray}  \label{L-local-convergence-limit-inexact}
   \begin{array}{lll}
 {\lim}_{k \rightarrow \infty}  \L^{k} = {\lim}_{k \rightarrow \infty} F(X^{k})  ={\lim}_{k \rightarrow\infty} f(\by^{k}).
    \end{array} 
 \end{eqnarray} 
 \item[ii)] $\nabla F(X^{k})$ and $\nabla f(\by^{k})$ eventually vanish, namely,  
    \begin{eqnarray}  \label{L-local-convergence-limit-grad-inexact}
   \begin{array}{lll}
 {\lim}_{k \rightarrow \infty}\nabla F(X^{k})  ={\lim}_{k \rightarrow\infty} \nabla f(\by^{k}) =0.
    \end{array} 
 \end{eqnarray} 
 \end{itemize}
 \end{theorem}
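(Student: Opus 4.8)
The plan is to follow the blueprint of Theorem~\ref{global-obj-convergence-exact}; the only genuinely new ingredient is that the inexact local step of {\tt ICEADMM} introduces the surrogate error $w_i\Theta_i(\bx_i^{k+1}-\bx_i^{k})$, which must be absorbed using the bound $\Theta_i\preceq r_iI$ coming from \eqref{grad-lip-theta}. First I would write the first-order condition of the strongly convex quadratic \eqref{iceadmm-sub2-0} and eliminate the term $\sigma_i(\bx_i^{k+1}-\by^{k+1})$ via the multiplier update \eqref{iceadmm-sub3}, arriving at the identity $\bpi_i^{k+1}=-\bg_i^{k}-w_i\Theta_i(\bx_i^{k+1}-\bx_i^{k})$ for all $i$ and $k$. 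Subtracting the same identity at index $k-1$ and invoking Assumption~\ref{ass-fi} together with $\Theta_i\preceq r_iI$ then gives $\|\bpi_i^{k+1}-\bpi_i^{k}\|\le w_ir_i\bigl(\|\bx_i^{k+1}-\bx_i^{k}\|+2\|\bx_i^{k}-\bx_i^{k-1}\|\bigr)$, so dual increments are controlled by primal increments with a one-step memory.

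Next I would split $\L^{k+1}-\L^{k}$ into the aggregation step, the $X$-step and the $\Pi$-step. The aggregation step is non-increasing because for $k\in\K$ the vector $\bx^{k+1}$ minimises the $\sigma$-strongly convex function $\L(\cdot,X^{k},\Pi^{k})$ (and $\by^{k+1}=\by^{k}$ when $k\notin\K$), which moreover produces a $-\tfrac{\sigma}{2}\|\by^{k+1}-\by^{k}\|^2$ slack at communication rounds. For the $X$-step I would bound $f_i(\bx_i^{k+1})\le h_i(\bx_i^{k+1};\bx_i^{k},\Theta_i)$ via \eqref{grad-lip-theta} and then use that $\bx_i^{k+1}$ minimises the $(w_i\Theta_i+\sigma_iI)$-strongly convex objective in \eqref{iceadmm-sub2-0}, obtaining a descent of at least $\tfrac{\sigma_i}{2}\|\bx_i^{k+1}-\bx_i^{k}\|^2$. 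The $\Pi$-step contributes exactly $\sum_i\tfrac1{\sigma_i}\|\bpi_i^{k+1}-\bpi_i^{k}\|^2$, which I dominate using the previous bound. Absorbing the residual $\|\bx_i^{k}-\bx_i^{k-1}\|^2$ terms into a Lyapunov function $\Phi^{k}:=\L^{k}+\sum_i\beta_i\|\bx_i^{k}-\bx_i^{k-1}\|^2$ for a suitable range of weights $\beta_i>0$ yields $\Phi^{k+1}\le\Phi^{k}-\sum_i\theta_i\|\bx_i^{k+1}-\bx_i^{k}\|^2$, and the hypothesis $\sigma_i>3\sqrt2\,w_ir_i$ is precisely what makes every $\theta_i>0$ while keeping $\beta_i$ admissible.

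To finish, using $\bpi_i^{k}=-\bg_i^{k-1}-w_i\Theta_i(\bx_i^{k}-\bx_i^{k-1})$, the descent lemma for $f_i$, and completion of squares, I would bound each $L(\by^{k},\bx_i^{k},\bpi_i^{k})$ below by $w_if_i(\by^{k})$ plus a nonnegative quadratic in $\bx_i^{k}-\by^{k}$ minus an $O(\|\bx_i^{k}-\bx_i^{k-1}\|^2)$ term; summing and adding the Lyapunov term gives $\Phi^{k}\ge f^{*}$ once $\beta_i\ge w_ir_i$, which $\sigma_i>3\sqrt2\,w_ir_i$ again guarantees (here the boundedness below of $f_i$, hence $f$, enters). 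Thus $\{\Phi^{k}\}$ is non-increasing and bounded below, so it converges, whence $\sum_{k}\sum_i\theta_i\|\bx_i^{k+1}-\bx_i^{k}\|^2<\infty$; consequently $\bx_i^{k+1}-\bx_i^{k}\to0$, then $\bpi_i^{k+1}-\bpi_i^{k}\to0$, then $\bx_i^{k}-\by^{k}\to0$ by \eqref{iceadmm-sub3}, and $\by^{k+1}-\by^{k}\to0$ from the slack at communication rounds. Since $\sum_i\beta_i\|\bx_i^{k}-\bx_i^{k-1}\|^2\to0$, $\{\L^{k}\}$ converges, and the gaps $\L^{k}-F(X^{k})$ and $\L^{k}-f(\by^{k})$ reduce to terms that are linear and quadratic in the vanishing quantities (after replacing $\bpi_i^{k}$ by $-w_i\nabla f_i(\by^{k})+o(1)$, which follows from the optimality identity together with $\bx_i^{k}-\by^{k}\to0$ and $\by^{k}-\bx_i^{k-1}\to0$), so all three sequences share one limit, proving (i). For (ii) the optimality identity gives $\bg_i^{k}=-\bpi_i^{k+1}-w_i\Theta_i(\bx_i^{k+1}-\bx_i^{k})$, and combining the vanishing increments with $\sum_i\bpi_i^{k}\to0$ (read off from the aggregation formula \eqref{iceadmm-sub1} and $\bx_i^{k}-\by^{k}\to0$, $\by^{k+1}-\by^{k}\to0$) drives $\nabla F(X^{k})\to0$ and $\nabla f(\by^{k})=\sum_iw_i\nabla f_i(\by^{k})\to0$.

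The main obstacle is the joint bookkeeping in the last two steps: one has to pick the window weights $\beta_i$ so that the sufficient-decrease inequality and the lower bound $\Phi^{k}\ge f^{*}$ hold simultaneously, and one must carefully track the fact that $\by^{k+1}$ is frozen over each block of $k_0$ inner iterations and refreshed only at $k\in\K$, so that communication rounds contribute a genuine decrease rather than an uncontrolled jump. This tension is exactly what degrades the admissible threshold from $\sigma_i>2w_ir_i$ in the exact case (Theorem~\ref{global-obj-convergence-exact}) to $\sigma_i>3\sqrt2\,w_ir_i$ here.
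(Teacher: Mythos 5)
Your proposal is correct and follows essentially the same route as the paper: your Lyapunov function $\Phi^{k}=\L^{k}+\sum_i\beta_i\|\bx_i^{k}-\bx_i^{k-1}\|^2$ is exactly the paper's $\varphi^{k}$ (with $\beta_i=6w_i^2r_i^2/\sigma_i$), your three-way split of $\L^{k+1}-\L^{k}$ matches the paper's $e_1^k+e_2^k+e_3^k$ decomposition, the dual-increment bound via $\bpi_i^{k+1}=-\bg_i^{k}-w_i\Theta_i(\bx_i^{k+1}-\bx_i^{k})$ is the paper's \eqref{Lip-fi-1}, and the endgame (summability, vanishing increments, then identifying the three limits and the vanishing gradients) is identical. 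The only difference is cosmetic: you make explicit how the window weights $\beta_i$ must simultaneously serve the descent inequality and the lower bound $\Phi^{k}\ge f^{*}$, a point the paper dispatches with "same reasoning as \eqref{lower-bound-L}."
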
  
 Theorem \ref{global-obj-convergence-inexact}  states that the objective function values of the sequence $\{(\by^{k},X^{k},\Pi^{k})\}$ converge. In the following theorem, we would like to see the convergence
performance of sequence $\{(\by^{k},X^{k},\Pi^{k})\}$ itself.  The proofs of the two theorems in the sequel are the same as those of Theorem \ref{global-convergence-exact} and Corollary \ref{L-global-convergence}, and hence omitted here
\begin{theorem}\label{global-convergence-inexact}   Let   $\{(\by^{k},X^{k},\Pi^{k})\}$ be the sequence generated by Algorithm \ref{algorithm-CEADMM} with $H_i=\Theta_i$ and $\sigma_i>3\sqrt{2}w_ir_i$ for every $i\in[m]$.  The following results hold under Assumption \ref{ass-fi} and the boundedness of $\S(\varphi^1)$.
 \ \begin{itemize}
 \item[i)]  The sequence, $\{(\by^{k},X^{k},\Pi^{k})\}$, is bounded, and any its accumulating point, $(\by^{\infty},X^{\infty},\Pi^{\infty})$, is a stationary point of  problem  (\ref{FL-opt-ver1}), where $\by^{\infty}$ is a stationary point of  problem  (\ref{FL-opt}). 
 \item[ii)] If further assuming that $\by^{\infty}$ is isolated, then the whole sequence, $\{(\by^{k},X^{k},\Pi^{k})\}$, converges to $(\by^{\infty},X^{\infty},\Pi^{\infty})$. 
 \end{itemize}
 \end{theorem}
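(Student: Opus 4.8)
The plan is to derive Theorem~\ref{global-convergence-inexact} from Theorem~\ref{global-obj-convergence-inexact} along exactly the lines by which Theorem~\ref{global-convergence-exact} is obtained from Theorem~\ref{global-obj-convergence-exact}. Under Assumption~\ref{ass-fi} together with $H_i=\Theta_i$ and $\sigma_i>3\sqrt2\,w_ir_i$, the facts already at our disposal are that $\{\L^k\}$, $\{F(X^k)\}$, $\{f(\by^k)\}$ converge to one common value and that $\nabla F(X^k)\to0$, $\nabla f(\by^k)\to0$. Two further by-products of the proof of Theorem~\ref{global-obj-convergence-inexact} are what I would rely on: a nonincreasing Lyapunov sequence, bounded below by $f(\by^k)$ and starting at $\varphi^1$, which gives $f(\by^k)\le\varphi^1$ for all $k$; and the summability $\sum_k\sum_{i\in[m]}\|\bx_i^{k+1}-\bx_i^k\|^2<\infty$, whence $\|\bx_i^{k+1}-\bx_i^k\|\to0$ for each $i\in[m]$. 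Everything below uses only these.

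\textbf{Boundedness.} From $f(\by^k)\le\varphi^1$ we get $\by^k\in\S(\varphi^1)$, which is bounded by hypothesis, so $\{\by^k\}$ is bounded. The stationarity condition of the inexact subproblem \eqref{iceadmm-sub2-0} is $w_i\nabla f_i(\bx_i^k)+w_iH_i(\bx_i^{k+1}-\bx_i^k)+\sigma_i(\bx_i^{k+1}-\by^{k+1})+\bpi_i^k=0$; substituting $\sigma_i(\bx_i^{k+1}-\by^{k+1})=\bpi_i^{k+1}-\bpi_i^k$ from \eqref{iceadmm-sub3} yields $\bg_i^{k+1}+\bpi_i^{k+1}=w_i(\nabla f_i(\bx_i^{k+1})-\nabla f_i(\bx_i^k))-w_iH_i(\bx_i^{k+1}-\bx_i^k)$, whose right-hand side tends to $0$ because $\|\bx_i^{k+1}-\bx_i^k\|\to0$, $\|H_i\|\le r_i$, and $f_i$ is gradient Lipschitz; hence $\bg_i^k+\bpi_i^k\to0$. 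Re-inserting this, $\bpi_i^{k+1}-\bpi_i^k=-(\bg_i^{k+1}-\bg_i^k)+o(1)\to0$, and then \eqref{iceadmm-sub3} gives $\bx_i^{k+1}-\by^{k+1}\to0$. Consequently $\{X^k\}$ is bounded (it trails $\{\by^k\}$), and since $\bpi_i^k=-\bg_i^k+o(1)=-w_i\nabla f_i(\bx_i^k)+o(1)$ with $\nabla f_i$ bounded on the bounded set containing $\{\bx_i^k\}$, $\{\Pi^k\}$ is bounded as well.

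\textbf{Accumulation points.} Let $(\by^{k_j},X^{k_j},\Pi^{k_j})\to(\by^\infty,X^\infty,\Pi^\infty)$. Passing to the limit in $\bg_i^k+\bpi_i^k\to0$ and $\bx_i^k-\by^k\to0$ gives $w_i\nabla f_i(\bx_i^\infty)+\bpi_i^\infty=0$ and $\bx_i^\infty=\by^\infty$. For the remaining condition in \eqref{opt-con-FL-opt-ver1}, the optimality condition of the aggregation step \eqref{iceadmm-sub1} reads, for $k\in\K$, $\sum_{i\in[m]}\bpi_i^k=\sum_{i\in[m]}\sigma_i(\by^{k+1}-\bx_i^k)$; here $\by^{k+1}-\bx_i^k=(\by^{k+1}-\bx_i^{k+1})+(\bx_i^{k+1}-\bx_i^k)\to0$, and since $\|\bpi_i^{k+1}-\bpi_i^k\|\to0$ the limit of $\sum_i\bpi_i^k$ along $\K$ agrees with its limit along all indices, so $\sum_{i\in[m]}\bpi_i^\infty=0$. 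Thus $(\by^\infty,X^\infty,\Pi^\infty)$ satisfies \eqref{opt-con-FL-opt-ver1}, i.e.\ it is a stationary point of \eqref{FL-opt-ver1}, and \eqref{grad-x-*=0} shows $\by^\infty$ is a stationary point of \eqref{FL-opt}. This proves part (i).

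\textbf{Part (ii) and the main obstacle.} For part (ii) I would invoke the standard fact that a bounded sequence with vanishing successive differences converges once it has an isolated accumulation point. We already have $\|X^{k+1}-X^k\|\to0$ and $\|\Pi^{k+1}-\Pi^k\|\to0$; as for $\by$, it is constant for $k\notin\K$, while for $k\in\K$ (with $k_0>1$) $\by^{k+1}-\by^k=\bx^{k+1}-\bx^{k-k_0+1}$, which by \eqref{iceadmm-sub1} equals $\sigma^{-1}\sum_{i\in[m]}[\sigma_i(\bx_i^k-\bx_i^{k-k_0})+(\bpi_i^k-\bpi_i^{k-k_0})]$, a telescoping sum of $k_0$ increments each tending to $0$; hence $\|\by^{k+1}-\by^k\|\to0$. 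Because $X^\infty$ and $\Pi^\infty$ are pinned down by $\by^\infty$ via $\bx_i^\infty=\by^\infty$ and $\bpi_i^\infty=-w_i\nabla f_i(\by^\infty)$, isolatedness of $\by^\infty$ makes the triple's accumulation point isolated, and the lemma yields convergence of the whole sequence. I expect the only genuinely non-routine point to be the bookkeeping caused by the lagged aggregate $\by^k=\bx^{\tau_k+1}$: one must check that per-iteration increments of $\by$, $X$, $\Pi$ all vanish even though aggregation fires only every $k_0$ steps, and that limits along the communication set $\K$ coincide with limits along all $k$; the extra linearisation term $w_iH_i(\bx_i^{k+1}-\bx_i^k)$ is harmless since it is controlled by $\|\bx_i^{k+1}-\bx_i^k\|\to0$.
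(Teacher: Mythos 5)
Your proposal is correct and follows essentially the route the paper intends: the paper explicitly omits this proof, stating it is the same as that of Theorem \ref{global-convergence-exact}, and your argument is exactly that proof transplanted to the inexact setting, using the monotone Lyapunov sequence $\{\varphi^k\}$ with $\varphi^1\geq\varphi^k\geq f(\by^k)$ to place $\by^k$ in $\S(\varphi^1)$, the vanishing increments from Lemma \ref{L-bounded-decreasing-1}, the inexact optimality conditions \eqref{opt-con-xk1-11}--\eqref{opt-con-xik1-1} for stationarity of accumulation points, and the isolated-accumulation-point lemma of Mor\'e--Sorensen for part (ii). Your handling of the extra linearisation term $w_iH_i\triangle\bx_i^{k+1}$ and of the $k_0$-lagged aggregation is exactly the bookkeeping the paper's referenced proof requires.
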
  
  Similar to Corollary \ref{L-global-convergence}, under the convexity,  Algorithm \ref{algorithm-ICEADMM} is capable of achieving the optimal parameter.
\begin{corollary}\label{L-global-convergence-inexact}  If Assumption \ref{ass-fi} holds, $\S(\varphi^1)$ is bounded, and $f$ is convex, then all the results in Corollary \ref{L-global-convergence}  hold for the sequence, $\{(\by^{k},X^{k},\Pi^{k})\}$, generated by Algorithm \ref{algorithm-ICEADMM} with $H_i=\Theta_i$ and $\sigma_i>3\sqrt{2}w_ir_i$ for every $i\in[m]$. 
\end{corollary}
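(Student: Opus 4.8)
The plan is to transcribe, almost verbatim, the argument behind Corollary \ref{L-global-convergence}, substituting the inexact-ADMM convergence results obtained for Algorithm \ref{algorithm-ICEADMM} for their exact-ADMM counterparts (this is exactly the ``same proof'' indicated in the text preceding Theorem \ref{global-convergence-inexact}). Concretely, I would first invoke Theorem \ref{global-obj-convergence-inexact} to record that the three sequences $\{\L^{k}\}$, $\{F(X^{k})\}$ and $\{f(\by^{k})\}$ converge to a common limit, call it $\ell$, and invoke Theorem \ref{global-convergence-inexact}(i) to get that $\{(\by^{k},X^{k},\Pi^{k})\}$ is bounded and that every accumulation point $(\by^{\infty},X^{\infty},\Pi^{\infty})$ is a stationary point of \eqref{FL-opt-ver1}, with $\by^{\infty}$ stationary for \eqref{FL-opt}; here $X^{\infty}=(\by^{\infty},\ldots,\by^{\infty})$ since the second block of \eqref{opt-con-FL-opt-ver1} forces $\bx_i^{\infty}=\by^{\infty}$.

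The only place convexity enters is in upgrading ``stationary'' to ``optimal''. Since $\by^{\infty}$ satisfies $\nabla f(\by^{\infty})=0$ and $f$ is convex, $\by^{\infty}$ is a global minimizer of \eqref{FL-opt}, i.e.\ $f(\by^{\infty})=f^{*}$. Because $F(X)=f(\bx)$ on the consensus set $\{\bx_i=\bx\}$, the optimal value of \eqref{FL-opt-ver1} is exactly $f^{*}$ and $(\by^{\infty},X^{\infty})$ attains it, which gives item ii). For item i), choose a subsequence $\by^{k_j}\to\by^{\infty}$ (possible by boundedness); continuity of $f$ gives $f(\by^{k_j})\to f(\by^{\infty})=f^{*}$, so the common limit $\ell$ equals $f^{*}$, yielding $\lim_{k}\L^{k}=\lim_{k}F(X^{k})=\lim_{k}f(\by^{k})=f^{*}$. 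For item iii), strong convexity of $f$ makes the minimizer $\bx^{*}$ of \eqref{FL-opt} unique, hence $\by^{\infty}$ is isolated, and Theorem \ref{global-convergence-inexact}(ii) then yields convergence of the whole sequence $\{(\by^{k},X^{k},\Pi^{k})\}$ to $(\bx^{*},X^{*},\Pi^{*})$ with $X^{*}=(\bx^{*},\ldots,\bx^{*})$, which the reasoning of items i)--ii) identifies as the unique optimal solution.

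I expect essentially no technical obstacle, since all the heavy lifting---the Lyapunov/descent estimates for the inexact updates with the threshold $\sigma_i>3\sqrt{2}\,w_ir_i$ and $H_i=\Theta_i$, boundedness of the iterates under boundedness of $\S(\varphi^1)$, and stationarity of limit points---is already delivered by Theorems \ref{global-obj-convergence-inexact} and \ref{global-convergence-inexact}. The one point deserving a line of care is that convexity is assumed for $f$ only, not for each $f_i$: the ``stationary $\Rightarrow$ optimal'' step must therefore be routed through $\nabla f(\by^{\infty})=0$ (obtained by summing the first block of \eqref{opt-con-FL-opt-ver1} and using $\sum_{i}\bpi_i^{\infty}=0$, cf.\ \eqref{grad-x-*=0}) and then transferred back to \eqref{FL-opt-ver1} via the identity $F(X)=f(\bx)$ on the consensus set, rather than through a per-client optimality characterization. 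Everything else is a mechanical copy of the proof of Corollary \ref{L-global-convergence}.
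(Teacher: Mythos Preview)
Your proposal is correct and follows essentially the same approach as the paper, which simply declares the proof to be the same as that of Corollary \ref{L-global-convergence} with the exact-ADMM results replaced by their inexact counterparts (Theorems \ref{global-obj-convergence-inexact} and \ref{global-convergence-inexact}). The only cosmetic differences are that the paper obtains $f(\by^{k})\to f^{*}$ in item i) via the sandwich $f(\by^{k})\geq f^{*}\geq f(\by^{k})+\langle\nabla f(\by^{k}),\bx^{*}-\by^{k}\rangle$ together with $\nabla f(\by^{k})\to 0$ and boundedness, and obtains $\by^{k}\to\bx^{*}$ in item iii) directly from the strong-convexity inequality $f(\by^{k})-f^{*}\geq\frac{\nu}{2}\|\by^{k}-\bx^{*}\|^{2}$, whereas you route i) through a convergent subsequence and iii) through ``unique minimizer $\Rightarrow$ isolated $\Rightarrow$ Theorem \ref{global-convergence-inexact}(ii)''; both variants are equally valid.
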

 \subsection{Complexity analysis}
 Finally, we would like to see how fast the proposed  Algorithm \ref{algorithm-ICEADMM} converges. Similar to Theorem \ref{complexity-thorem-gradient}, the following result states that the minimal value among $\| \nabla  F(X^{j+k_0})\|^2$ and $\|\nabla f (\by^{j+k_0})\|^2, j=1,2,\ldots,k$ vanishes with a  rate $O(k_0/k)$.  
 \begin{theorem} \label{complexity-thorem-gradient-inexact} Let  $\{(\by^{k},X^{k},\Pi^{k})\}$ be the sequence generated by Algorithm \ref{algorithm-CEADMM} with $H_i=\Theta_i$ and $\sigma_i>3\sqrt{2}w_ir_i$ for every $i\in[m]$. If Assumption \ref{ass-fi} holds, then  it follows
       \begin{eqnarray*}
  \begin{array}{lllll}
   \underset{j=1,2,\ldots,k}{\min}&\max&\{\| \nabla F (X^{j+k_0})\|^2,\| \nabla f (\by^{j+k_0})\|^2\} \leq   \frac{\varrho k_0}{k} (\varphi^1 -f^*),
   \end{array}
  \end{eqnarray*} 
  where $\varrho:=\max_{i\in[m]}{12m\sigma_i^2}/{\vartheta_i}$,  $\varphi^1$ and $\vartheta_i$ are given by (\ref{def-ci-1}).  
 \end{theorem}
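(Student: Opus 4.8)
The plan is to follow the blueprint of Theorem \ref{complexity-thorem-gradient}, replacing the exact local minimization \eqref{ceadmm-sub2} by its quadratic surrogate \eqref{iceadmm-sub2}; inexactness will only enlarge the threshold on $\sigma_i$ and inflate the constants. Three steps are needed: a one-step decrease for a merit sequence $\{\varphi^k\}$ obtained by augmenting $\L^k$ with the lagged primal increments, a telescoping of that decrease over a horizon of length $k$, and a conversion of the resulting smallness of the increments into the claimed gradient bound.

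For the decrease, write $\Phi_i^k(\bx_i):=w_ih_i(\bx_i;\bx_i^k,\Theta_i)+\langle\bx_i-\by^{k+1},\bpi_i^k\rangle+\tfrac{\sigma_i}{2}\|\bx_i-\by^{k+1}\|^2$ for the surrogate objective in \eqref{iceadmm-sub2-0}, with $h_i$ as in \eqref{f-i-h-i}. Since $\Phi_i^k$ is $\sigma_i$-strongly convex with minimizer $\bx_i^{k+1}$, one has $\Phi_i^k(\bx_i^{k+1})\le\Phi_i^k(\bx_i^k)-\tfrac{\sigma_i}{2}\|\bx_i^{k+1}-\bx_i^k\|^2$; and since $H_i=\Theta_i$, inequality \eqref{grad-lip-theta} makes $w_ih_i(\,\cdot\,;\bx_i^k,\Theta_i)$ majorize $w_if_i$, so $\Phi_i^k(\bx_i^{k+1})\ge L(\by^{k+1},\bx_i^{k+1},\bpi_i^k)$ while $\Phi_i^k(\bx_i^k)=L(\by^{k+1},\bx_i^k,\bpi_i^k)$. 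Combining this with the dual-ascent identity $L(\by^{k+1},\bx_i^{k+1},\bpi_i^{k+1})-L(\by^{k+1},\bx_i^{k+1},\bpi_i^k)=\tfrac{1}{\sigma_i}\|\bpi_i^{k+1}-\bpi_i^k\|^2$ coming from \eqref{iceadmm-sub3}, and with the observation that for $k\in\K$ the aggregate \eqref{iceadmm-sub1} exactly minimizes the $\sigma$-strongly convex map $\bx\mapsto\L(\bx,X^k,\Pi^k)$ (contributing $-\tfrac{\sigma}{2}\|\by^{k+1}-\by^k\|^2$, and contributing nothing when $k\notin\K$ since then $\by^{k+1}=\by^k$), summation over $i$ gives $\L^{k+1}\le\L^k-\tfrac{\sigma}{2}\|\by^{k+1}-\by^k\|^2+\sum_i\big(\tfrac{1}{\sigma_i}\|\bpi_i^{k+1}-\bpi_i^k\|^2-\tfrac{\sigma_i}{2}\|\bx_i^{k+1}-\bx_i^k\|^2\big)$. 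The optimality condition of \eqref{iceadmm-sub2-0} together with \eqref{iceadmm-sub3} yields $\bpi_i^{k+1}=-w_i\nabla f_i(\bx_i^k)-w_i\Theta_i(\bx_i^{k+1}-\bx_i^k)$, hence $\|\bpi_i^{k+1}-\bpi_i^k\|\le w_ir_i\|\bx_i^{k+1}-\bx_i^k\|+2w_ir_i\|\bx_i^k-\bx_i^{k-1}\|$ using $\Theta_i\preceq r_iI$ and $r_i$-Lipschitzness. Absorbing the lagged term into $\varphi^k:=\L^k+\sum_i a_i\|\bx_i^k-\bx_i^{k-1}\|^2$, with $a_i$ tuned by Young's inequality, produces $\varphi^{k+1}\le\varphi^k-\tfrac{\sigma}{2}\|\by^{k+1}-\by^k\|^2-\sum_i\vartheta_i\|\bx_i^{k+1}-\bx_i^k\|^2$, where $\vartheta_i>0$ precisely under $\sigma_i>3\sqrt2\,w_ir_i$; this also pins down the expressions for $\varphi^1$ and $\vartheta_i$. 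A companion estimate applying \eqref{grad-lip-theta} once more to each $L(\by^k,\bx_i^k,\bpi_i^k)$ shows $\varphi^k\ge f(\by^k)\ge f^*$, so $\{\varphi^k\}$ is bounded below by \eqref{FL-opt-lower-bound}.

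Telescoping the decrease from $k=1$ over any horizon of length $k$ then gives $\sum_s\big(\sum_i\vartheta_i\|\bx_i^{s+1}-\bx_i^s\|^2+\tfrac{\sigma}{2}\|\by^{s+1}-\by^s\|^2\big)\le\varphi^1-f^*$. The remaining task — exactly as in the proof of Theorem \ref{complexity-thorem-gradient} — is to bound $\|\nabla F(X^{j+k_0})\|^2$ and $\|\nabla f(\by^{j+k_0})\|^2$ by increments over the block containing index $j+k_0$: from $\bpi_i^{k+1}=-w_i\nabla f_i(\bx_i^k)-w_i\Theta_i(\bx_i^{k+1}-\bx_i^k)$ the residual between $w_i\nabla f_i(\bx_i^{k})$ and $-\bpi_i^{k}$ costs one increment, while $\bpi_i^{k}$ itself is reconstructed by unrolling \eqref{iceadmm-sub3} back to the most recent communication index, where the aggregation identity \eqref{iceadmm-sub1} writes $\sum_i\bpi_i$ as a telescoped sum of primal increments; this look-back of at most $k_0$ steps is what forces the shift by $k_0$ in the statement and, after Cauchy--Schwarz over the $m$ clients and over the $\le k_0$ steps, the factors $k_0$ and $m$ in $\varrho$, whereas the $\sigma_i^2$ in $\varrho$ arises from converting dual increments into the primal residual $\sigma_i(\bx_i^{k+1}-\by^{k+1})$ via \eqref{iceadmm-sub3}. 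Since each increment then enters at most $k_0$ of these windows, summing over $j=1,\dots,k$ and dividing by $k$ yields $\min_{j}\max\{\|\nabla F(X^{j+k_0})\|^2,\|\nabla f(\by^{j+k_0})\|^2\}\le\tfrac{1}{k}\sum_j(\cdots)\le\tfrac{\varrho k_0}{k}(\varphi^1-f^*)$.

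I expect the last step to be the main obstacle. Unlike the exact scheme, where $\bpi_i^{k}=-w_i\nabla f_i(\bx_i^{k})$ identically, the surrogate update leaves a genuine residual that must be carried along, and turning the square-summability of the increments into a bound on the full gradients — in particular on $\nabla F(X^{j+k_0})=(\bg_1^{j+k_0},\dots,\bg_m^{j+k_0})$, hence effectively on $\Pi^{j+k_0}$ — requires exploiting the block structure and the aggregation identity rather than per-step optimality alone. The other delicate point is the constant bookkeeping in the decrease step that makes $\vartheta_i$ positive precisely at $\sigma_i>3\sqrt2\,w_ir_i$.
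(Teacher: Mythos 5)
Your overall architecture matches the paper's: the same merit function $\varphi^k=\L^k+\sum_i a_i\|\bx_i^k-\bx_i^{k-1}\|^2$ with $a_i=6w_i^2r_i^2/\sigma_i$, the same three-way split of $\L^{k+1}-\L^k$ into aggregation, local-update and dual-update contributions, the same bound $\|\triangle\bpi_i^{k+1}\|\le w_ir_i\|\triangle\bx_i^{k+1}\|+2w_ir_i\|\triangle\bx_i^{k}\|$ from the inexact optimality condition, and the same threshold $\sigma_i>3\sqrt2\,w_ir_i$ making $\vartheta_i=\sigma_i-18w_i^2r_i^2/\sigma_i>0$. Your derivation of the local-update decrease via strong convexity and majorization of the surrogate $\Phi_i^k$ is a slightly cleaner packaging of the paper's direct computation with the optimality conditions, and reaches the same bound $e_2^k\le-\sum_i\tfrac{\sigma_i}{2}\|\triangle\bx_i^{k+1}\|^2$.

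The gap is in the last step, and it is quantitative rather than cosmetic. You propose to control $\nabla F(X^{j+k_0})$ by unrolling \eqref{iceadmm-sub3} over the (up to $k_0$) iterations back to the most recent communication round and then applying Cauchy--Schwarz ``over the $\le k_0$ steps.'' That Cauchy--Schwarz step injects a factor $k_0$ into the per-iteration gradient bound; combined with your (correct) observation that each increment appears in at most $k_0$ of the windows when you sum over $j$, you end up with $O(k_0^2/k)$, not the claimed $O(k_0/k)$ — and indeed $\varrho=\max_i 12m\sigma_i^2/\vartheta_i$ contains no $k_0$ to absorb the first factor, contrary to what you assert. The paper sidesteps the unrolling entirely: since $\by^{s+1}=\by^{\tau_k+1}$ for every $s$ in the block, one has $\sigma_i(\bx_i^{k+1}-\bx_i^{\tau_k+1})=\sigma_i(\bx_i^{k+1}-\by^{k+1})-\sigma_i(\bx_i^{\tau_k+1}-\by^{\tau_k+1})=\triangle\bpi_i^{k+1}-\triangle\bpi_i^{\tau_k+1}$, and each dual increment is controlled by primal increments at two adjacent indices. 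Hence $\max\{\|\nabla F(X^{k+1})\|^2,\|\nabla f(\by^{k+1})\|^2\}$ is bounded by increments at only the four indices $k$, $k+1$, $\tau_k$, $\tau_k+1$, with a constant independent of $k_0$; the single factor $k_0$ in the theorem then comes solely from the fact that the block-boundary terms $\|\triangle\bx_i^{\tau_j}\|^2+\|\triangle\bx_i^{\tau_j+1}\|^2$ recur for the $k_0$ values of $j$ sharing a block. Without this identity (or the equivalent telescoping of $\sum_s\sigma_i(\bx_i^{s+1}-\by^{s+1})=\bpi_i^{k}-\bpi_i^{\tau_{k-1}+1}$ into a difference of gradients, again costing only boundary increments), your route proves a strictly weaker rate than the one stated.
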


\section{Numerical Experiments}\label{sec:num}
This section conducts some numerical experiments to demonstrate the performance of the proposed methods {\tt CEADMM} in Algorithm \ref{algorithm-CEADMM} and {\tt ICEADMM} in Algorithm \ref{algorithm-ICEADMM}. MTTLAB code for both algorithms  is available at \url{https://github.com/ShenglongZhou/ICEADMM}.  All numerical experiments are implemented through MATLAB (R2019a) on a laptop with 32GB memory and Inter(R)
Core(TM) i9-9880H 2.3Ghz CPU.  We point out that when $k_0=1$, {\tt CEADMM} and {\tt ICEADMM} are reduced to the standard {\tt ADMM}  and inexact ADMM ({\tt IADMM}). Therefore,  {\tt ADMM} and  {\tt IADMM} can be used as  baselines.

 \subsection{Testing example}
 We take the linear regression and the logistic regression as examples to demonstrate the performance of the two proposed algorithms. Both objective functions are  gradient Lipschitz continuous.
\begin{example}[Linear regression]\label{ex-linear} For this problem, local clients have their objective functions as (\ref{least-squares}).   
We randomly divide $m$ clients into three groups with each group having $m/3$ clients.  Then entries of  $A_i:=[\ba^i_1,\ba^i_2,\ldots,\ba^i_{d_i}]^\top$ and $\bb_i:=[b^i_1,b^i_2,\ldots,b^i_{d_i}]^\top$ from three groups are generated from the standard normal distribution, the Student's $t$ distribution with degree $5$, and the uniform distribution in $[-5,5]$, respectively. The data size of each client, $d_i$, is randomly chosen from  $[50,150]$. Therefore, for each instance, we have dimensions $(m,n,d_1,\ldots,d_m)$ to be decided. For simplicity, we fix $n=100$  but choose $m\in\{30,60,90,120,150\}$ and $d_i\in[50,150]$. It is easy to see that $f_i$ in (\ref{least-squares}) is  gradient Lipschitz continuous with a constant $r_i=\lambda_{\max}(A_i^\top A_i)$, the maximum eigenvalue of $A_i^\top A_i$. 
\end{example}
\begin{example}[Logistic regression]\label{ex-logist} For this problem,  local clients have their objective functions as (\ref{logist-loss}), 
where $\mu=0.01$ in our  numerical experiments.  We use two real datasets described in Table \ref{tab:datasets}  to generate $\ba^i_j $ and $b^i_j$. In particular,  we split $d$ samples into $m$ groups corresponding to $m$ clients. For the first $(m-1)$ clients, we randomly pick $d_i=\lfloor {d}/{m}\rfloor$ samples from $d$ samples, and assign the remaining $d-(d_1+\cdots+d_{m-1})$ samples to the $m$th client. In  the sequel, we choose $m\in\{100,150,200,250,300\}$. It has shown in \cite[Lemma 4]{wang2019greedy} that $f_i$ defined by (\ref{logist-loss}) is the gradient Lipschitz continuous with a constant $r'_i=\lambda_{\max}(A_i^\top A_i)/4+\mu$, where $A_i=[\ba^i_1,\ba^i_2,\ldots,\ba^i_{d_i}]^\top$.

\begin{table}[H]
	\renewcommand{\arraystretch}{1}\addtolength{\tabcolsep}{0pt}
	\caption{Descriptions of  two real datasets.}\vspace{-5mm}
	\label{tab:datasets}
	\begin{center}
		\begin{tabular}{lllrrrr }
			\hline
Data&Datasets&	Source	&	$n$	&	$d$\\\hline
\texttt{qot}&	Qsar oral toxicity	&	uci	&	1024 	&	8992 	\\
 \texttt{sct}&	Santander customer transaction	&	kaggle	&	200 	&	200000 	\\
\hline
 		\end{tabular}
	\end{center}
\end{table}
\end{example}
\subsection{Implementations}
For the stopping criteria of the two algorithms: {\tt CEADMM} in Algorithm \ref{algorithm-CEADMM} and {\tt ICEADMM} in Algorithm \ref{algorithm-ICEADMM},  we terminate them if  the maximum number of iterations  exceeds $10^4$ or their generated point, $(\by^{k},X^{k},\Pi^{k})$, is almost a stationary point to  problem  \eqref{FL-opt-ver1}. To measure the closeness of a point to a stationary point, we check condition \eqref{opt-con-FL-opt-ver1} by
\begin{eqnarray*} 
\begin{array}{r}
\max\left\{ \sum_{i=1}^{m} \|\bg_i^{k}+\bpi_i^k\|^2, 
 \sum_{i=1}^{m} \|\bx_i^k-\by^k\|^2, 
 \| \sum_{i=1}^{m} \bpi_i^k\|^2\right\} 
 \leq \sqrt{nd}10^{-7}.
\end{array} 
\end{eqnarray*}
For the settings of parameters, as aforementioned, $w_i = d_i/d, i\in[m]$ with  $d=\sum_{i=1}^{m}  d_i$. Theorems \ref{L-global-convergence} and  \ref{global-convergence-inexact}  suggest that $\sigma_i$ should be chosen to satisfy $\sigma_i = O(w_ir_i), i\in[m]$. In particular, we set
\begin{eqnarray*} 
\begin{array}{r}\sigma_i :=\sigma_i(a):=\frac{a{\rm ln}(md_i)}{10{\rm ln}(2+k_0)}w_ir_i,~~ a>0.\end{array} 
\end{eqnarray*}
To implement {\tt CEADMM}, we need to solve subproblem  \eqref{ceadmm-sub2}. However, for the logistic regression problem, subproblem  \eqref{ceadmm-sub2} is uneasy to solve exactly. Therefore, we apply {\tt CEADMM} only into solving the linear regression problem, i.e., Example \ref{ex-linear}, where the subproblem can be addressed exactly by 
\begin{eqnarray*}  \label{iceadmm-sub2-21} 
\begin{array}{llll}
\bx^{k+1}_i  
= (w_i A^\top_iA_i + \sigma_i I)^{-1}\Big[w_i A^\top_i\bb_i+\sigma_i\by^{k+1} - \bpi_i^{k}\Big].
    \end{array}  
\end{eqnarray*} 
To implement {\tt ICEADMM}, we need to choose $H_i$. 
For Example \ref{ex-linear}, to accelerate the computation  for the local update in \eqref{iceadmm-sub2}, we let $H_i=r_iI$,
which specifies \eqref{iceadmm-sub2} as
\begin{eqnarray*}  \label{iceadmm-sub2-22}
\begin{array}{llll}
\bx^{k+1}_i  
= \bx^{k}_i- \frac{1}{w_i r_i + \sigma_i} \Big[ \sigma_i(\bx^{k}_i-\by^{k+1})+ \bg_i^{k}+ \bpi_i^{k}\Big]. 
    \end{array}
\end{eqnarray*} 
It is worth mentioning that if we set $H_i=A^\top_iA_i$, then \eqref{iceadmm-sub2-0} is the same as \eqref{ceadmm-sub2}, thereby reducing {\tt ICEADMM}  to {\tt CEADMM}. For Example \ref{ex-logist},  to satisfy $r_iI\succeq H_i$,  we set $H_i=\frac{1}{r}A^\top_iA_i$ with $r>4+\mu=4.01$, which specifies   \eqref{iceadmm-sub2} as
\begin{eqnarray*}  \label{iceadmm-sub2-22} 
\begin{array}{llll}
\bx^{k+1}_i 
&=& \bx^{k}_i - (\frac{w_i}{r} A^\top_iA_i + \sigma_i I)^{-1} \Big[ \sigma_i \bx^{k}_i+\bg_i^{k}-\sigma_i\by^{k+1} + \bpi_i^{k}\Big]. 
    \end{array}  
\end{eqnarray*} 
We summarize parameters for two algorithms in Table \ref{tab:para}. 
 \begin{table}[H]
	\renewcommand{\arraystretch}{1}\addtolength{\tabcolsep}{6pt}
	\caption{Parameters for {\tt CEADMM} and {\tt ICEADMM}.}\vspace{-5mm}
	\label{tab:para}
	\begin{center}
		\begin{tabular}{ll cccc cccc  }
			\hline
	Algs. & Exs. &  $\bx_i^0$ & $\bpi_i^0$		&  $\sigma_i$	 	& $k_0\in$ &	$H_i$	&	 	\\\hline
{\tt CEADMM}	& Ex. \ref{ex-linear} &	0&0 & $\sigma_i(1) $   &$[20]$&	 	 \\		 
{\tt ICEADMM}&Ex. \ref{ex-linear} &	0&0&	$\sigma_i(2) $	  &$[20]$&	$\lambda_{\max}(A_i^\top A_i)I$	 	\\	 
 &Ex. \ref{ex-logist} &	0&0&	$\sigma_i(1) $	  &$[20]$&	$\frac{1}{6}{A^\top_iA_i }$	 	\\		\hline

		\end{tabular}
	\end{center}
\end{table}

\subsection{Numerical results}
In this part, we conduct some simulation to demonstrate the performance of {\tt CEADMM}  and  {\tt ICEADMM}  including  global convergence,   convergence rate, and  effect of $k_0$. To measure the performance, we report the following factors: total number of iterations, total number of the communication rounds  (namely, global aggregations), total computational time (in second), objective function values   $f(\by^k)$ and   $F(X^k)$, and  error measurements $\|\nabla f(\by^{k})\|^2$ and $\|\nabla F(X^{k})\|^2$.
\begin{figure*}[!th]
	 \begin{subfigure}{.33\textwidth}
	\centering
	\includegraphics[width=.95\linewidth]{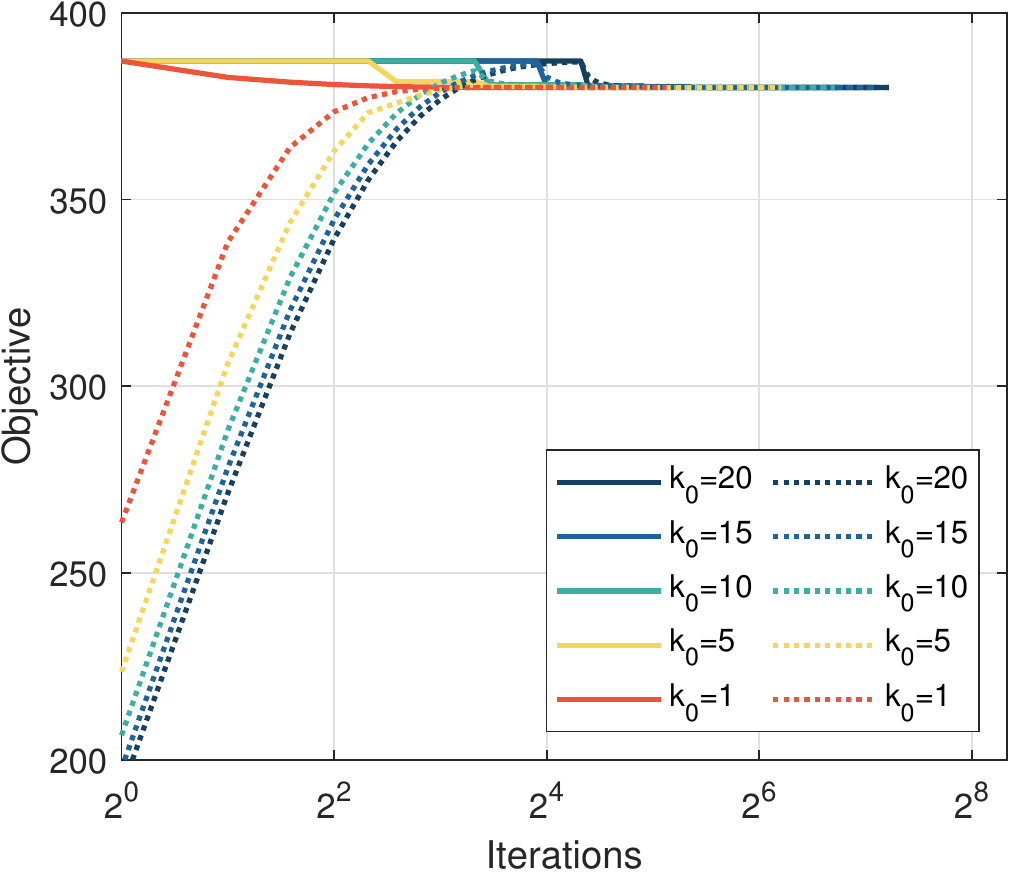}
	\caption{{\tt CEADMM} solving Ex. \ref{ex-linear}}
	\label{fig:CEADMM-diff}
\end{subfigure}	
\begin{subfigure}{.33\textwidth}
	\centering
	\includegraphics[width=.95\linewidth]{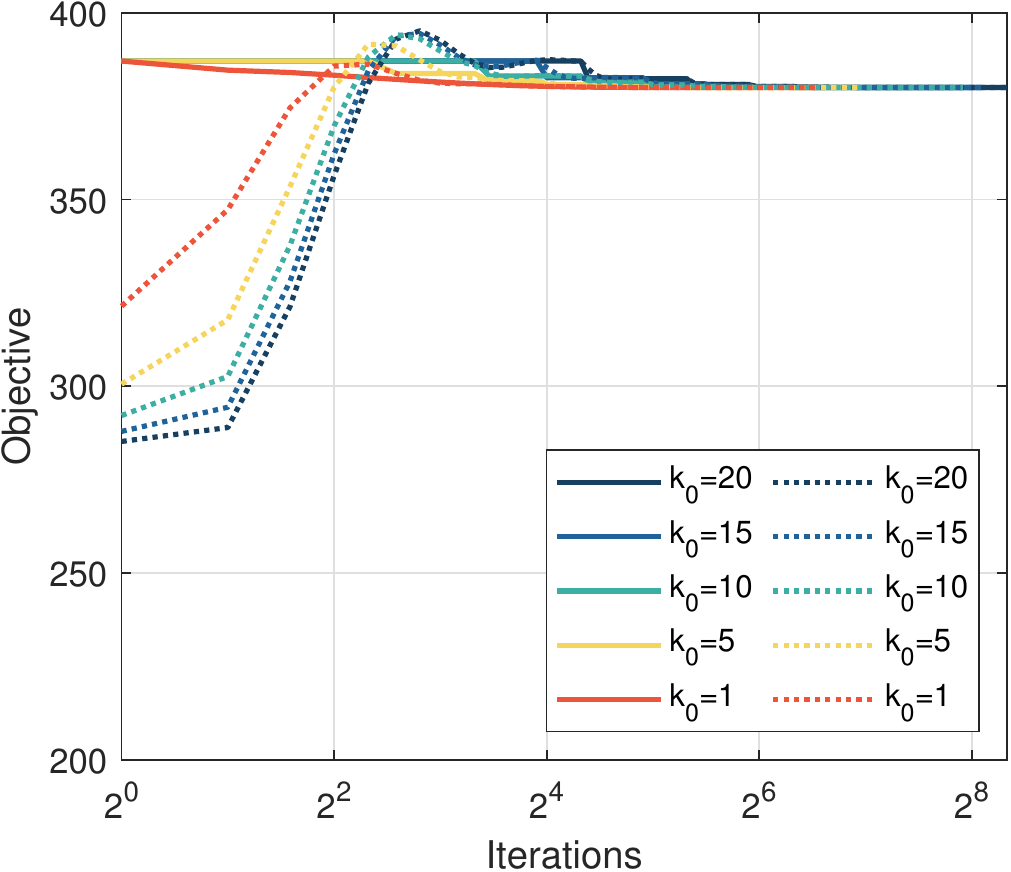}
	\caption{{\tt ICEADMM}  solving Ex. \ref{ex-linear}}
	\label{fig:ICEADMM-diff}
\end{subfigure}   
\begin{subfigure}{.33\textwidth}
	\centering
	\includegraphics[width=.93\linewidth]{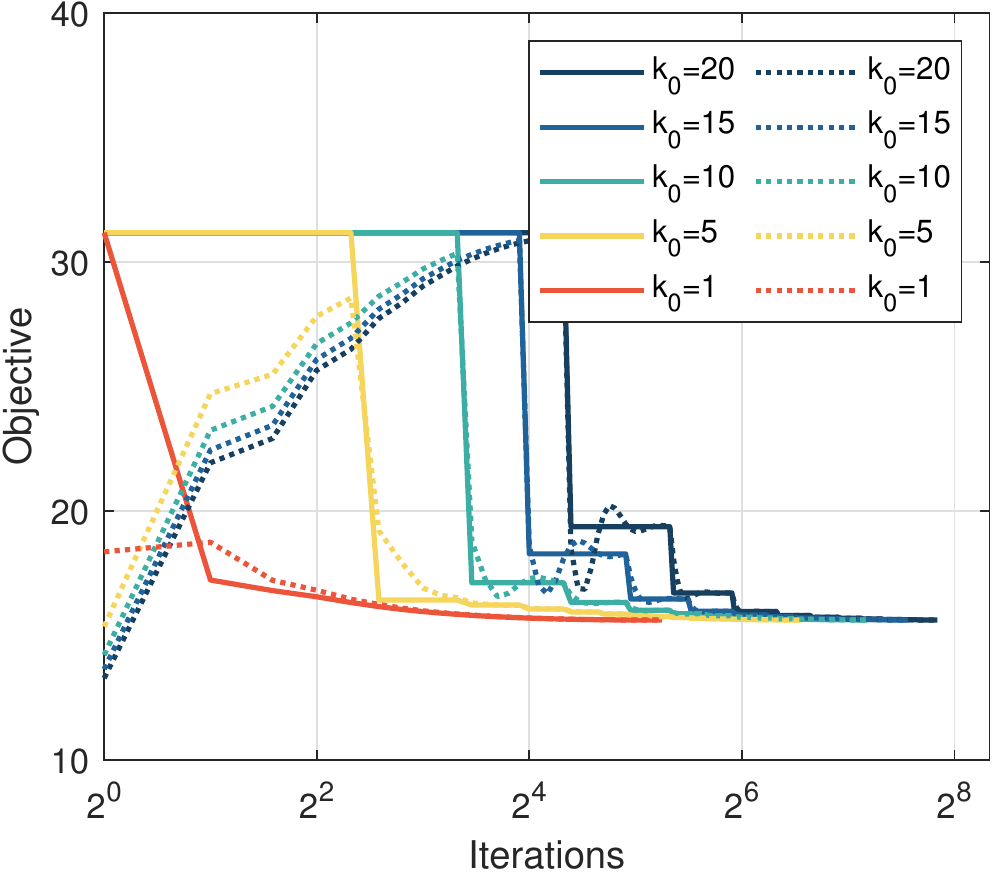}
	\caption{{\tt ICEADMM} solving Ex. \ref{ex-logist}}
	\label{fig:ICEADMM-log-obj-0}
\end{subfigure}	
 
\caption{$f(\by^k)$ (solid lines) and $F(X^k)$ (dashed lines) v.s. iterations.\label{fig:iterations-objective}}
\end{figure*}

\subsubsection{Global convergence}

To see the global convergence proven in Theorems \ref{L-global-convergence} and \ref{global-convergence-inexact},  we show how the objective function values decrease with iterations for   {\tt CEADMM} and {\tt ICEADMM} under five choices of $k_0\in\{1,5,10,15,20\}$. Corresponding results are presented in  Figure \ref{fig:iterations-objective}, where $m=90$ for Example \ref{ex-linear} and $m=200$ for Example \ref{ex-logist} with data {\tt qot}.  
As expected, all lines eventually tend to the same objective function value, well testifying Theorem \ref{L-global-convergence} that the whole sequence converges to the optimal function value of  problem  \eqref{FL-opt}.  It is clear that the bigger values of $k_0>1$ (i.e., the wider gap between two global aggregations) are, the more iterations are required to reach the optimal function value.

\subsubsection{Convergence rate}
To see the convergence speed of two algorithms, as stated in Theorems \ref{complexity-thorem-gradient} and \ref{complexity-thorem-gradient-inexact}, we present two errors, $\|\nabla f(\by^{k})\|^2$ and $\|\nabla F(X^{k})\|^2$,  in Figure \ref{fig:iterations-grad} where $m=90$ for Example \ref{ex-linear} and $m=200$ for Example \ref{ex-logist} with data {\tt sct}. The overall phenomena show that (i) both errors vanish gradually along with the iterations rising, (ii) the big values of $k_0$, the more iterations required by {\tt CEADMM} to converge, which perfectly justifies Theorems \ref{complexity-thorem-gradient} and \ref{complexity-thorem-gradient-inexact} that the convergence rate $O(k_0/k)$ relies on $k_0$. What is more, it is clear that for the same $k_0$, {\tt ICEADMM} takes more iterations than {\tt CEADMM} to converge, such as, 240 v.s. 100 iterations shown in curves of $k_0=10$ in Figures \ref{fig:CEADMM-diff-grad} and \ref{fig:ICEADMM-diff-grad}.
 
\begin{figure}[H]
	\begin{subfigure}{.33\textwidth}
	\centering
	\includegraphics[width=.95\linewidth]{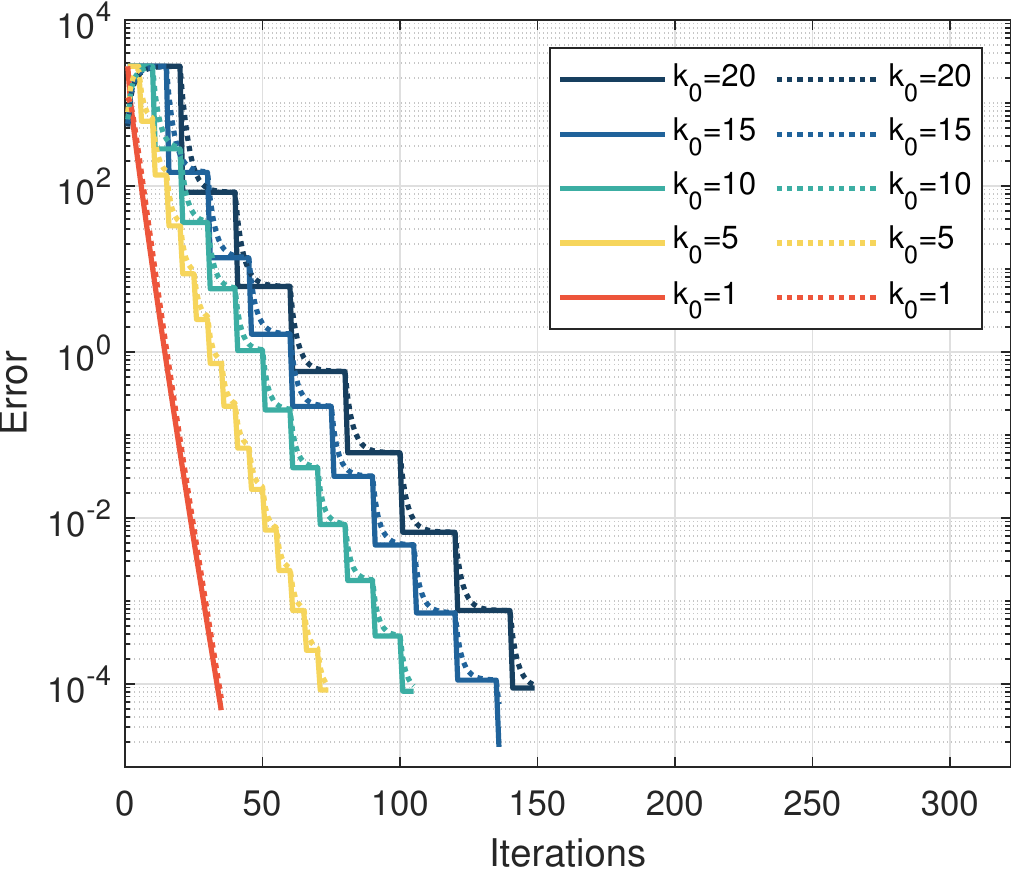}
	\caption{{\tt CEADMM} solving Ex. \ref{ex-linear}}
	\label{fig:CEADMM-diff-grad}
\end{subfigure}	
\begin{subfigure}{.33\textwidth}
	\centering
	\includegraphics[width=.95\linewidth]{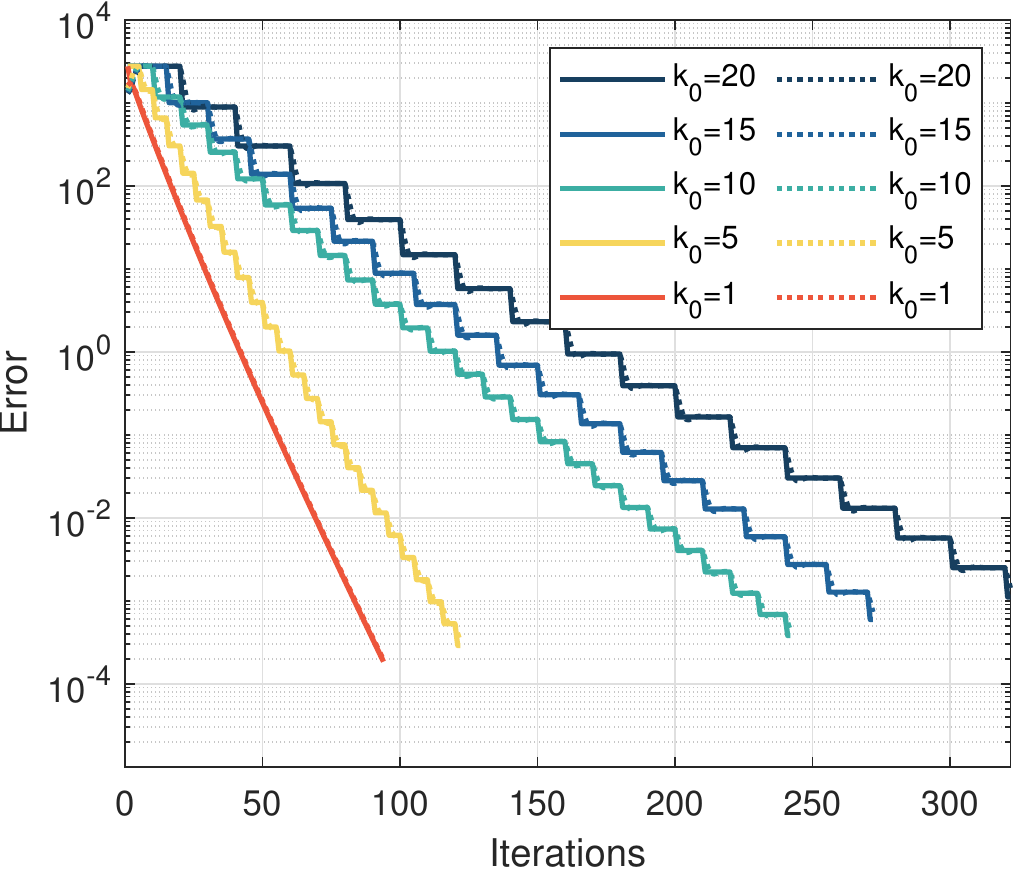}
	\caption{{\tt ICEADMM}  solving Ex. \ref{ex-linear}}
	\label{fig:ICEADMM-diff-grad}
\end{subfigure} 
\begin{subfigure}{.33\textwidth}
	\centering
	\includegraphics[width=.95\linewidth]{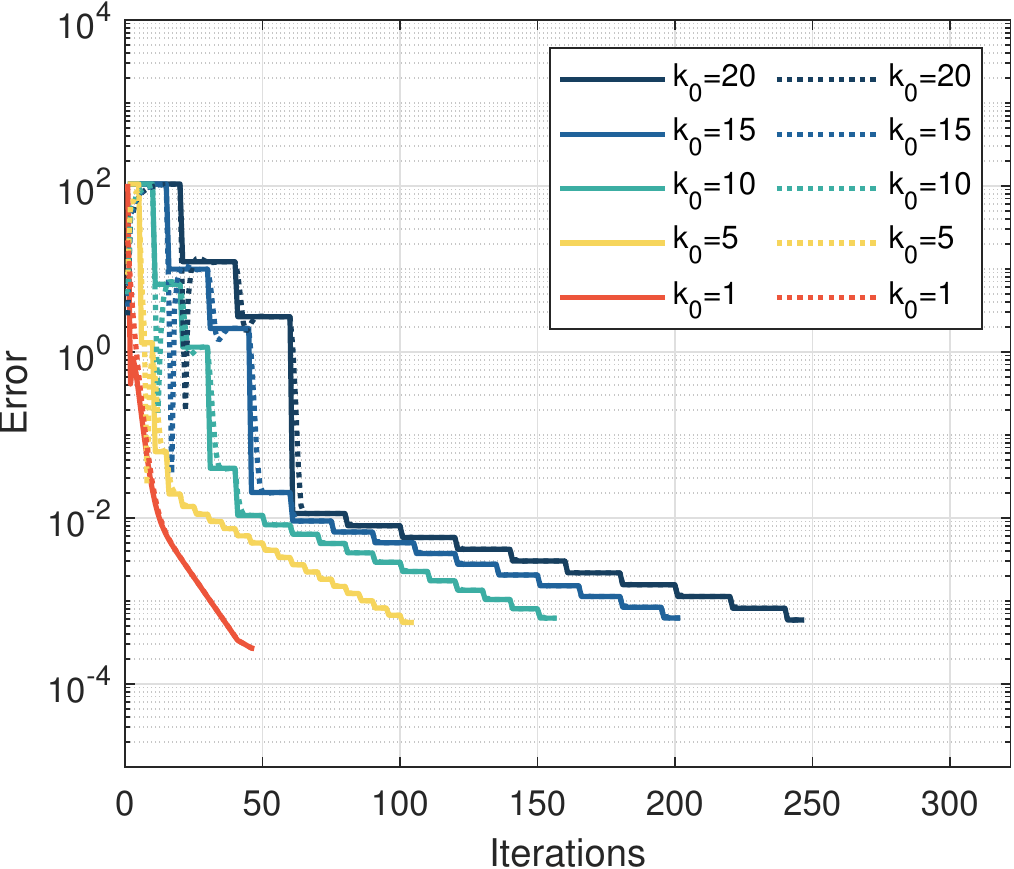}
	\caption{{\tt ICEADMM} solving Ex. \ref{ex-logist}}
	\label{fig:ICEADMM-log-grad-1}
\end{subfigure}   
\caption{$\|\nabla f(\by^k)\|^2$ (solid lines) and $\|\nabla F(X^k)\|^2$ (dashed lines) v.s. iterations.\label{fig:iterations-grad}}
\end{figure}
 
\subsubsection{Effect of $k_0$}
Next, we would like to see how the choices of $k_0$ impact the performance of the two algorithms. To proceed with that, for each dimension $(m,n,d_1,\ldots,d_m)$ of  dataset A, we generate  20 instances solved by the algorithm with a fixed $k_0\in[20]$ and present the average results in Figure  \ref{fig:effect-k0-diff}, where the following comments can be declared:

{\bf (a)  {\tt CEADMM} solving Example \ref{ex-linear}.} From Figures \ref{fig:k0-iter-1} to \ref{fig:k0-aggr-1}, with increasing $k_0$,  the total number of iterations is increasing but communication rounds are decreasing. That is,  {\tt CEADMM} with $k_0>1$ takes  fewer global aggregations than the standard {\tt ADMM}. For example, when $m=30$ in Figure \ref{fig:k0-aggr-2},  {\tt IADMM} requires 118 rounds of communications while {\tt ICEADMM} with $k_0=20$ only needs approximately 20 rounds.  To this end, it is much more efficient than {\tt IADMM} in terms of saving communication costs.

\begin{figure}[!th]
\begin{subfigure}{.33\textwidth}
	\centering
	\includegraphics[width=.99\linewidth]{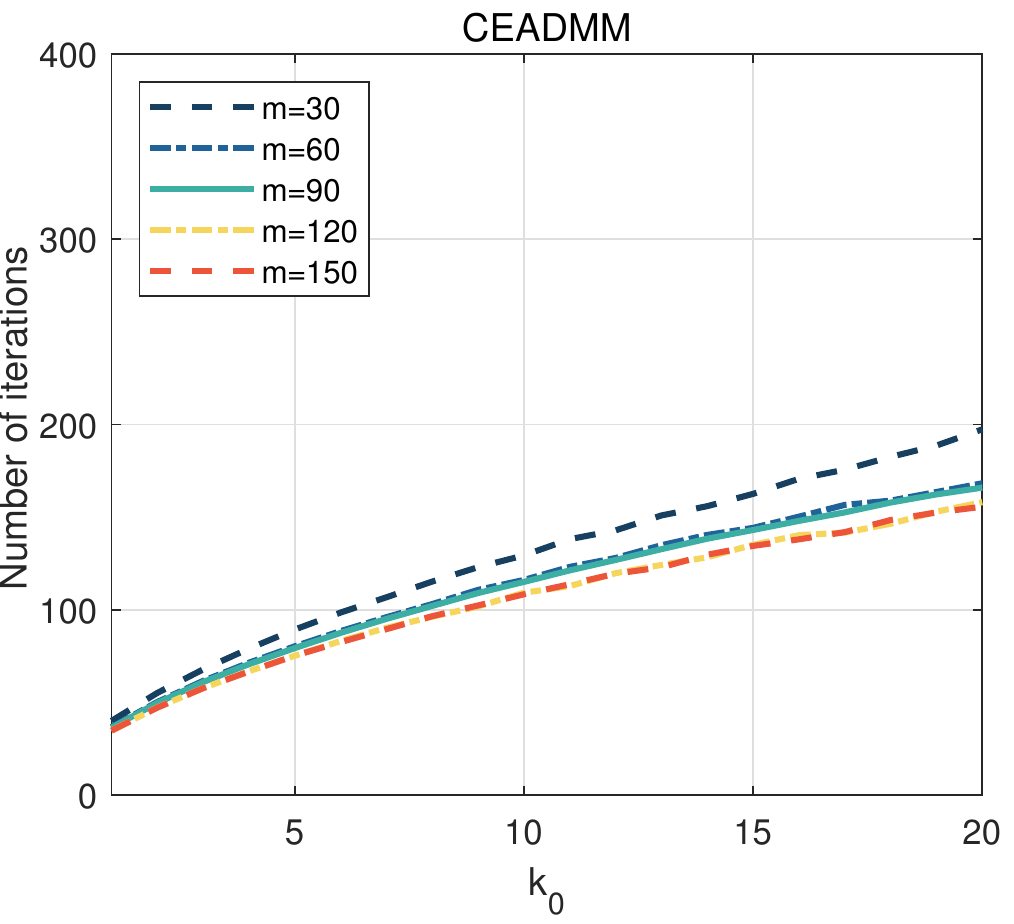}\vspace{-2mm}
	\caption{Number of iterations}
	\label{fig:k0-iter-1}
\end{subfigure}	 
\begin{subfigure}{.33\textwidth}
	\centering
	\includegraphics[width=.99\linewidth]{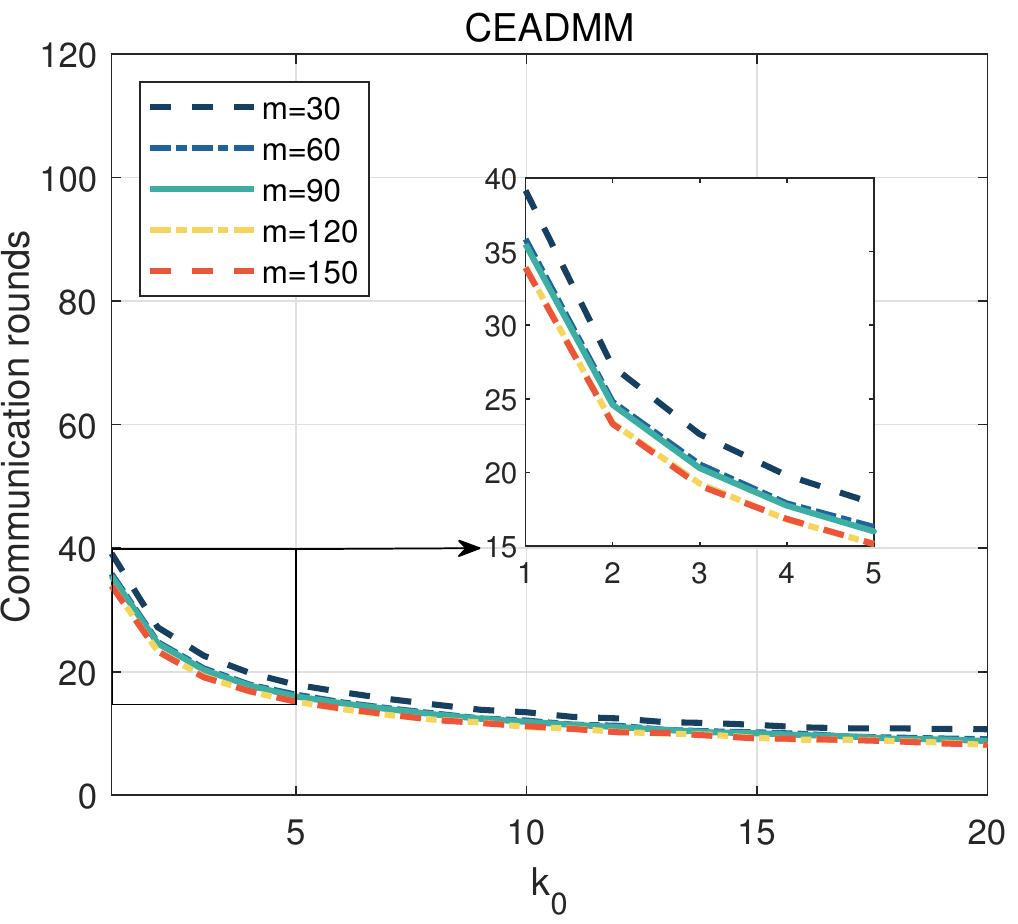}\vspace{-2mm}
	\caption{Communication rounds}
	\label{fig:k0-aggr-1}
\end{subfigure}  
\begin{subfigure}{.33\textwidth}
	\centering
	\includegraphics[width=.99\linewidth]{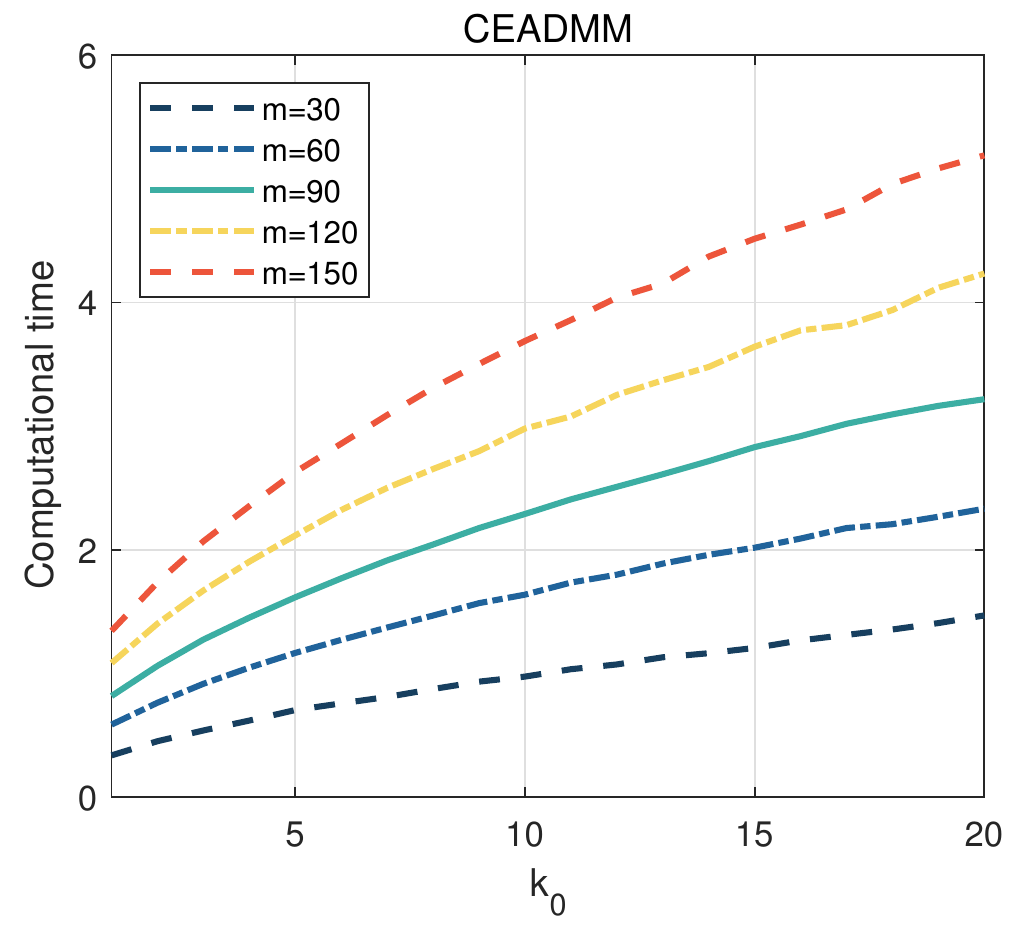}\vspace{-2mm}
	\caption{Computational time}
	\label{fig:k0-time-1}
\end{subfigure}\\
\vspace{3mm}

\begin{subfigure}{.33\textwidth}
	\centering
	\includegraphics[width=.99\linewidth]{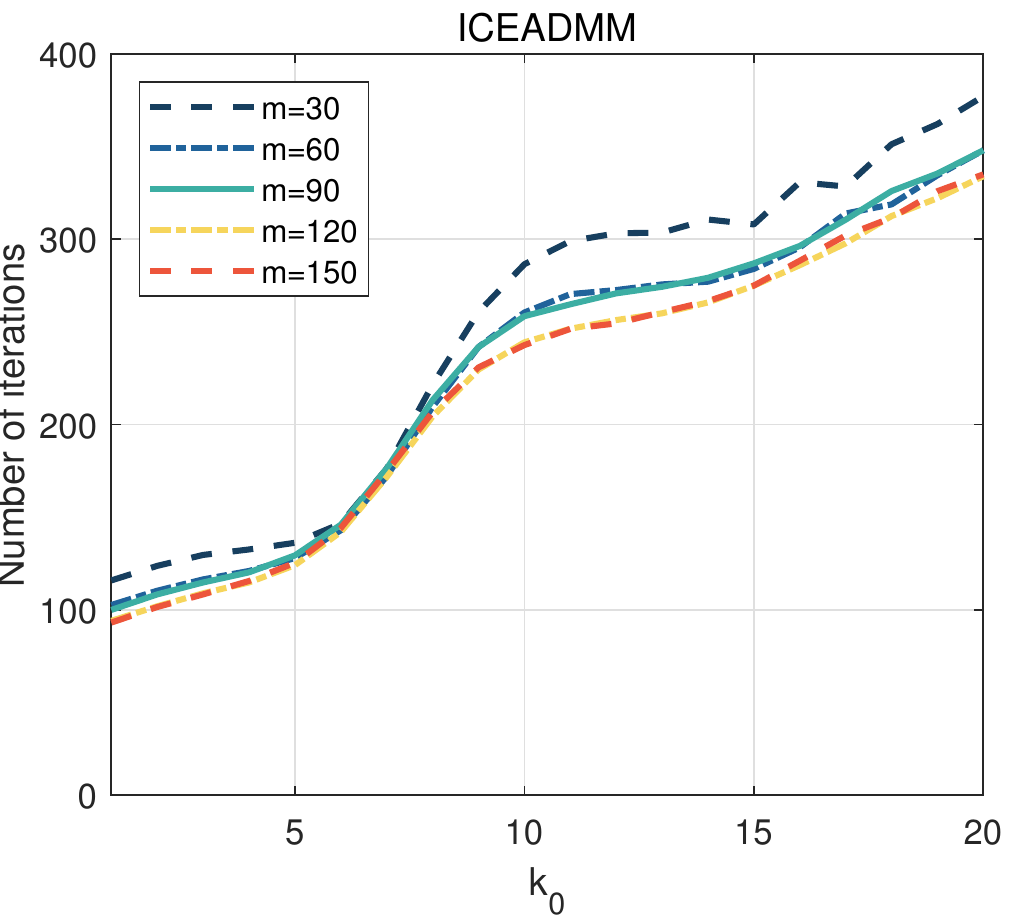}\vspace{-2mm}
	\caption{Number of iterations}
	\label{fig:k0-iter-2}
\end{subfigure}	 
\begin{subfigure}{.33\textwidth}
	\centering
	\includegraphics[width=.99\linewidth]{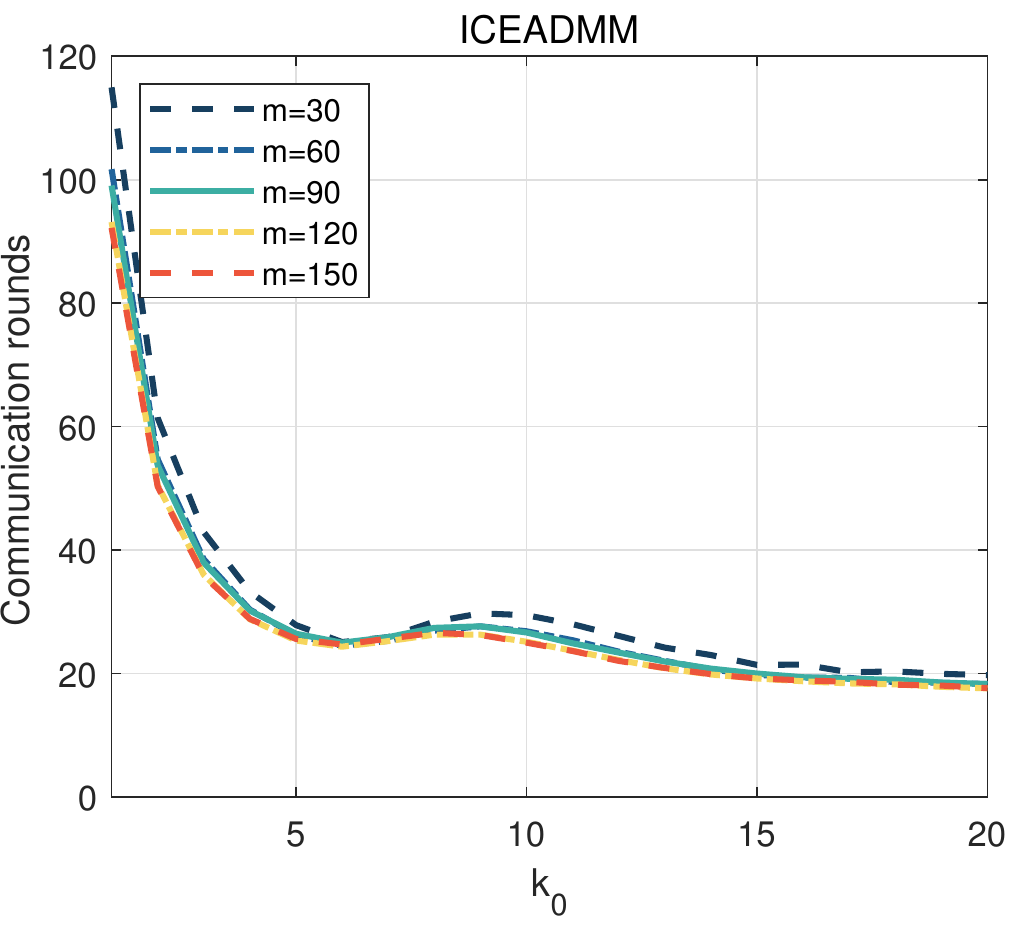}\vspace{-2mm}
	\caption{Communication rounds}
	\label{fig:k0-aggr-2}
\end{subfigure}  
\begin{subfigure}{.33\textwidth}
	\centering
	\includegraphics[width=.99\linewidth]{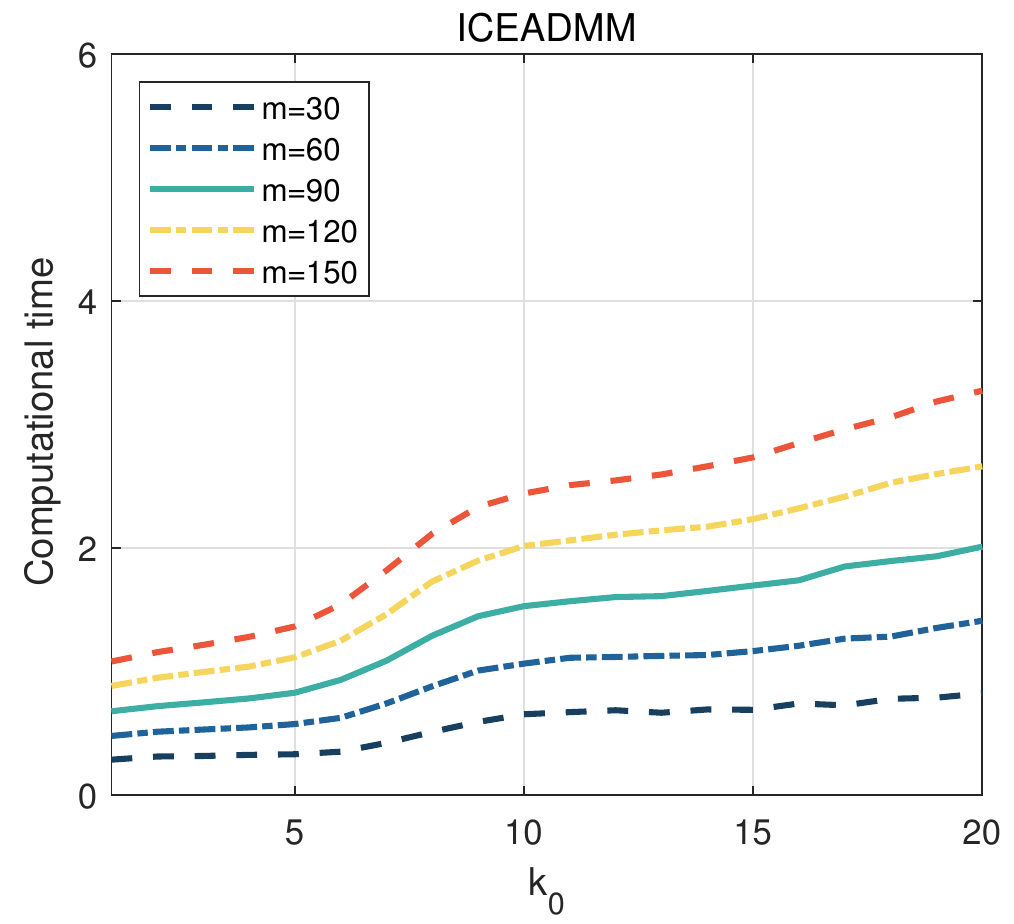}\vspace{-2mm}
	\caption{Computational time}
	\label{fig:k0-time-2}
\end{subfigure} 
\caption{Effect of $k_0$ for Example \ref{ex-linear}.\label{fig:effect-k0-diff}}
\end{figure}

 Since we conducted the numerical experiments on a single laptop, the total computational time relies on the number of iterations. The curves in Figure \ref{fig:k0-time-1} show that the bigger values of $k_0$, the longer time  because bigger $k_0$ results in more iterations, as shown in Figure \ref{fig:k0-iter-1}. Suppose that if the local updates (i.e., \eqref{ceadmm-sub2} and \eqref{ceadmm-sub3}) of  {\tt CEADMM}  are implemented on different local devices, such as cellphones, laptops, or desktops, then we must take the price of communications between the local devices and the central server into consideration since more communications lead to an extremely higher price. Hence, it is necessary to reduce the number of global aggregations, that is, to set a properly bigger $k_0$.
 
 {\bf (b)  {\tt ICEADMM} solving Example \ref{ex-linear}.} Corresponding results presented in Figures \ref{fig:k0-iter-2} - \ref{fig:k0-time-2} are similar to those in Figures \ref{fig:k0-iter-1} - \ref{fig:k0-time-1}. Again, the total number of iterations and computational time are rising but communication rounds are declining along with $k_0$ ascending. Once again,  {\tt ICEADMM} with $k_0>1$ is more communication-efficient than the standard {\tt IADMM} since it executes much fewer global  aggregations.

\begin{figure}[!th]
\begin{subfigure}{.33\textwidth}
	\centering
	\includegraphics[width=.99\linewidth]{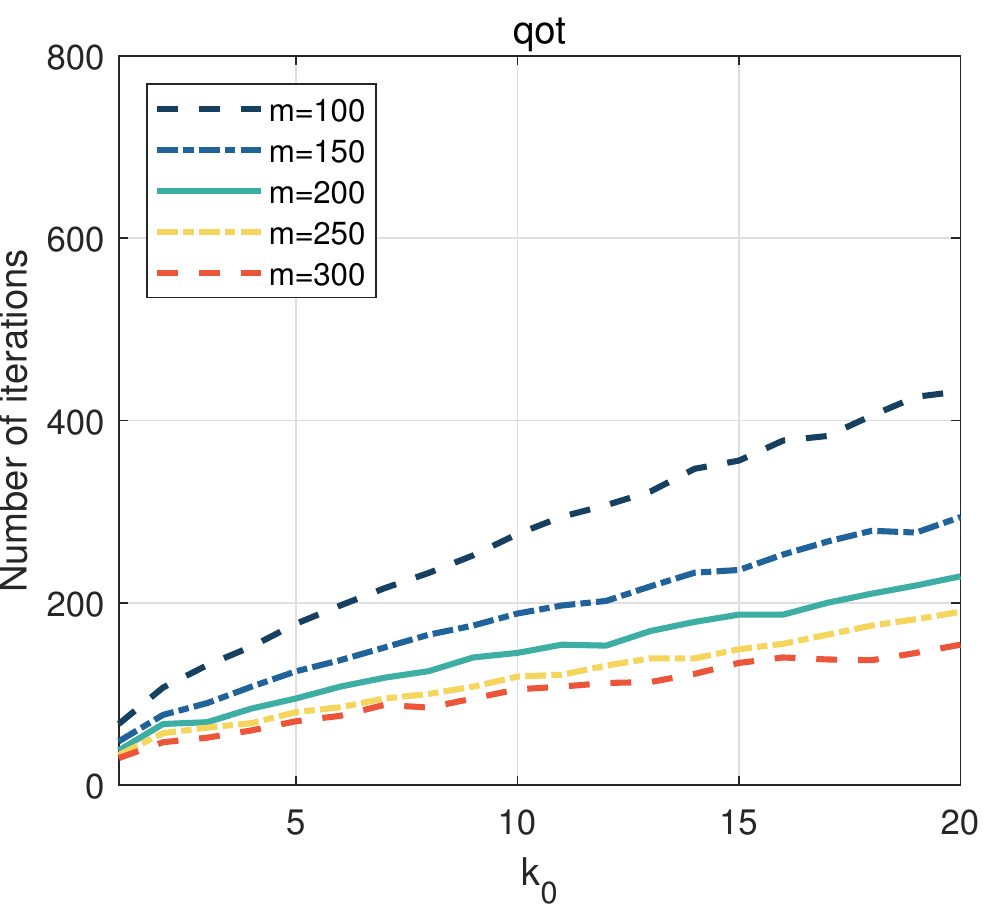}
	\caption{Number of iterations}
	\label{fig:log-k0-iter-1}
\end{subfigure}	 
\begin{subfigure}{.33\textwidth}
	\centering
	\includegraphics[width=.99\linewidth]{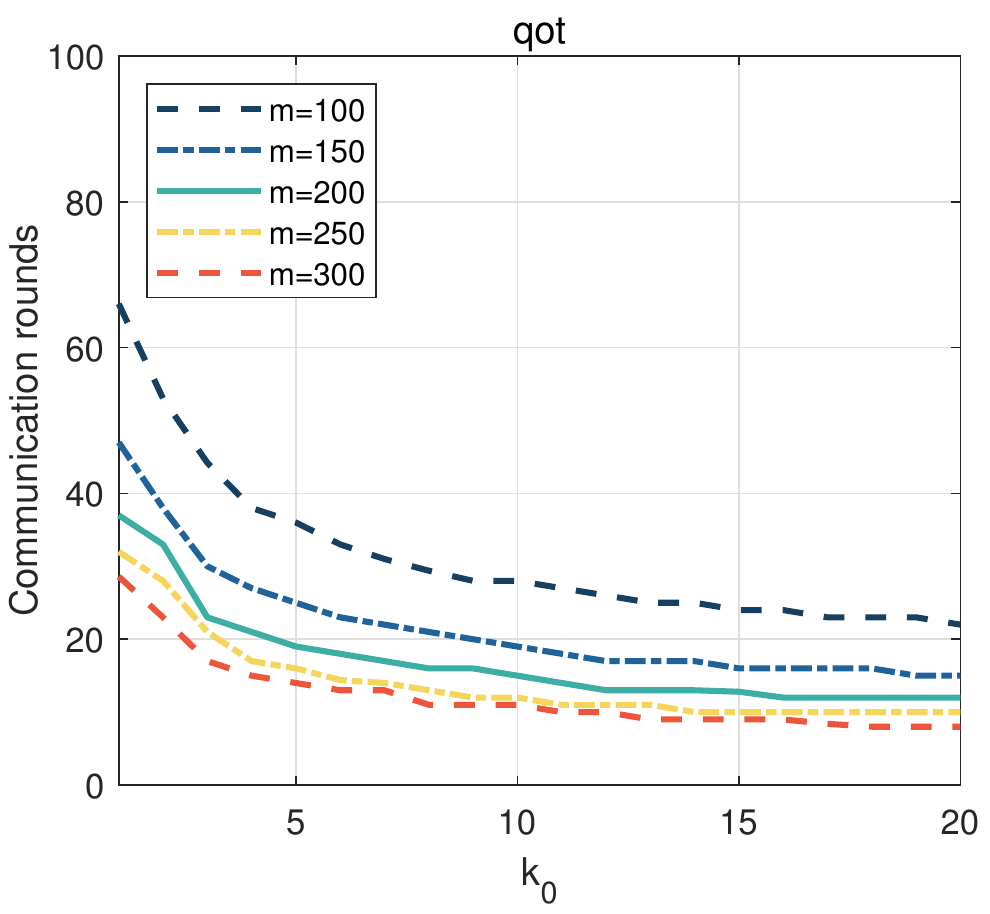}
	\caption{Communication rounds}
	\label{fig:log-k0-aggr-1}
\end{subfigure}  
\begin{subfigure}{.33\textwidth}
	\centering
	\includegraphics[width=.99\linewidth]{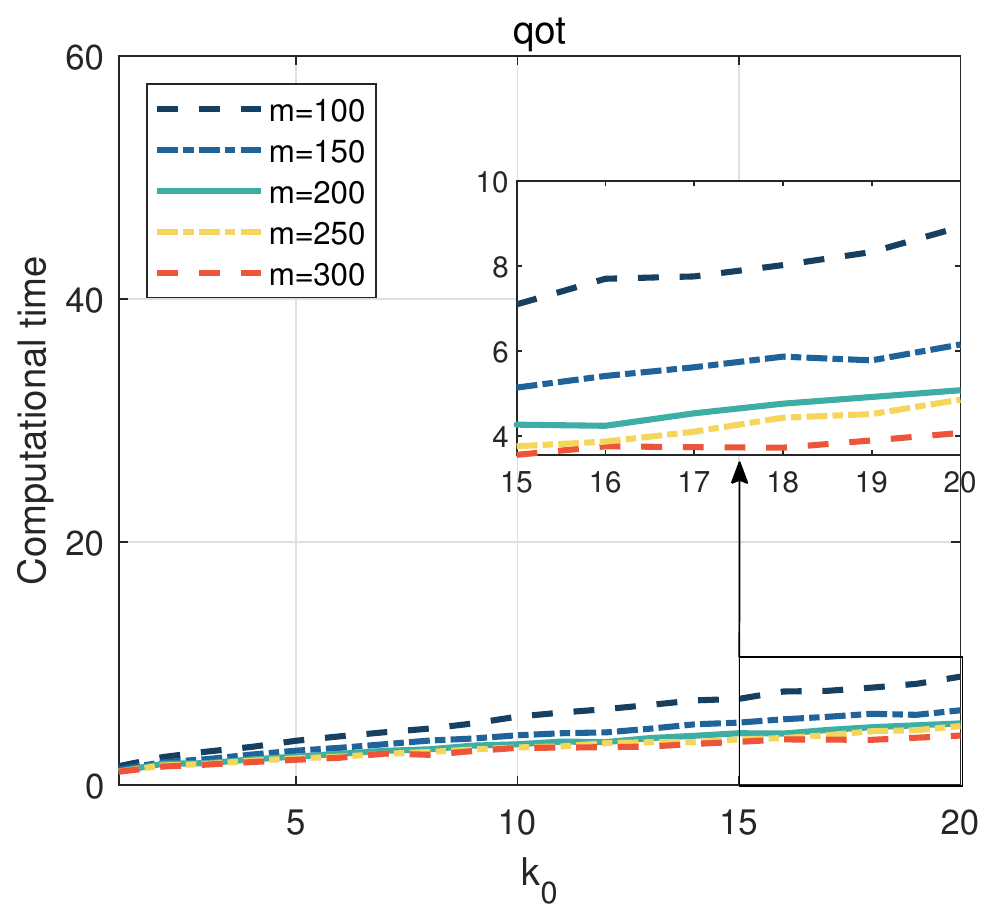}
	\caption{Computational time}
	\label{fig:log-k0-time-1}
\end{subfigure}\\
\vspace{1mm}

\begin{subfigure}{.33\textwidth}
	\centering
	\includegraphics[width=.99\linewidth]{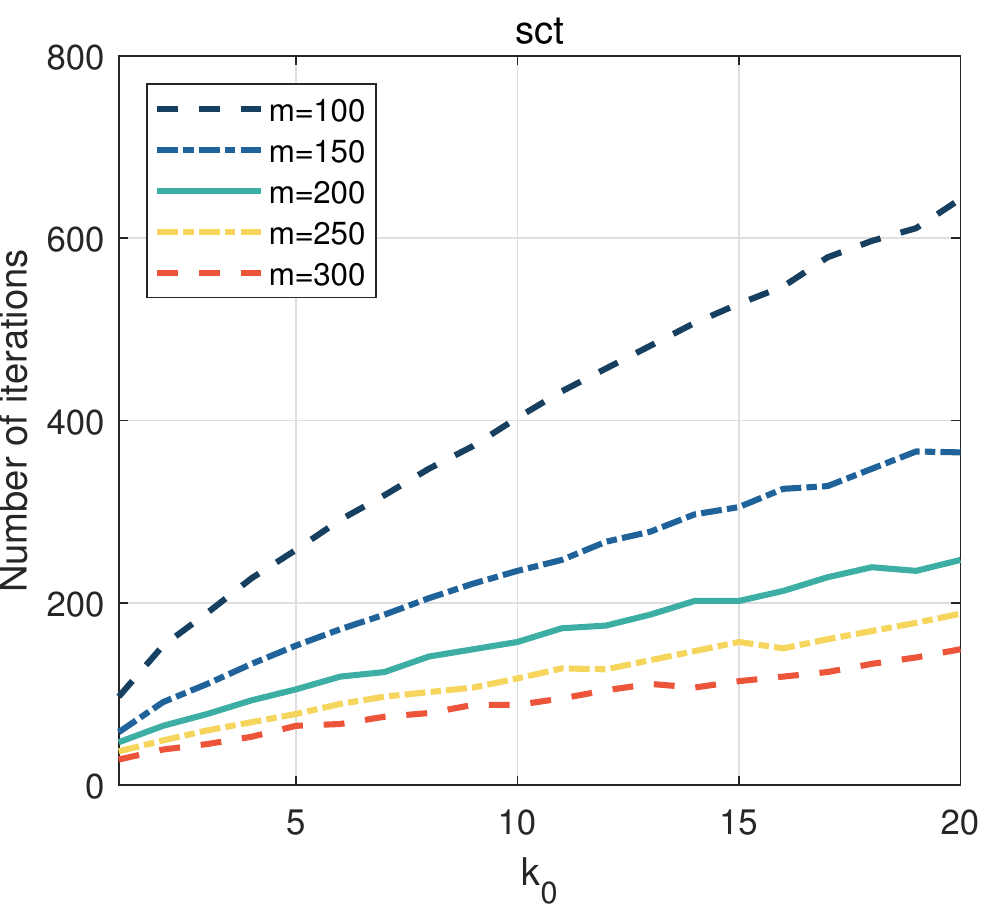}
	\caption{Number of iterations}
	\label{fig:log-k0-iter-2}
\end{subfigure}	 
\begin{subfigure}{.33\textwidth}
	\centering
	\includegraphics[width=.99\linewidth]{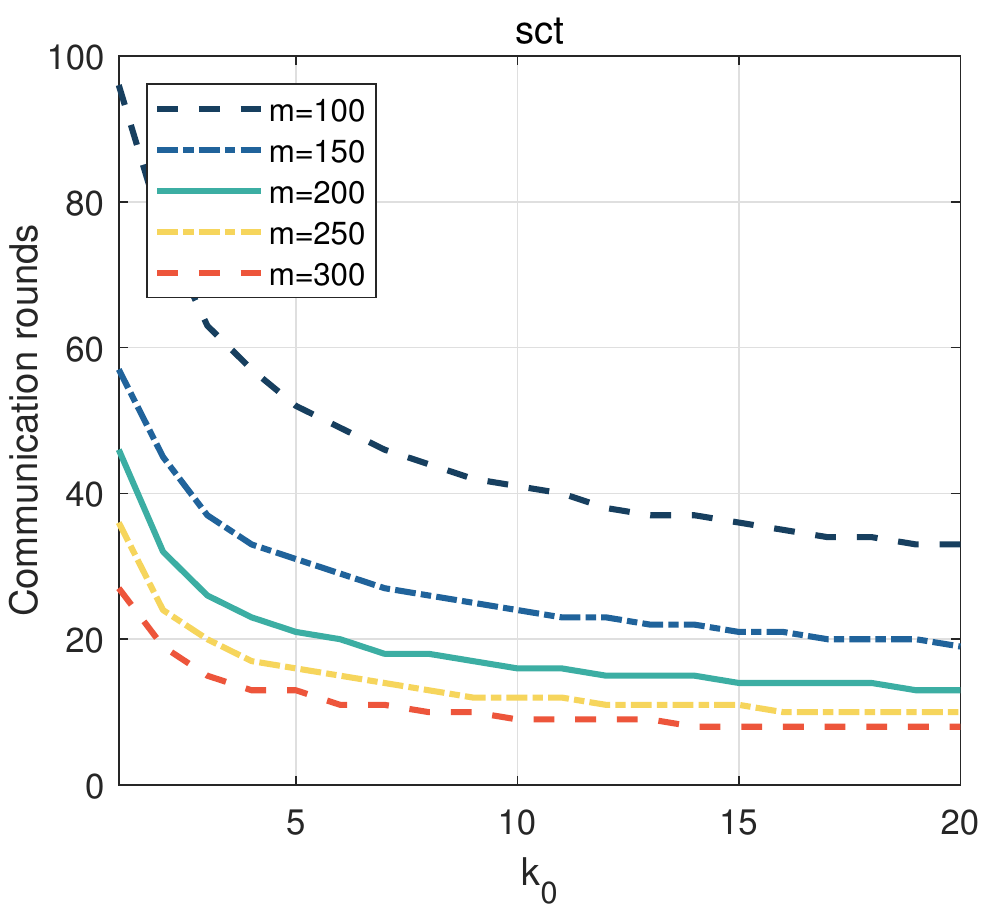}
	\caption{Communication rounds}
	\label{fig:log-k0-aggr-2}
\end{subfigure}  
\begin{subfigure}{.33\textwidth}
	\centering
	\includegraphics[width=.99\linewidth]{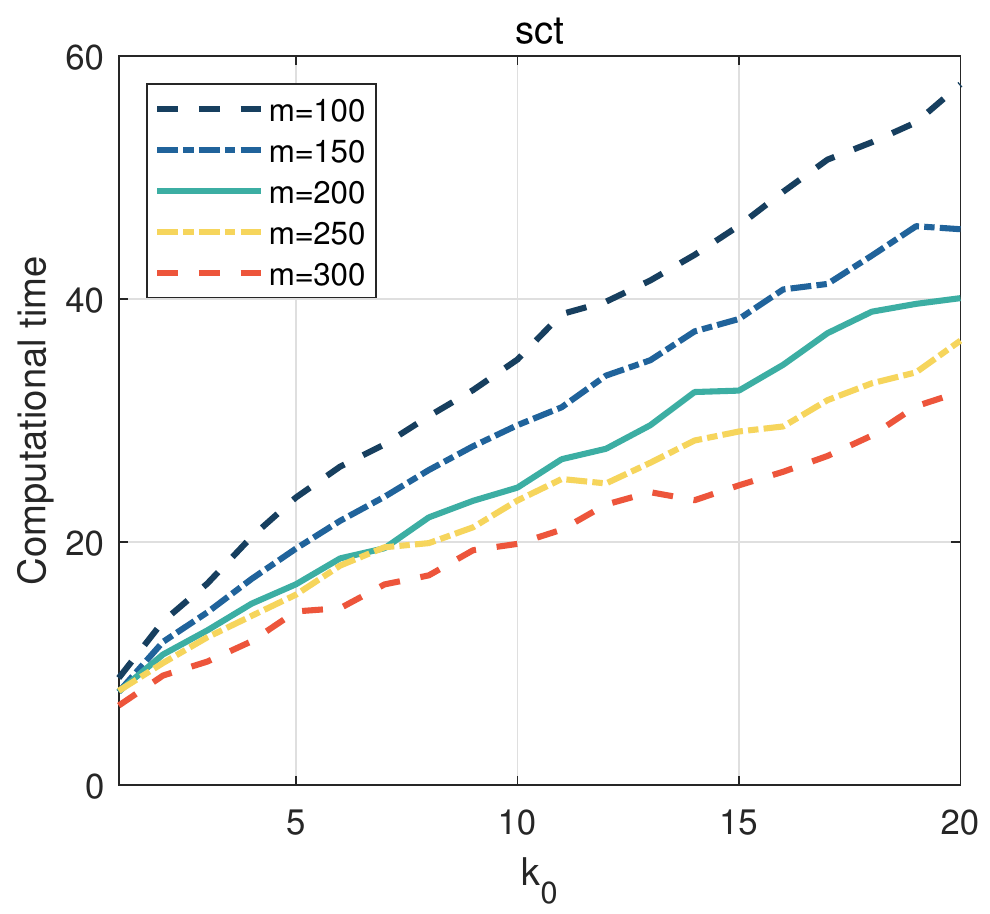}
	\caption{Computational time}
	\label{fig:log-k0-time-2}
\end{subfigure}  
\caption{Effect of $k_0$ for Example \ref{ex-logist}.\label{fig:log-effect-k0-diff}}
\end{figure} 

  {\bf (c)  {\tt ICEADMM} solving Example \ref{ex-logist}.}  Figure \ref{fig:log-effect-k0-diff} demonstrates that there is no big difference with  Figure \ref{fig:effect-k0-diff}. Moreover, the bigger $m$ (i.e., the more clients), the fewer number of iterations consumed to converge, leading to the shorter computational time. Once again, the larger $k_0$ results in the lower communication rounds, displaying the higher communication efficiency.

\section{Conclusion}
This paper developed two ADMM-based federated learning algorithms and managed to address three issues in federated learning, including saving communication resources, reducing computational complexity, and establishing convergence property under very reasonable assumptions. These advantages hint that the proposed algorithms might be practical to deal with many real applications such as mobile edge
computing \cite{mao2017survey,mach2017mobile}, over-the-air computation \cite{zhu2018mimo,yang2020federated}, vehicular
communications \cite{samarakoon2019distributed}, unmanned aerial vehicle online path control \cite{shiri2020communication} and so forth.  Moreover, we feel that the algorithmic schemes and techniques used to build the convergence theory could be also valid for tackling decentralized federated learning \cite{elgabli2020fgadmm,ye2021decentralized}. We leave these as future research.

\bibliographystyle{unsrt}
\bibliography{ref}

 \section{Appendix}
For notational simplicity, hereafter, we denote
\begin{eqnarray*} 
\qquad \begin{array}{llllll}
\triangle\by^{k+1}:=\by^{k+1}-\by^{k},\qquad\triangle\bx_i^{k+1}:=\bx_i^{k+1}-\bx_i^{k},\qquad
\triangle\bpi_i^{k+1}:=\bpi_i^{k+1}-\bx_i^{k},
    \end{array}  
 \end{eqnarray*} 
 and let   $\ba^{k} \rightarrow \ba$ stand for $\lim_{k\rightarrow\infty} \ba^{k} = \ba$.
For any  vectors $\ba,\bb$, and $\ba_i$, we have
\begin{eqnarray} \label{two-vecs}
 \arraycolsep=1.4pt\def\arraystretch{1.35}
 \begin{array}{rcl}
 -  \|\bb\|^2 &=&2\langle\ba,\bb\rangle +   \|\ba\|^2-  \|\ba+\bb\|^2,\\
\|\ba+\bb\|^2 &\leq& (1+t)\|\ba\|^2+(1+\frac{1}{t})\|\bb\|^2,~ \forall~ t>0,\\
\|\sum_{i=1}^m\ba_i\|^2 &\leq& m\sum_{i=1}^m\|\ba_i\|^2 .
 \end{array}
\end{eqnarray}
The gradient Lipschitz continuity 
in \eqref{Lip-r} with a Lipschitz constant $r>0$ indicates that 
\begin{eqnarray} \label{Lip-r/2}
\begin{array}{l}
f(\bx)\leq  f(\bz  )+\langle \nabla  f(\bz  ), \bx-\bz   \rangle +  \frac{r}{2}\| \bx-\bz  \|^2, \end{array}\end{eqnarray}
for any $\bx,\bz  \in\R^n$. Hence,  there exists $H$ such that $r I \succeq H\succeq0$ and
\begin{eqnarray}
\label{f-majorized-h}\begin{array}{l}
f(\bx)\leq  f(\bz  )+\langle \nabla f(\bz  ), \bx-\bz   \rangle +  \frac{1}{2}\| \bx-\bz \|^2_H.\end{array}\end{eqnarray} 
The above condition immediately implies \begin{eqnarray} \label{inequality-Gi}
\qquad \begin{array}{llll}
 \langle \nabla  f(\bx) - \nabla  f(\bz  ), \bx-\bz   \rangle &\leq& \| \bx-\bz \|^2_H.
 \end{array}
\end{eqnarray} 
A special case of $f$ is a quadratic function and $H =\nabla^2 f$, the Hessian matrix of $f$.  
It follows from the Mean Value Theorem that, for any $\bx$ and $\bz$,
\begin{eqnarray}  \label{H-Lip-continuity-fxy}
 \arraycolsep=1.4pt\def\arraystretch{1.35}  \begin{array}{llll}
f(\bx) - f(\bz  ) -\langle \nabla  f(\bx  ), \bx-\bz   \rangle &=&  \int_0^1 d f(\bz+t(\bx-\bz))-\langle \nabla  f(\bx  ), \bx-\bz   \rangle\\
&=&  \int_0^1 \langle \nabla  f(\bz+t(\bx-\bz)) - \nabla  f(\bx  ), \bx-\bz   \rangle dt  \\
&\leq&  \int_0^1 r\|\bz+t(\bx-\bz)) -  \bx \|\| \bx-\bz   \| dt\\
&=&  r\| \bx-\bz   \|^2\int_0^1 (1-t)dt\\
&=& \frac{r}{2}\| \bx-\bz   \|^2.
 \end{array}
\end{eqnarray}  

\subsection{Proofs of all theorems in Section \ref{sec:ceadmm}}


\begin{lemma}\label{lemma-decreasing-0}  Let $\{(\by^{k},X^{k},\Pi^{k})\}$ be the sequence generated by Algorithm \ref{algorithm-CEADMM}. If Assumption \ref{ass-fi} holds, then for any $k\geq0$, 
\begin{eqnarray} 
 \label{decreasing-property-0}   
\begin{array}{lll}
 \L^{k+1}-\L^{k}    \leq -\frac{\sigma}{2}\|\triangle\by^{k+1} \|^2 - \sum_{i=1}^{m} \frac{\theta_i}{2}\|\triangle\bx_i^{k+1} \|^2,
    \end{array}  
 \end{eqnarray} 
 where $\theta_i   $ is given by
 \begin{eqnarray} 
 \label{def-ci}  
 \begin{array}{lll} \theta_i   :=   \sigma_i  -w_ir_i- {2w_i^2r_i^2}/{\sigma_i}, \qquad i\in[m].
 \end{array}  \end{eqnarray}
\end{lemma}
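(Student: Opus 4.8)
The plan is to establish the descent inequality \eqref{decreasing-property-0} by decomposing the change $\L^{k+1}-\L^k$ according to the three blocks updated in Algorithm \ref{algorithm-CEADMM}: the global aggregation producing $\by^{k+1}$, the local primal updates producing $\bx_i^{k+1}$, and the dual updates producing $\bpi_i^{k+1}$. Since $\L^k = \sum_{i=1}^m L(\by^k,\bx_i^k,\bpi_i^k)$, I would write $\L^{k+1}-\L^k = \big[\L(\by^{k+1},X^k,\Pi^k) - \L(\by^k,X^k,\Pi^k)\big] + \big[\L(\by^{k+1},X^{k+1},\Pi^k) - \L(\by^{k+1},X^k,\Pi^k)\big] + \big[\L(\by^{k+1},X^{k+1},\Pi^{k+1}) - \L(\by^{k+1},X^{k+1},\Pi^k)\big]$ and bound each bracket separately. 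One must be careful that when $k\notin\K$ we have $\by^{k+1}=\by^k$, so the first bracket vanishes; when $k\in\K$, $\by^{k+1}$ is the exact minimizer of $\bx\mapsto\L(\bx,X^k,\Pi^k)$, which is $\sigma$-strongly convex in $\bx$ (its Hessian in $\bx$ is $\sigma I$ with $\sigma=\sum_i\sigma_i$), so the first bracket is at most $-\tfrac{\sigma}{2}\|\triangle\by^{k+1}\|^2$; in either case the bound $-\tfrac{\sigma}{2}\|\triangle\by^{k+1}\|^2$ holds.

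For the second bracket, each $\bx_i^{k+1}$ is the exact minimizer of the $\sigma_i$-strongly convex map $\bx_i\mapsto L(\by^{k+1},\bx_i,\bpi_i^k) = w_if_i(\bx_i)+\langle\bx_i-\by^{k+1},\bpi_i^k\rangle+\tfrac{\sigma_i}{2}\|\bx_i-\by^{k+1}\|^2$ only if $f_i$ is convex; without convexity I instead use the first-order optimality condition $w_i\nabla f_i(\bx_i^{k+1})+\bpi_i^k+\sigma_i(\bx_i^{k+1}-\by^{k+1})=0$ together with the descent lemma for $f_i$ in the form \eqref{Lip-r/2} to control $w_if_i(\bx_i^{k+1})-w_if_i(\bx_i^k)$, obtaining a bound of the shape $-\tfrac{\sigma_i}{2}\|\triangle\bx_i^{k+1}\|^2 + \tfrac{w_ir_i}{2}\|\triangle\bx_i^{k+1}\|^2$ plus a cross term. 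For the third bracket, note $\L(\by^{k+1},X^{k+1},\Pi^{k+1})-\L(\by^{k+1},X^{k+1},\Pi^k) = \sum_i\langle\bx_i^{k+1}-\by^{k+1},\triangle\bpi_i^{k+1}\rangle = \sum_i\tfrac{1}{\sigma_i}\|\triangle\bpi_i^{k+1}\|^2$ using \eqref{ceadmm-sub3}. The dual increment $\triangle\bpi_i^{k+1}$ must then be re-expressed via the optimality condition: subtracting the condition at step $k$ from that at step $k+1$ gives $\triangle\bpi_i^{k+1}$ in terms of $w_i(\nabla f_i(\bx_i^{k+1})-\nabla f_i(\bx_i^k))$ and $\sigma_i$ times differences of the $\bx$ and $\by$ iterates, so that $\|\triangle\bpi_i^{k+1}\|^2$ is bounded using Lipschitz continuity \eqref{Lip-r} and the elementary inequalities \eqref{two-vecs} — this is where the term $2w_i^2r_i^2/\sigma_i$ in \eqref{def-ci} originates, and the factor $\tfrac{\sigma}{2}$ on $\|\triangle\by^{k+1}\|^2$ must survive after the $\by^{k+1}$-dependent pieces of the dual bound are absorbed.

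The main obstacle I anticipate is the bookkeeping in the third bracket: bounding $\sum_i\tfrac{1}{\sigma_i}\|\triangle\bpi_i^{k+1}\|^2$ without convexity requires carefully splitting $\triangle\bpi_i^{k+1}$ and choosing the free parameter $t$ in the second line of \eqref{two-vecs} so that, after summing over $i$, the positive multiples of $\|\triangle\bx_i^{k+1}\|^2$ combine with the negative $-\tfrac{\sigma_i}{2}\|\triangle\bx_i^{k+1}\|^2$ from the primal step to leave exactly $-\tfrac{\theta_i}{2}\|\triangle\bx_i^{k+1}\|^2$ with $\theta_i=\sigma_i-w_ir_i-2w_i^2r_i^2/\sigma_i$, and similarly the $\|\triangle\by^{k+1}\|^2$ contributions from the dual bound are dominated by the $-\tfrac{\sigma}{2}$ gained in the global step. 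The hypothesis $\sigma_i>2w_ir_i$ will be used later (in Theorem \ref{global-obj-convergence-exact}) to ensure $\theta_i>0$, but the inequality \eqref{decreasing-property-0} of this lemma itself should hold for all $\sigma_i>0$, so in the proof I would not invoke that bound here.
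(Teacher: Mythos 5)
Your three-bracket decomposition of $\L^{k+1}-\L^{k}$ and the treatment of each bracket is exactly the paper's proof: $e_1^k$ is handled by the two cases $k\notin\K$ (where $\by^{k+1}=\by^{k}$) and $k\in\K$ (where the optimality condition for \eqref{ceadmm-sub1} gives $e_1^k=-\frac{\sigma}{2}\|\triangle\by^{k+1}\|^2$ exactly), $e_2^k$ by the local optimality condition plus a quadratic upper bound on $w_if_i(\bx_i^{k+1})-w_if_i(\bx_i^{k})$, and $e_3^k$ by $\sum_i\frac{1}{\sigma_i}\|\triangle\bpi_i^{k+1}\|^2$ and Lipschitz continuity. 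Two refinements are needed to land on the stated $\theta_i$. First, for $e_2^k$ you cite \eqref{Lip-r/2}, which anchors the expansion at the \emph{old} point with gradient $\nabla f_i(\bx_i^{k})$, but the optimality condition you substitute involves $\nabla f_i(\bx_i^{k+1})$; the paper instead uses the mean-value-theorem inequality \eqref{H-Lip-continuity-fxy}, $f(\bx)-f(\bz)-\langle\nabla f(\bx),\bx-\bz\rangle\leq\frac{r}{2}\|\bx-\bz\|^2$, anchored at the \emph{new} point, which makes the cross terms cancel exactly via \eqref{two-vecs} and yields the coefficient $\frac{w_ir_i}{2}$. If you apply \eqref{Lip-r/2} and then swap the gradient using Lipschitz continuity, you pick up an extra $w_ir_i\|\triangle\bx_i^{k+1}\|^2$ and end with $\sigma_i-3w_ir_i-2w_i^2r_i^2/\sigma_i$ rather than the claimed $\theta_i$. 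Second, the third bracket is simpler than you anticipate: since \eqref{ceadmm-sub3} turns the local optimality condition into $\bg_i^{k+1}+\bpi_i^{k+1}=0$ for every $k\geq0$ (not just $k\in\K$), the dual increment is exactly $\triangle\bpi_i^{k+1}=-(\bg_i^{k+1}-\bg_i^{k})$, so $\|\triangle\bpi_i^{k+1}\|\leq w_ir_i\|\triangle\bx_i^{k+1}\|$ with no $\by$-dependent pieces to absorb and no free parameter $t$ to tune; the factor $2$ in $2w_i^2r_i^2/\sigma_i$ is just the $\frac{1}{2}$ sitting in front of $\theta_i$. Your remark that $\sigma_i>2w_ir_i$ is not needed for the inequality itself, only later to make $\theta_i>0$, is correct and consistent with the lemma statement.
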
  
\begin{proof} The optimality condition for \eqref{ceadmm-sub1} is
 \begin{eqnarray}
 \label{opt-con-xk1}
 \arraycolsep=1.4pt\def\arraystretch{1.35}\begin{array}{lcl}
\forall~k\in\K:~~
  0 &=&  \sum_{i=1}^{m} [  -\bpi_i^k + \sigma_i (\bx^{k+1}-\bx_i^k)]\\
  &\overset{\eqref{ceadmm-sub3}}{=}&  \sum_{i=1}^{m} [  -\bpi_i^{k+1} + \sigma_i (\bx_i^{k+1}-\by^{k+1}) ] +  \sum_{i=1}^{m} [ \sigma_i (\bx^{k+1}-\bx_i^k)] \\
  &=&  \sum_{i=1}^{m} [  -\bpi_i^{k+1} + \sigma_i (\bx_i^{k+1}-\bx_i^k)], 
   \end{array} \end{eqnarray}  
 where the last equation holds from $\by^{k+1}=\bx^{k+1}, k\in\K$. The optimality condition for  \eqref{ceadmm-sub2} is   
   \begin{eqnarray}
 \label{opt-con-xik1}
\begin{array}{lcl}
\forall~k\geq0:~~0 =  \bg_i^{k+1} +\bpi_i^k + \sigma_i (\bx_i^{k+1}-\by^{k+1}) \overset{\eqref{ceadmm-sub3}}{=}  \bg_i^{k+1} +\bpi_i^{k+1},\qquad \forall~i\in[m]. 
   \end{array}
  \end{eqnarray}  
The gap of the left-hand side of \eqref{decreasing-property-0}   can be decomposed as 
  \begin{eqnarray} 
\label{three-cases}  
 \qquad   \begin{array}{llllll}
 \L^{k+1}-\L^{k}  =: e_1^k+e_2^k+e_3^k,
    \end{array} 
 \end{eqnarray} 
 with
   \begin{eqnarray}  \label{three-cases-sub}  
 \arraycolsep=1.4pt\def\arraystretch{1.35} \begin{array}{llllll}
e_1^k&:=&\L(\by^{k+1},X^{k},\Pi^{k})-\L^{k},\\
 e_2^k&:=& \L(\by^{k+1},X^{k+1},\Pi^{k})-\L(\by^{k+1},X^{k},\Pi^{k}), \\
 e_3^k&:=& \L^{k+1}-\L(\by^{k+1},X^{k+1},\Pi^{k}) .
    \end{array} 
 \end{eqnarray} 
To estimate  $e_1^k$, if  $k\notin\K$, then from \eqref{x-y-relation}, we have $\by^{k+1}= \bx^{\tau_k+1}=\by^{k}$, thereby leading to
\begin{eqnarray} 
\label{two-cases-1} \begin{array}{lll}  
   e_1^k =\L(\by^{k+1},X^{k},\Pi^{k})-\L^{k}=0 =- \frac{\sigma}{2}\|\triangle\by^{k+1} \|^2. 
   \end{array}
 \end{eqnarray}
If  $k\in\K$, then \eqref{x-y-relation}  indicates $\by^{k+1} = \bx^{k+1}$. Moreover, multiplying both sides of the first equation in \eqref{opt-con-xk1} by $\triangle\by^{k+1}$ gives rise to
 \begin{eqnarray}
 \label{opt-con-xk2}
   \begin{array}{lll}
 \sum_{i=1}^{m} \langle  \triangle\by^{k+1},\bpi_i^k  \rangle= \sum_{i=1}^{m} \langle\triangle\by^{k+1},  \sigma_i (\by^{k+1}-\bx_i^k)\rangle. 
\end{array}
  \end{eqnarray}  
These facts also allow  us to derive that
\begin{eqnarray} \label{two-cases-2}   
 \arraycolsep=1.4pt\def\arraystretch{1.35}
\begin{array}{lcl}
  e_1^k 
   &\overset{\eqref{FL-opt-ver11}}{=}& \sum_{i=1}^{m} ( L(\by^{k+1},\bx_i^{k},\bpi_i^{k})  - L(\by^{k},\bx_i^{k},\bpi_i^{k}))\\
    &\overset{\eqref{Def-L}}{=}&   \sum_{i=1}^{m} (  \langle \triangle\by^{k+1}, -\bpi_i^k\rangle +   \frac{\sigma_i}{2}\|\bx_i^k-\by^{k+1}\|^2 -\frac{\sigma_i}{2}\|\bx_i^k- \by^{k}\|^2) \\
&\overset{\eqref{opt-con-xk2}}{=}& \sum_{i=1}^{m} (  \langle  \triangle\by^{k+1},  \sigma_i(\bx_i^k-\by^{k+1}) \rangle +   \frac{\sigma_i}{2}\|\bx_i^k-\by^{k+1}\|^2 -\frac{\sigma_i}{2}\|\bx_i^k- \by^{k}\|^2 )\\
&\overset{\eqref{two-vecs}}{=}&  \sum_{i=1}^{m} ( -\frac{\sigma_i}{2}\| \triangle\by^{k+1}\|^2)\\ &\overset{\eqref{sum-sigma-i}}{=}& -\frac{\sigma}{2}\| \triangle\by^{k+1}\|^2.   
    \end{array} 
 \end{eqnarray}
To estimate $e_2^k$,  we have the following chain of inequalities,
\begin{eqnarray*} 
 \arraycolsep=1.4pt\def\arraystretch{1.35}  \begin{array}{lcl}
 &&     L(\by^{k+1},\bx_i^{k+1},\bpi_i^{k})  - L(\by^{k+1},\bx_i^{k},\bpi_i^{k})   \\
&\overset{\eqref{Def-L}}{=}&  w_i  f_i(\bx_i^{k+1})- w_i f_i(\bx_i^k)   +  \langle \triangle \bx_i^{k+1} , \bpi_i^k\rangle +    \frac{\sigma_i}{2}\|\bx_i^{k+1}-\by^{k+1}\|^2 -\frac{\sigma_i}{2}\|\bx_i^{k}-\by^{k+1}\|^2 \\
          &\overset{ \eqref{H-Lip-continuity-fxy}}{\leq}&     \langle  \triangle \bx_i^{k+1} , \bg_i^{k+1} + \bpi_i^k\rangle + \frac{w_ir_i}{2}\| \triangle\bx_i^{k+1}\|^2+ \frac{\sigma_i}{2}\|\bx_i^{k+1}-\by^{k+1}\|^2 -\frac{\sigma_i}{2}\|\bx_i^{k}-\by^{k+1}\|^2\\
   &\overset{ \eqref{opt-con-xik1}}{=}&  \langle \triangle\bx_i^{k+1},\sigma_i(\by^{k+1} -\bx_i^{k+1})\rangle + \frac{w_ir_i}{2}\| \triangle\bx_i^{k+1}\|^2  + \frac{\sigma_i}{2}\|\bx_i^{k+1}-\by^{k+1}\|^2 -\frac{\sigma_i}{2}\|\bx_i^{k}-\by^{k+1}\|^2   \\  
   &\overset{\eqref{two-vecs}}{=}&   -   \frac{\sigma_i-w_ir_i}{2}\| \triangle\bx_i^{k+1}\|^2,
    \end{array} 
 \end{eqnarray*}
 which leads to 
\begin{eqnarray}\label{gap-2}  
\begin{array}{lllll}
 e_2^k&=&  \sum_{i=1}^{m}  (     L(\by^{k+1},\bx_i^{k+1},\bpi_i^{k})  - L(\by^{k+1},\bx_i^{k},\bpi_i^{k}))\le\sum_{i=1}^{m}  -   \frac{\sigma_i-w_ir_i}{2}\| \triangle\bx_i^{k+1}\|^2.
    \end{array} 
 \end{eqnarray}
To estimate $e_3^k$,  the  gradient Lipschitz continuity delivers that, for any $i\in[m]$,
 \begin{eqnarray}\label{Lip-fi} 
  \begin{array}{lll}
\| \triangle \bpi_i^{k+1} \|\overset{\eqref{opt-con-xik1}}{=} \|-\bg_i^{k+1}+\bg_i^{k}\| \overset{\eqref{Lip-r}}{\leq}  w_i r_i\| \triangle\bx_i^{k+1}\|.
\end{array}
  \end{eqnarray}
This condition suffices to the following chain of inequalities,
\begin{eqnarray} \label{gap-3} 
 \arraycolsep=1.4pt\def\arraystretch{1.35}  \begin{array}{lcl}
e_3^k
 &= &  \L^{k+1}- \sum_{i=1}^{m}  L(\by^{k+1},\bx_i^{k+1},\bpi_i^{k})  \\
&\overset{\eqref{Def-L}}{=}&    \sum_{i=1}^{m} \langle \bx_i^{k+1}-\by^{k+1}, \triangle \bpi_i^{k+1} \rangle \\
       &\overset{ \eqref{ceadmm-sub3}}{=}&   \sum_{i=1}^{m}   \frac{1}{\sigma_i}\|\triangle \bpi_i^{k+1}  \|^2\\
       & \overset{\eqref{Lip-fi}}{\leq}   &\sum_{i=1}^{m}  \frac{w_i^2r_i^2}{\sigma_i}\|\triangle\bx_i^{k+1}\|^2, 
    \end{array} 
 \end{eqnarray}
 Overall, combining \eqref{three-cases}, \eqref{two-cases-1}, \eqref{two-cases-2}, \eqref{gap-2} and \eqref{gap-3}, we conclude
\eqref{decreasing-property-0}   immediately.
\end{proof}
 \begin{lemma}\label{L-bounded-decreasing}   Let $\{(\by^{k},X^{k},\Pi^{k})\}$ be the sequence generated by Algorithm \ref{algorithm-CEADMM} with $\sigma_i> 2w_ir_i$ for every $i\in[m]$. The following results hold under Assumption \ref{ass-fi}.
 \begin{itemize}
 \item[i)] $\{\L^{k}\}$ is non-increasing. 
 \item[ii)] $\L^{k} \geq f (\by^{k}) \geq f^* >-\infty$ for any integer $ k\geq0$.
 \item[iii)]The limits of all the following terms are zero, namely,  
\begin{eqnarray} \label{limit-5-term-0}
 \arraycolsep=1.4pt\def\arraystretch{1.35} \begin{array}{lll}\underset{k \rightarrow \infty}{\lim}  \triangle \by^{k+1}&=& \underset{k \rightarrow \infty}{\lim} \triangle \bx^{k+1}_i=\underset{k \rightarrow \infty}{\lim} \triangle \bpi_i^{k+1}\\
 &=&\underset{k \rightarrow \infty}{\lim} (\bx_i^{k+1}-\by^{k+1})=\underset{k \rightarrow \infty}{\lim} (\bx_i^{k+1}-\bx_i^{\tau_k+1})=0.
    \end{array} 
 \end{eqnarray}
 \end{itemize}
 
\end{lemma}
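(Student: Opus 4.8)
The plan is to derive all three parts from the one–step descent estimate of Lemma \ref{lemma-decreasing-0}, once we know its coefficients are strictly positive. First I would check that $\sigma_i>2w_ir_i$ forces $\theta_i>0$ in \eqref{def-ci}: setting $t:=\sigma_i/(w_ir_i)>2$ gives $\theta_i=w_ir_i\big(t-1-\tfrac{2}{t}\big)=\tfrac{w_ir_i}{t}(t-2)(t+1)>0$. Since moreover $\sigma=\sum_{i=1}^m\sigma_i>0$, the right-hand side of the estimate in Lemma \ref{lemma-decreasing-0} is nonpositive, so $\L^{k+1}\le\L^{k}$ for every $k\ge0$, which is part (i).

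For part (ii) the key tool is the local optimality identity \eqref{opt-con-xik1}, which yields $\bpi_i^{k}=-\bg_i^{k}=-w_i\nabla f_i(\bx_i^{k})$ for all $k\ge1$. Plugging this into $\L^{k}=\sum_{i=1}^m L(\by^{k},\bx_i^{k},\bpi_i^{k})$ through \eqref{Def-L}, and then using the descent inequality furnished by Assumption \ref{ass-fi}, namely $w_if_i(\by^{k})\le w_if_i(\bx_i^{k})+\langle\bg_i^{k},\by^{k}-\bx_i^{k}\rangle+\tfrac{w_ir_i}{2}\|\bx_i^{k}-\by^{k}\|^2$, one obtains
\[
\L^{k}\ \ge\ f(\by^{k})+\sum_{i=1}^m\frac{\sigma_i-w_ir_i}{2}\,\|\bx_i^{k}-\by^{k}\|^2\ \ge\ f(\by^{k})\ \ge\ f^{*},
\]
where the middle inequality uses $\sigma_i>2w_ir_i>w_ir_i$ and the last one is the definition of $f^{*}$ in \eqref{FL-opt-lower-bound}; the $k=0$ case is then covered by part (i) together with $\L^{1}\ge f^{*}$.

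Part (iii) follows by telescoping. By (i) and (ii) the sequence $\{\L^{k}\}$ is non-increasing and bounded below, hence convergent, so $\L^{k}-\L^{k+1}\to0$; summing the estimate of Lemma \ref{lemma-decreasing-0} over $k=0,1,\dots$ gives $\sum_{k\ge0}\big(\tfrac{\sigma}{2}\|\triangle\by^{k+1}\|^2+\sum_{i=1}^m\tfrac{\theta_i}{2}\|\triangle\bx_i^{k+1}\|^2\big)\le\L^{0}-f^{*}<\infty$, and since $\sigma>0$ and every $\theta_i>0$, each general term tends to $0$, so $\triangle\by^{k+1}\to0$ and $\triangle\bx_i^{k+1}\to0$. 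The remaining three limits follow in turn: $\|\triangle\bpi_i^{k+1}\|\le w_ir_i\|\triangle\bx_i^{k+1}\|\to0$ by \eqref{Lip-fi}; $\bx_i^{k+1}-\by^{k+1}=\tfrac{1}{\sigma_i}\triangle\bpi_i^{k+1}\to0$ by the dual update \eqref{ceadmm-sub3}; and, since $\tau_k+1\le k+1\le\tau_k+k_0$, the telescoped difference $\bx_i^{k+1}-\bx_i^{\tau_k+1}=\sum_{j=\tau_k+1}^{k}\triangle\bx_i^{j+1}$ is a sum of at most $k_0-1$ terms each tending to $0$, hence tends to $0$.

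The only genuinely delicate point I anticipate is part (ii): after eliminating $\bpi_i^{k}$ via \eqref{opt-con-xik1}, one must track signs carefully so that the cross terms recombine into the manifestly nonnegative quantity $\tfrac{\sigma_i-w_ir_i}{2}\|\bx_i^{k}-\by^{k}\|^2$. Everything else is routine once Lemma \ref{lemma-decreasing-0} and the positivity $\theta_i>0$ are in hand.
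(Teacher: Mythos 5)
Your proposal is correct and follows essentially the same route as the paper: positivity of $\theta_i$ under $\sigma_i>2w_ir_i$ gives monotonicity of $\{\L^k\}$, the optimality condition \eqref{opt-con-xik1} plus gradient Lipschitz continuity gives the lower bound $\L^k\ge f(\by^k)\ge f^*$, and telescoping the descent estimate yields the summability that forces all five limits to vanish. The only (harmless) deviation is the last limit: you write $\bx_i^{k+1}-\bx_i^{\tau_k+1}$ as a sum of at most $k_0-1$ increments $\triangle\bx_i^{j+1}$, whereas the paper bounds it via $\|\triangle\bpi_i^{k+1}\|^2+\|\triangle\bpi_i^{\tau_k+1}\|^2$ using $\by^{k+1}=\by^{\tau_k+1}$; both arguments are valid.
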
  
\begin{proof} i) The conclusion follows from \eqref{decreasing-property-0} and  $\theta_i >0$ (due to $\sigma_i>2w_ir_i$) immediately.

 ii)  The gradient Lipschitz continuity of $f_i$   implies
\begin{eqnarray*} 
 \arraycolsep=1.4pt\def\arraystretch{1.35} \begin{array}{lllll}
 w_if_i(\by^{k+1})-w_if_i(\bx_i^{k+1})
  &\overset{\eqref{Lip-r/2}}{\leq}&  \langle \by^{k+1}- \bx_i^{k+1},  \bg_i^{k+1}\rangle +\frac{r_iw_i}{2}\|\bx_i^{k+1}-\by^{k+1}\|^2 \\
 &\overset{\eqref{opt-con-xik1}}{=}&       \langle \by^{k+1}- \bx_i^{k+1},  -\bpi_i^{k+1} \rangle +\frac{r_iw_i}{2}\|\bx_i^{k+1}-\by^{k+1}\|^2,
    \end{array} 
 \end{eqnarray*}
which by  $\sigma_i>2w_ir_i$  allows us to obtain
\begin{eqnarray}  \label{lower-bound-L}
   \arraycolsep=1.4pt\def\arraystretch{1.35}
  \begin{array}{lll}
\L^{k+1}
 &=&  \sum_{i=1}^{m}  L(\by^{k+1},\bx_i^{k+1},\bpi_i^{k+1})  \\
    &=&  \sum_{i=1}^{m} (w_i  f_i(\bx_i^{k+1})+      \langle \bx_i^{k+1}-\by^{k+1}, \bpi_i^{k+1}\rangle)+ \sum_{i=1}^{m} \frac{\sigma_i}{2}\|\bx_i^{k+1}-\by^{k+1}\|^2 \\
     &\geq&  \sum_{i=1}^{m} ( w_i  f_i(\by^{k+1})        +\frac{\sigma_i-w_ir_i}{2}\|\bx_i^{k+1}-\by^{k+1}\|^2 ) \\
&\geq&  \sum_{i=1}^{m}    w_i  f_i(\by^{k+1})  =f(\by^{k+1}) \\
     & \geq& f^*   \overset{\eqref{FL-opt-lower-bound}}{ >} -\infty.
    \end{array} 
 \end{eqnarray}

 iii)  It follows from \eqref{decreasing-property-0}  that
 \begin{eqnarray*} 
 \arraycolsep=1.4pt\def\arraystretch{1.35}  \begin{array}{lcl}
 {\sum}_{k\geq0} \Big[ \frac{\sigma}{2}\| \vartriangle \by^{k+1}  \|^2 + ~ \sum_{i=1}^{m}   \frac{\theta_i   }{2} \| \triangle\bx^{k+1}_i   \|^2 \Big]
&\leq&{\sum}_{k\geq0}  ( \L^{k}  -\L^{k+1})\\
& =&
  \L^{0}- {\lim}_{k\rightarrow\infty}\L^{k+1}\overset{\eqref{lower-bound-L}}{<}+\infty. 
 \end{array}  
 \end{eqnarray*} 
The above condition means 
 $  \| \triangle \by^{k+1}   \|\rightarrow0$ and $\| \triangle \bx^{k+1}_i  \|\rightarrow0,$ yielding  $ \| \triangle \bpi_i^{k+1}  \| \rightarrow0$ by \eqref{Lip-fi}.   This contributes to
 \begin{eqnarray*}  
 \begin{array}{llll}
 ~~{\lim}_{k\rightarrow\infty} \|\sigma_i (\bx_i^{k+1}-\by^{k+1})\|  
   \overset{ \eqref{ceadmm-sub3}}{ =}{\lim}_{k\rightarrow\infty} \|  \triangle \bpi_i^{k+1} \| =0.
    \end{array}
  \end{eqnarray*}
Since $\tau_k \leq  k <\tau_{k}+k_0$, we have $\by^{k+1}=\bx^{\tau_k+1}=\by^{\tau_k+1}$ from \eqref{x-y-relation}. As a result,  
\begin{eqnarray}  \label{xk-xtauk}
 \arraycolsep=1.4pt\def\arraystretch{1.35} \begin{array}{lcl}
 \| \sigma_i(\bx^{k+1}_i -\bx^{\tau_k+1}_i) \|^2 
  &=& \|\sigma_i( \bx^{k+1}_i - \by^{k+1} + \by^{\tau_k+1} - \bx^{\tau_k+1}_i )\|^2 \\ 
    &\overset{\eqref{two-vecs}}{\leq}& 2\| \sigma_i(\bx^{k+1}_i - \by^{k+1})\|^2 + 2\|\sigma_i(\by^{\tau_k+1} - \bx^{\tau_k+1}_i) \|^2 \\ 
    &  \overset{ \eqref{ceadmm-sub3}}{ =}& 2\| \triangle \bpi^{k+1}_i\|^2 + 2\| \triangle \bpi^{\tau_k+1}_i\|^2, 
    \end{array}  
 \end{eqnarray} 
which together with $\triangle \bpi^{k+1}_i\rightarrow0 $ yields $ (\bx^{k+1}_i -\bx^{\tau_k+1}_i)\rightarrow0$. 
Hence, the whole proof is finished.
\end{proof}

 \subsubsection{Proof of Theorem \ref{global-obj-convergence-exact}}
 \begin{proof} i) It follows from Lemma \ref{L-bounded-decreasing} that $\{\L^k\}$ is non-increasing and bounded from below. Therefore, the whole sequence  $\{\L^k\}$ converges.  Since 
\begin{eqnarray} \label{L-f-pi} 
 \arraycolsep=1.4pt\def\arraystretch{1.35}\begin{array}{lcl}
 \L^{k}- F(X^{k}) 
    &\overset{\eqref{FL-opt-ver11}}{=}&  \sum_{i=1}^{m}  (  \langle\bx_i^{k}-\by^{k}, \bpi_i^{k}\rangle +\frac{\sigma_i}{2}\|\bx_i^{k}-\by^{k}\|^2 ) \\
      & \overset{ \eqref{ceadmm-sub3}}{=}& \sum_{i=1}^{m}  ( \frac{1}{\sigma_i}  \langle \triangle \bpi_i^{k} , \bpi_i^{k}\rangle +\frac{1}{2\sigma_i}\|\triangle \bpi_i^{k} \|^2)    \\
         & {=}&  \sum_{i=1}^{m}  ( \frac{1}{2\sigma_i} \|\bpi_i^{k} \|^2  - \frac{1}{2\sigma_i}\| \bpi_i^{k-1}\|^2+\frac{1}{\sigma_i}\|\triangle \bpi_i^{k} \|^2), 
    \end{array} 
 \end{eqnarray}     
and  $  \triangle \bpi_i^{k}\rightarrow0$ in \eqref{limit-5-term-0}, we obtain 
\begin{eqnarray*} 
\qquad   \begin{array}{lll}{\lim}_{k \rightarrow \infty}   F(X^{k})   ={\lim}_{k \rightarrow \infty}  \L^{k}.   \end{array} 
 \end{eqnarray*}  
It follows from Mean Value Theory that
 $$f_i(\by^{k})=f_i(\bx_i^{k})+\langle\by^{k}-\bx_i^{k} , \nabla f_i(\bx_i^{k}(\alpha))\rangle,$$ 
 where $\bx_i^{k}(\alpha):=(1-\alpha)\by^{k}+\alpha\bx_i^{k}$ for some $\alpha\in(0,1)$. The above relation results in
\begin{eqnarray}  \label{f-x-f-y-exact}
 \arraycolsep=1.4pt\def\arraystretch{1.35}  \begin{array}{lll}
  w_if_i(\bx_i^{k})+\langle  \bx_i^{k}-\by^{k},\bpi_i^{k}\rangle
&\overset{\eqref{opt-con-xik1}}{=}& w_if_i(\bx_i^{k})+\langle\bx_i^{k}-\by^{k}, -\bg_i^{k}\rangle \\
  &=&w_i f_i(\by^{k}) +  \langle \bx_i^{k}-\by^{k},w_i\nabla f_i(\bx_i^{k}(\alpha))-\bg_i^{k} \rangle.   
   \end{array} 
 \end{eqnarray}
 The  gradient Lipschitz continuity of $f_i$  yields that   
 \begin{eqnarray*}  
  \begin{array}{lll}\|w_i\nabla f_i(\bx_i^{k}(\alpha))-\bg_i^{k}\|&\leq& w_ir_i\|\bx_i^{k}(\alpha)-\bx_i^{k} \|
  = w_ir_i(1-\alpha) \|\by^{k}-\bx_i^{k}\|,
     \end{array} 
 \end{eqnarray*}
 which by  $(\by^{k}-\bx_i^{k})\rightarrow0$ in \eqref{limit-5-term-0} leads to
 \begin{eqnarray*}  
  \begin{array}{lll}&&\lim_{k \rightarrow \infty} (w_i\nabla f_i(\bx_i^{k}(\alpha))-\bg_i^{k}) =0.    \end{array} 
 \end{eqnarray*}  
This, \eqref{f-x-f-y-exact} and  $(\by^{k}-\bx_i^{k})\rightarrow0$ bring out
 \begin{eqnarray*}  
  \begin{array}{lll} 
  \lim_{k \rightarrow \infty} (w_i f_i(\bx_i^{k})+\langle \bx_i^{k}-\by^{k}, \bpi_i^{k} \rangle ) = \lim_{k \rightarrow \infty} w_i f_i(\by^{k}).     \end{array} 
 \end{eqnarray*}  
The above fact and  \eqref{limit-5-term-0}  enable us to obtain
\begin{eqnarray*}  
 \arraycolsep=1.4pt\def\arraystretch{1.35} \begin{array}{lll}
 {\lim}_{k \rightarrow \infty}  \L^{k}
    &=&  {\lim}_{k \rightarrow \infty}   \sum_{i=1}^{m}  ( w_i  f_i(\bx_i^{k})+      \langle \bx_i^{k}-\by^{k}, \bpi_i^{k}\rangle +\frac{\sigma_i}{2}\|\bx_i^{k}-\by^{k}\|^2  )  \\
    &=&  {\lim}_{k \rightarrow \infty}   \sum_{i=1}^{m}     w_i  f_i(\by^{k})  =  {\lim}_{k \rightarrow \infty}   f(\by^{k}).
    \end{array} 
 \end{eqnarray*}      
ii) Direct verifications render that
 \begin{eqnarray*} 
 \arraycolsep=1.4pt\def\arraystretch{1.35}  \begin{array}{lcl}
{\lim}_{k \rightarrow \infty} \|\bpi_i^{k+1}-\bpi_i^{\tau_k+1}\|
 &\overset{\eqref{opt-con-xik1}}{=}&   {\lim}_{k \rightarrow \infty} \| -\bg_i^{k+1}+\bg_i^{\tau_k+1}\|  \\
    &\overset{\eqref{Lip-r}}{\leq}&   {\lim}_{k \rightarrow \infty} w_ir_i\| \bx^{k+1}_i-\bx^{\tau_k+1}_i\| \\
    &\overset{\eqref{limit-5-term-0}}{=}&0.
   \end{array}
  \end{eqnarray*} 
Using this condition and the following one
 \begin{eqnarray*} 
  \begin{array}{lcl}
 0 &\overset{\eqref{opt-con-xk1}}{=}&   {\lim}_{\tau_k \rightarrow \infty}  \sum_{i=1}^{m} ( -\bpi_i^{\tau_k+1}+ \sigma_i \triangle\bx_i^{\tau_k+1} ) \overset{\eqref{limit-5-term-0}}{=} {\lim}_{\tau_k \rightarrow \infty}  \sum_{i=1}^{m} - \bpi_i^{\tau_k+1} \end{array}\end{eqnarray*} 
can claim that
  \begin{eqnarray} \label{limit-K-grad-pi-exact}
  \begin{array}{llllllll}
   {\lim}_{k\rightarrow \infty}  \sum_{i=1}^{m}  \bpi_i^{k+1} =0.
    \end{array}\end{eqnarray} 
Taking the limit on both sides of \eqref{opt-con-xik1} gives us
 \begin{eqnarray} \label{limit-K-grad-i-exact}
\arraycolsep=1.4pt\def\arraystretch{1.35}
  \begin{array}{lcl}
0&=&  {\lim}_{k \rightarrow \infty} \sum_{i=1}^{m} ~ (\bg_i^{k+1}   +\bpi_i^{k+1})\\
  &\overset{\eqref{limit-K-grad-pi-exact}}{=}& {\lim}_{k\rightarrow \infty} \sum_{i=1}^{m}  \bg_i^{k+1}= {\lim}_{k \rightarrow \infty}   \nabla F(X^{k+1}), 
   \end{array}
  \end{eqnarray} 
which together with $ (\bx_i^{k+1}-\by^{k+1})\rightarrow0$ and the gradient Lipschitz continuity yields that
 \begin{eqnarray*} 
  \begin{array}{llllllll}
  0 =   {\lim}_{k \rightarrow \infty} \sum_{i=1}^{m}  w_i \nabla f_i(\by^{k+1})={\lim}_{k\rightarrow \infty} \nabla f (\by^{k+1}),
   \end{array}
  \end{eqnarray*} 
 completing the whole proof. \end{proof}
 
 \subsubsection{Proof of Theorem \ref{global-convergence-exact} }  
 \begin{proof} 
i) It follows from Lemma \ref{L-bounded-decreasing}  i) and \eqref{lower-bound-L} that
\begin{eqnarray} 
  \begin{array}{lll}
\L^{0} \geq  \L^{k+1}  \geq   \sum_{i=1}^{m}    w_i  f_i(\by^{k+1}) = f(\by^{k+1}), 
    \end{array} 
 \end{eqnarray}
which  implies $\by^{k+1}\in\S(\L^{0})$ and hence  $\{\by^{k+1}\}$ is bounded due to  the boundedness of $\S(\L^{0})$.  This calls forth the boundedness of $\{\bx_i^{k+1}\}$ as $ (\bx_i^{k+1}-\by^{k+1})\rightarrow0$ from \eqref{limit-5-term-0}. The boundedness of $\{\bpi_i^{k+1}\}$ can be ensured because of
  \begin{eqnarray*} 
 \begin{array}{lcl}
 \|\bpi_i^{k+1}\|  \overset{\eqref{opt-con-xik1}}{=}   \|\bg_i^{k+1}\|    \leq   \|\bg_i^{k+1} - \bg_i^{0}\|+ \|\bg_i^{0}\|   
  \overset{\eqref{Lip-r}}{\leq}w_i r_i \|  \bx^{k+1}_i-\bx^{0}_i\|+ \|\bg_i^{0}\| <+\infty,  
     \end{array} 
 \end{eqnarray*}
where `$<$' is from the boundedness of $\{\bx_i^{k+1}\}$. Overall,  sequence $\{(\by^{k+1},X^{k+1},\Pi^{k+1})\}$ is bounded. Let $(\by^{\infty},X^{\infty},\Pi^{\infty})$ be any accumulating point of the sequence, 
 it follows from \eqref{limit-K-grad-i-exact}, \eqref{limit-K-grad-pi-exact},  and $ (\bx_i^{k+1}-\by^{k+1})\rightarrow0$ that 
 \begin{eqnarray*}
  \begin{array}{l}
0= w_i \nabla f_i(\bx^{\infty}_i) + \bpi_i^{\infty}=  \sum_{i=1}^{m} \bpi_i^{\infty}=\bx^{\infty}_i- \by^{\infty}.
   \end{array}
  \end{eqnarray*} 
  Therefore, recalling \eqref{opt-con-FL-opt-ver1},  $(\by^{\infty},X^{\infty},\Pi^{\infty})$  is a stationary point of  problem  (\ref{FL-opt-ver1}) and $\by^{\infty}$ is a stationary point of  problem  (\ref{FL-opt}). 
  
It follows from \cite[Lemma 4.10]{more1983computing},  $\triangle \by^{k} \rightarrow0$ and $\by^{\infty}$ being isolated that  the whole sequence, $\{\by^{k}\}$ converges to $\by^{\infty}$, which by $(\by^k-\bx_i^k)\rightarrow0$ implies that sequence $\{X^{k}\}$ also converges to $X^{\infty}$. Finally, this and $\bpi_i^k=-\bg_i^{k}$ result in the convergence of  sequence  $\{\Pi^k\}$.
  \end{proof}

 \subsubsection{Proof of Corollary \ref{L-global-convergence}}
 
\begin{proof} i) The convexity of $f$ and the optimality of $\bx^*$ lead to
    \begin{eqnarray}  \label{convexity-optimality}
   \begin{array}{lll}
f(\by^{k}) \geq  f(\bx^{*}) \geq  f(\by^{k}) + \langle \nabla f(\by^{k}), \bx^{*}-\by^{k} \rangle. 
    \end{array} 
 \end{eqnarray} 
Theorem \ref{global-obj-convergence-exact} ii) states that   
   \begin{eqnarray*}  
   \begin{array}{lll}
 {\lim}_{k \rightarrow \infty} \nabla F(X^{k})  ={\lim}_{k \rightarrow\infty} \nabla f(\by^{k}) =0.
    \end{array} 
 \end{eqnarray*} 
 Using this and the boundedness of $\{\by^k\}$ from Theorem \ref{global-convergence-exact}, we take the limit of both sides of \eqref{convexity-optimality} to derive that  $ f(\by^{k})\rightarrow f(\bx^{*})$, which recalling Theorem  \ref{global-obj-convergence-exact} i) yields \eqref{L-global-convergence-limit}. 
 
 ii) The conclusion follows from Theorem  \ref{global-convergence-exact} ii) and the fact that  the stationary points are equivalent to optimal solutions if $f$ is convex.
 
 iii)   The strong convexity of $f$ means that
 there is a positive constance $\nu$ such that 
 \begin{eqnarray*}  
  \begin{array}{llll}
  f( \by^k) -f( \bx^*)
  \geq  \langle \nabla f( \bx^*), \by^k-\bx^*\rangle + \frac{\nu}{2}\|\by^k-\bx^*\|^2=  \frac{\nu}{2}\|\by^k-\bx^*\|^2, 
    \end{array} 
 \end{eqnarray*}
where the equality is due to \eqref{grad-x-*=0}. Taking limit of both sides of the above inequality  shows $ \by^k\rightarrow\bx^*$ since $ f( \by^k) \rightarrow f( \bx^*)$. This together with \eqref{limit-5-term-0} yields $ \bx_i^k\rightarrow\bx^*$. Finally, $ \bpi_i^k\rightarrow\bpi_i^*$ because of   $$\|\bpi_i^{k}-\bpi_i^{*}\|\overset{\eqref{opt-con-FL-opt-ver1}}{=}\|\bg_i^{k}-w_i \nabla f_i(\bx^*)\| \overset{\eqref{Lip-r}}{\leq}  w_ir_i\| \bx^{k}_i-\bx^*\|,$$ 
 displaying the desired result.
\end{proof}
 
 \begin{lemma}\label{grad-L-bounded}  Let $\{(\by^{k},X^{k},\Pi^{k})\}$ be the sequence generated by Algorithm \ref{algorithm-CEADMM} with $\sigma_i> 2w_ir_i$ for every $i\in[m]$. If Assumption \ref{ass-fi} holds, then   
\begin{eqnarray}  \label{grad-L-bounded-eq}
  \begin{array}{lllll}
\max\{\|\nabla  F(X^{k+1})\|^2,  \|\nabla  f(\by^{k+1})\|^2\}
&\leq&\sum_{i=1}^m   {2m\sigma_i^2}  (\| \triangle \bx^{k+1}_i\|^2 + \|\triangle\bx_i^{\tau_k+1}\|^2). 
    \end{array}  
 \end{eqnarray} 
\end{lemma}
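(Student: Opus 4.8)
The plan is to rewrite both gradient quantities on the left-hand side of \eqref{grad-L-bounded-eq} in terms of the dual variables via the optimality conditions \eqref{opt-con-xk1}--\eqref{opt-con-xik1}, and then to control those dual quantities by the two primal increments $\triangle\bx_i^{k+1}$ and $\triangle\bx_i^{\tau_k+1}$ using the Lipschitz estimate \eqref{Lip-fi} and the slack from $\sigma_i>2w_ir_i$. Two facts are used repeatedly: (a) $\by^{k+1}=\by^{\tau_k+1}$, since $\tau_k\leq k<\tau_k+k_0$, so \eqref{x-y-relation} gives $\by^{k+1}=\bx^{\tau_k+1}=\by^{\tau_k+1}$; and (b) $\tau_k\in\K$, so the aggregation optimality \eqref{opt-con-xk1} holds at step $\tau_k$ and yields $\sum_{i=1}^{m}\bpi_i^{\tau_k+1}=\sum_{i=1}^{m}\sigma_i\triangle\bx_i^{\tau_k+1}$. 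Also, \eqref{opt-con-xik1} gives $\bg_i^{j+1}=-\bpi_i^{j+1}$ for every $j\geq0$ and every $i$.

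For $\|\nabla f(\by^{k+1})\|^2$: using (a), write $\nabla f(\by^{k+1})=\sum_i w_i\nabla f_i(\by^{\tau_k+1})=\sum_i(w_i\nabla f_i(\by^{\tau_k+1})-\bg_i^{\tau_k+1})+\sum_i\bg_i^{\tau_k+1}$. By (b) and $\bg_i^{\tau_k+1}=-\bpi_i^{\tau_k+1}$ the second sum equals $-\sum_i\sigma_i\triangle\bx_i^{\tau_k+1}$, while each summand of the first sum has norm at most $w_ir_i\|\by^{\tau_k+1}-\bx_i^{\tau_k+1}\|=(w_ir_i/\sigma_i)\|\triangle\bpi_i^{\tau_k+1}\|\leq(w_i^2r_i^2/\sigma_i)\|\triangle\bx_i^{\tau_k+1}\|$, where the equality uses \eqref{ceadmm-sub3} and the last inequality uses \eqref{Lip-fi}. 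Hence $\nabla f(\by^{k+1})=\sum_i\bv_i$ with $\|\bv_i\|\leq(\sigma_i+w_i^2r_i^2/\sigma_i)\|\triangle\bx_i^{\tau_k+1}\|\leq\frac{5}{4}\sigma_i\|\triangle\bx_i^{\tau_k+1}\|$ (using $w_ir_i<\sigma_i/2$), and $\|\nabla f(\by^{k+1})\|^2\leq m\sum_i\|\bv_i\|^2\leq 2m\sum_i\sigma_i^2\|\triangle\bx_i^{\tau_k+1}\|^2$ by \eqref{two-vecs}, which is bounded above by the right-hand side of \eqref{grad-L-bounded-eq}.

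For $\|\nabla F(X^{k+1})\|^2=\|\sum_i\bg_i^{k+1}\|^2=\|\sum_i\bpi_i^{k+1}\|^2$: split $\sum_i\bpi_i^{k+1}=\sum_i\bpi_i^{\tau_k+1}+\sum_i(\bpi_i^{k+1}-\bpi_i^{\tau_k+1})$; the first sum is $\sum_i\sigma_i\triangle\bx_i^{\tau_k+1}$ by (b). For the drift, $\bpi_i^{k+1}-\bpi_i^{\tau_k+1}=-\bg_i^{k+1}+\bg_i^{\tau_k+1}=-w_i(\nabla f_i(\bx_i^{k+1})-\nabla f_i(\bx_i^{\tau_k+1}))$, so $\|\bpi_i^{k+1}-\bpi_i^{\tau_k+1}\|\leq w_ir_i\|\bx_i^{k+1}-\bx_i^{\tau_k+1}\|$, and the already-established bound \eqref{xk-xtauk} together with \eqref{Lip-fi} gives $\sigma_i^2\|\bx_i^{k+1}-\bx_i^{\tau_k+1}\|^2\leq 2w_i^2r_i^2(\|\triangle\bx_i^{k+1}\|^2+\|\triangle\bx_i^{\tau_k+1}\|^2)$. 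Thus $\nabla F(X^{k+1})=-\sum_i\bw_i$ with $\bw_i=\sigma_i\triangle\bx_i^{\tau_k+1}-w_i(\nabla f_i(\bx_i^{k+1})-\nabla f_i(\bx_i^{\tau_k+1}))$, and $w_ir_i<\sigma_i/2$ yields $\|\bw_i\|^2\leq 2\sigma_i^2(\|\triangle\bx_i^{k+1}\|^2+\|\triangle\bx_i^{\tau_k+1}\|^2)$; applying $\|\sum_i\bw_i\|^2\leq m\sum_i\|\bw_i\|^2$ from \eqref{two-vecs} closes this case.

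The main obstacle is the $\nabla F$ term. Unlike $\nabla f(\by^{k+1})$, which depends only on data available at the last synchronization, $\sum_i\bpi_i^{k+1}$ at a non-communication step is not tied to a single increment, and one must bound the accumulated drift of the multipliers over the current block. The resolution is that, by the local optimality \eqref{opt-con-xik1}, this drift is exactly the gradient difference $-w_i(\nabla f_i(\bx_i^{k+1})-\nabla f_i(\bx_i^{\tau_k+1}))$, and the primal drift $\bx_i^{k+1}-\bx_i^{\tau_k+1}$ is itself controlled --- through \eqref{xk-xtauk} --- by precisely the two increments $\triangle\bx_i^{k+1}$ and $\triangle\bx_i^{\tau_k+1}$ that appear in the statement. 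Keeping the numerical constants in check also forces essential use of the strict inequality $\sigma_i>2w_ir_i$ (so that $w_i^2r_i^2/\sigma_i^2\leq 1/4$) when absorbing the cross terms.
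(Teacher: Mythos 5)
Your proof is correct and follows essentially the same route as the paper's: both decompose $\nabla F(X^{k+1})$ and $\nabla f(\by^{k+1})$ around the last synchronization index $\tau_k$, use the aggregation optimality condition at $\tau_k\in\K$ to turn $\sum_{i}\bpi_i^{\tau_k+1}$ into $\sum_{i}\sigma_i\triangle\bx_i^{\tau_k+1}$, and control the drift within the block via \eqref{Lip-fi}, \eqref{xk-xtauk} and the slack $\sigma_i>2w_ir_i$. The only difference is bookkeeping --- you apply the triangle inequality per index and then invoke $\|\sum_i\ba_i\|^2\leq m\sum_i\|\ba_i\|^2$ once, whereas the paper splits the aggregated sums with the weighted Young inequality from \eqref{two-vecs} --- and both arrive at the same constant $2m\sigma_i^2$ (your per-index bound for the $\nabla F$ term does require an unbalanced weight in the Young inequality, since the balanced split gives $9/4$ rather than $2$, but the claim is true as stated).
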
  
\begin{proof}
 Following the fact \eqref{Lip-fi}, for any $j\geq1$, there is
  \begin{eqnarray}\label{Lip-dpi-dxi} 
  \begin{array}{lllll}\| \triangle \bpi_i^{j+1} \|^2    \leq  w_i^2 r_i^2\| \triangle\bx_i^{j+1}\|^2 \leq \frac{\sigma_i^2}{4}\| \triangle\bx_i^{j+1}\|^2.
\end{array}   \end{eqnarray}
This results in
\begin{eqnarray} \label{gradient-of-xk-xtuak}
 \arraycolsep=1.4pt\def\arraystretch{1.35}  \begin{array}{lcl}
  \| \nabla F(X^{k+1})-\nabla F(X^{\tau_k+1}) \|^2
  &=&\|\sum_{i=1}^m  (\bg_i^{k+1}-\bg_i^{\tau_k+1})  \|^2  \\ 
 &\overset{\eqref{Lip-r},\eqref{two-vecs}}{\leq}&m\sum_{i=1}^m   w_i^2r_i^2\| \bx^{k+1}_i -\bx^{\tau_k+1}_i\|^2 \\
  &\overset{\sigma_i>2w_ir_i}{\leq}&m\sum_{i=1}^m \frac{\sigma_i^2}{4}\| \bx^{k+1}_i -\bx^{\tau_k+1}_i\|^2\\
 &\overset{\eqref{xk-xtauk}}{\leq}&m\sum_{i=1}^m   (\frac{1}{2} \| \triangle \bpi^{k+1}_i\|^2 + \frac{1}{2} \| \triangle \bpi^{\tau_k+1}_i\|^2  )  \\
 &\overset{ \eqref{Lip-dpi-dxi}}{\leq}&m\sum_{i=1}^m   (\frac{\sigma_i^2}{8} \| \triangle \bx^{k+1}_i\|^2 + \frac{\sigma_i^2}{8} \| \triangle \bx^{\tau_k+1}_i\|^2  ).
    \end{array}  
 \end{eqnarray}
The optimality conditions contribute to 
  \begin{eqnarray}\label{condition-F-dxi} 
 \arraycolsep=1.4pt\def\arraystretch{1.35}  \begin{array}{lcl}
\|\nabla F(X^{\tau_k+1})\|^2&=&\|\sum_{i=1}^m  \bg_i^{\tau_k+1}\|^2\\
&\overset{\eqref{opt-con-xik1}}{=}&\|-\sum_{i=1}^m  \bpi^{\tau_k+1}_i  \|^2\\
&\overset{\eqref{opt-con-xk1}}{=}&  \|- \sum_{i=1}^m  \sigma_i \triangle \bx_i^{\tau_k+1}\|^2\\
&\overset{\eqref{two-vecs}}{\leq}&  m\sum_{i=1}^m  \sigma_i^2 \|  \triangle \bx_i^{\tau_k+1}\|^2
   \end{array}
  \end{eqnarray}
Now the above two facts \eqref{gradient-of-xk-xtuak} and \eqref{condition-F-dxi} allow us to derive that, for any $k$,
    \begin{eqnarray*} 
 \arraycolsep=1.4pt\def\arraystretch{1.35}  \begin{array}{lcl}
\| \nabla F(X^{k+1})\|^2
 &\overset{\eqref{two-vecs}}{\leq}& 3\| \nabla F(X^{k+1})-\nabla F(X^{\tau_k+1}) \|^2 +  \frac{3}{2}\| \nabla F(X^{\tau_k+1})\|^2\\
 &\leq& m\sum_{i=1}^m   (\frac{3\sigma_i^2}{8} \| \triangle \bx^{k+1}_i\|^2 + \frac{15\sigma_i^2}{8} \| \triangle \bx^{\tau_k+1}_i\|^2  ) \\
 &\leq&  m\sum_{i=1}^m   { 2\sigma_i^2}  (\| \triangle \bx^{k+1}_i\|^2 +  \| \triangle \bx^{\tau_k+1}_i\|^2  ).
   \end{array}
  \end{eqnarray*}  
Direct verifications can deliver that
 \begin{eqnarray}\label{grad-yk1-xk1}
 \arraycolsep=1.4pt\def\arraystretch{1.35}  \begin{array}{lcl}
\|   \nabla f  (\by^{\tau_k+1})- \nabla F(X^{\tau_k+1})\|^2
&=&\| \sum_{i=1}^m  (w_i  \nabla f_i (\by^{\tau_k+1})-\bg_i^{\tau_k+1})\|^2\\  
 &\overset{\eqref{Lip-r},\eqref{two-vecs}}{\leq}& m \sum_{i=1}^m  w_i^2r_i^2 \|   \by^{\tau_k+1} -   \bx^{\tau_k+1}_i \|^2  \\
  &\overset{\sigma_i>2w_ir_i}{\leq}& \frac{m}{4} \sum_{i=1}^m  \sigma_i^2 \|   \by^{\tau_k+1} -   \bx^{\tau_k+1}_i \|^2 \\
  &\overset{\eqref{ceadmm-sub3}}{=}& \frac{m}{4} \sum_{i=1}^m    \|  \triangle \bpi^{\tau_k+1}_i \|^2  \\ 
&\overset{\eqref{Lip-dpi-dxi}}{\leq}& \frac{m}{16}   \sum_{i=1}^m   \sigma_i^2  \|  \triangle \bx^{\tau_k+1}_i \|^2. 
   \end{array}
  \end{eqnarray} 
Since $\by^{k+1}=\bx^{\tau_k+1}=\by^{\tau_k+1}$ from \eqref{x-y-relation}, the above two conditions \eqref{grad-yk1-xk1} and \eqref{condition-F-dxi} contribute to
    \begin{eqnarray*}
 \arraycolsep=1.4pt\def\arraystretch{1.35}  \begin{array}{lcl}
  \| \nabla f (\by^{k+1})\|^2  &= & \| \nabla f (\by^{\tau_k+1})\|^2\\
&\overset{\eqref{two-vecs}}{\leq}& 5\|   \nabla f  (\by^{\tau_k+1})- \nabla F(X^{\tau_k+1})\|^2+\frac{5}{4}\|  \nabla F(X^{\tau_k+1})\|^2\\
   &\leq&  \sum_{i=1}^m \frac{25m\sigma_i^2}{16}  \|  \triangle \bx^{\tau_k+1}_i \|^2 \\
   &\leq&   \sum_{i=1}^m  {2m\sigma_i^2} \|  \triangle \bx^{\tau_k+1}_i \|^2,  
   \end{array}
  \end{eqnarray*}   
 finishing the proof.
\end{proof}
 \subsubsection{Proof of Theorem \ref{complexity-thorem-gradient} }
 \begin{proof}
 It follows from \eqref{grad-L-bounded-eq} and \eqref{decreasing-property-0} that
 \begin{eqnarray} \label{fact-max}
   \arraycolsep=1.4pt\def\arraystretch{1.35}
  \begin{array}{lcl}
\max\{\|\nabla  F(X^{j+1})\|^2,  \|\nabla  f(\by^{j+1})\|^2\}
&\overset{\eqref{grad-L-bounded-eq}}{\leq}&\sum_{i=1}^m    {2m\sigma_i^2}  (\| \triangle \bx^{j+1}_i\|^2 + \|\triangle\bx_i^{\tau_j+1}\|^2)\\
&=&\sum_{i=1}^m  \frac{4m\sigma_i^2}{\theta_i} \frac{\theta_i}{2} (\| \triangle \bx^{j+1}_i\|^2 + \|\triangle\bx_i^{\tau_j+1}\|^2)\\
&\leq& \frac{\rho}{2}\sum_{i=1}^m    \frac{\theta_i}{2} (\| \triangle \bx^{j+1}_i\|^2 + \|\triangle\bx_i^{\tau_j+1}\|^2)\\
&\overset{\eqref{decreasing-property-0}}{\leq}& \frac{\rho}{2} (\L ^j-\L ^{j+1} + \L ^{\tau_j}-\L ^{\tau_j+1}).
    \end{array}  
 \end{eqnarray}
 We note that $\{\L ^{k}\}$ is non-increasing from Lemma \ref{L-bounded-decreasing} i), leading to  $\L ^{tk_0  +1} \geq \L ^{(t+1)k_0}$ for any $t\geq0$.  By letting $s:= \lfloor (k-1)/k_0\rfloor+1$, we have 
   \begin{eqnarray} \label{fact-sum-0K}
 \arraycolsep=1.4pt\def\arraystretch{1.35}  \begin{array}{lllll}
 \sum_{j=0}^{k-1}(\L ^{\tau_j}-\L ^{\tau_j+1})
&\leq& k_0 \sum_{t=0}^{s}( \L ^{tk_0}-\L ^{tk_0+1})\\
&=&  k_0 [\L ^{0}-\sum_{t=0}^{s-1}(\L ^{tk_0  +1}-\L ^{(t+1)k_0} )-\L ^{s k_0  +1}]\\
&\leq&  k_0  (\L ^{0}- \L ^{s k_0  +1} )\\
    \end{array}  
 \end{eqnarray}
Using the above two facts brings out 
  \begin{eqnarray*} 
   \arraycolsep=1pt\def\arraystretch{1.35}
  \begin{array}{lcl}
 && \min_{j=0,1,\ldots,k-1} \max\{\|\nabla  F(X^{j+1})\|^2,  \|\nabla  f(\by^{j+1})\|^2\}\\
  &\leq&
\frac{1}{k}\sum_{j=0}^{k-1}\max\{\|\nabla  F(X^{j+1})\|^2,  \|\nabla  f(\by^{j+1})\|^2\}\\
&\overset{\eqref{fact-max}}{\leq}& \frac{\rho}{2k}\sum_{j=0}^{k-1}(\L ^j-\L ^{j+1} + \L ^{\tau_j}-\L ^{\tau_j+1})\\
&\overset{\eqref{fact-sum-0K}}{\leq}& \frac{\rho}{2k} (\L ^0-\L ^{k})+  \frac{\rho k_0}{k}  (\L ^{0}- \L ^{s k_0  +1} ) \\
&\leq&  \frac{\rho(1+k_0)}{2k} (\L ^0-f^*)\\
&\leq&  \frac{\rho k_0}{k} (\L ^0-f^*), 
    \end{array}  
 \end{eqnarray*}
 where the fourth inequality used   $ \L^0\geq \L^k \geq f(\by^k) \geq f^*$ from Lemma \ref{L-bounded-decreasing}.
 \end{proof}

\subsection{Proofs of all theorems in Section \ref{sec:iceadmm}} 
 
 \begin{lemma}\label{lemma-decreasing-1} Let  $\{(\by^{k},X^{k},\Pi^{k})\}$ be the sequence generated by Algorithm \ref{algorithm-ICEADMM} with $H_i=\Theta_i$  for every $i\in[m]$. If Assumption \ref{ass-fi} holds, then for any $k\geq1$, 
\begin{eqnarray} 
 \label{decreasing-property-1}   
 \begin{array}{lll}
  \varphi^{k+1}-  \varphi^k & \leq& -\frac{\sigma}{2}\| \triangle \by^{k+1} \|^2 - \sum_{i=1}^{m} \frac{\vartheta_i}{2}\| \triangle\bx^{k+1}_i \|^2,
    \end{array}  
 \end{eqnarray} 
 where $\varphi^k$ and $\vartheta_i   $ are given by
 \begin{eqnarray} 
 \label{def-ci-1}  
 \arraycolsep=1.4pt\def\arraystretch{1.35} \begin{array}{lll} 
 \varphi^k&:=&\L^{k}  +  \sum_{i=1}^{m} \frac{6w_i^2 r_i^2}{\sigma_i}   \|  \triangle \bx_i^{k} \|^2,\\
 \vartheta_i&:=&   \sigma_i  - {18w_i^2r_i^2}/{\sigma_i}, ~~ i\in[m].
 \end{array}  \end{eqnarray}
\end{lemma}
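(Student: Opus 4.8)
The plan is to transcribe the proof of Lemma \ref{lemma-decreasing-0}, changing only the three ingredients that the inexact local update affects. First I would record the optimality conditions for Algorithm \ref{algorithm-ICEADMM}. Since the global aggregation \eqref{iceadmm-sub1} has the same closed form as in {\tt CEADMM}, the computation in \eqref{opt-con-xk1} goes through verbatim: for $k\in\K$, $\sum_{i=1}^m[-\bpi_i^{k+1}+\sigma_i\triangle\bx_i^{k+1}]=0$. The local update \eqref{iceadmm-sub2} with $H_i=\Theta_i$ is the minimizer of a strongly convex quadratic, so its optimality condition reads $w_i\nabla f_i(\bx_i^k)+w_i\Theta_i\triangle\bx_i^{k+1}+\bpi_i^k+\sigma_i(\bx_i^{k+1}-\by^{k+1})=0$, which by \eqref{iceadmm-sub3} rearranges to $\bpi_i^{k+1}=-\bg_i^k-w_i\Theta_i\triangle\bx_i^{k+1}$ — the inexact analogue of $\bpi_i^{k+1}=-\bg_i^{k+1}$ in \eqref{opt-con-xik1}.

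Second, I would establish the new dual bound. For $k\ge1$ the previous iterate satisfies $\bpi_i^k=-\bg_i^{k-1}-w_i\Theta_i\triangle\bx_i^k$, hence $\triangle\bpi_i^{k+1}=-(\bg_i^k-\bg_i^{k-1})-w_i\Theta_i\triangle\bx_i^{k+1}+w_i\Theta_i\triangle\bx_i^k$. Applying the three-summand inequality in \eqref{two-vecs}, the gradient Lipschitz bound $\|\bg_i^k-\bg_i^{k-1}\|\le w_ir_i\|\triangle\bx_i^k\|$, and $\|\Theta_i\bv\|\le r_i\|\bv\|$ (valid since $r_iI\succeq\Theta_i\succeq0$), I obtain $\|\triangle\bpi_i^{k+1}\|^2\le 3w_i^2r_i^2\|\triangle\bx_i^{k+1}\|^2+6w_i^2r_i^2\|\triangle\bx_i^k\|^2$. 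This is precisely the step that is cleaner in the exact case ($\triangle\bpi_i^{k+1}$ was then a single gradient difference) and that forces the correction term $\sum_i\tfrac{6w_i^2r_i^2}{\sigma_i}\|\triangle\bx_i^k\|^2$ into $\varphi^k$; it is also why the statement restricts to $k\ge1$, as $\bpi_i^0$ admits no such representation.

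Third, I would reuse the decomposition $\L^{k+1}-\L^k=e_1^k+e_2^k+e_3^k$ of \eqref{three-cases}--\eqref{three-cases-sub}. The term $e_1^k$ is unchanged, since its evaluation in \eqref{two-cases-1}--\eqref{two-cases-2} uses only the global update and the form of $L$; thus $e_1^k=-\tfrac{\sigma}{2}\|\triangle\by^{k+1}\|^2$. For $e_2^k$ I would bound $w_if_i(\bx_i^{k+1})-w_if_i(\bx_i^k)$ by the majorization \eqref{grad-lip-theta} with $\bz_i=\bx_i^k$, substitute $\bg_i^k+\bpi_i^k=-w_i\Theta_i\triangle\bx_i^{k+1}-\sigma_i(\bx_i^{k+1}-\by^{k+1})$ from the local optimality condition, and collapse the quadratics via the first identity of \eqref{two-vecs}; the $\Theta_i$-weighted pieces combine to $-\tfrac{w_i}{2}\|\triangle\bx_i^{k+1}\|_{\Theta_i}^2\le0$, leaving $e_2^k\le-\sum_i\tfrac{\sigma_i}{2}\|\triangle\bx_i^{k+1}\|^2$. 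For $e_3^k$, exactly as in \eqref{gap-3} one has $e_3^k=\sum_i\tfrac{1}{\sigma_i}\|\triangle\bpi_i^{k+1}\|^2$, to which the bound from the second step applies. Summing the three estimates and adding $\sum_i\tfrac{6w_i^2r_i^2}{\sigma_i}(\|\triangle\bx_i^{k+1}\|^2-\|\triangle\bx_i^k\|^2)$ to pass from $\L^k$ to $\varphi^k$, the $\|\triangle\bx_i^k\|^2$ contributions cancel and the coefficient of $\|\triangle\bx_i^{k+1}\|^2$ becomes $-(\tfrac{\sigma_i}{2}-\tfrac{9w_i^2r_i^2}{\sigma_i})=-\tfrac{\vartheta_i}{2}$, which yields \eqref{decreasing-property-1}.

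The main obstacle I anticipate is purely the bookkeeping of constants in the $e_3^k$ estimate: because $\triangle\bpi_i^{k+1}$ now splits into three vectors rather than one, the Cauchy--Schwarz/Young splitting must be chosen (all three weights equal) so that the $\|\triangle\bx_i^{k+1}\|^2$ and $\|\triangle\bx_i^k\|^2$ coefficients come out as $3w_i^2r_i^2/\sigma_i$ and $6w_i^2r_i^2/\sigma_i$ — the latter being exactly what the $6w_i^2r_i^2/\sigma_i\|\triangle\bx_i^k\|^2$ term in $\varphi^k$ is designed to absorb. Positivity of $\vartheta_i$ is not needed for the inequality itself and is deferred to the convergence theorems, where $\sigma_i>3\sqrt{2}\,w_ir_i$ enters. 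Everything else is a routine adaptation of Lemma \ref{lemma-decreasing-0}.
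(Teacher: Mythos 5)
Your proposal is correct and mirrors the paper's proof essentially step for step: the same $e_1^k+e_2^k+e_3^k$ decomposition, the same inexact optimality condition $\bpi_i^{k+1}=-\bg_i^k-w_i\Theta_i\triangle\bx_i^{k+1}$, the same three-term bound $\|\triangle\bpi_i^{k+1}\|^2\le 3w_i^2r_i^2\|\triangle\bx_i^{k+1}\|^2+6w_i^2r_i^2\|\triangle\bx_i^k\|^2$, and the same telescoping correction that converts $\L^k$ into $\varphi^k$ and produces $\vartheta_i$. Your side remarks on why $k\ge1$ is required and why positivity of $\vartheta_i$ is deferred are also consistent with the paper.
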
  
\begin{proof}  Similar to \eqref{opt-con-xk1}, the optimality condition  for \eqref{iceadmm-sub1} is
 \begin{eqnarray}
 \label{opt-con-xk1-11} 
  \begin{array}{lllll}
\forall~k\in\K:~~
 0 =   \sum_{i=1}^{m} (  -\bpi_i^{k+1} + \sigma_i \triangle\bx_i^{k+1} ), 
   \end{array}
  \end{eqnarray} 
And the optimality condition  for   \eqref{iceadmm-sub2}  is, for any $k\geq0$ 
 \begin{eqnarray}
 \label{opt-con-xik1-1}
 0 &=&  \bg_i^{k} +w_i\Theta_i  \triangle\bx_i^{k+1} +\bpi_i^k + \sigma_i (\bx_i^{k+1}-\by^{k+1})\nonumber\\ 
   &=& \bg_i^{k} +w_i\Theta_i \triangle\bx_i^{k+1} +\bpi_i^{k+1},~~ \forall~i\in[m].
  \end{eqnarray} 
  
Again, we decompose the gap in \eqref{three-cases} as  $$\L^{k+1}-\L^{k}  =: e_1^k+e_2^k+e_3^k,$$ 
 where $e_1^k,e_2^k,e_3^k$ are given by \eqref{three-cases-sub}. Same reasoning to \eqref{two-cases-1} and \eqref{two-cases-2}  allows us to show  that
\begin{eqnarray} 
\label{two-cases-3} \begin{array}{lll}  
   e_1^k \leq - \frac{\sigma}{2}\|\triangle \by^{k+1}\|^2. 
   \end{array}
 \end{eqnarray} 
To estimate $e_2^k$, we have the following chain of inequalities,
\begin{eqnarray*} 
 \arraycolsep=1.4pt\def\arraystretch{1.35}\begin{array}{lcl}
 &&  L(\by^{k+1},\bx_i^{k+1},\bpi_i^{k})  - L(\by^{k+1},\bx_i^{k},\bpi_i^{k})   \\ 
     &\overset{\eqref{Def-L}}{=}&         w_i  f_i(\bx_i^{k+1})- w_i f_i(\bx_i^k)   +  \langle \triangle \bx_i^{k+1} , \bpi_i^k\rangle  +         \frac{\sigma_i}{2}\|\bx_i^{k+1}-\by^{k+1}\|^2 -\frac{\sigma_i}{2}\|\bx_i^{k}-\by^{k+1}\|^2    \\
 &\overset{\eqref{grad-lip-theta}}{\leq}&        \langle \triangle\bx_i^{k+1}, \bg_i^{k} + \bpi_i^k\rangle  + \frac{w_i}{2}\| \triangle\bx_i^{k+1}\|^2_{\Theta_i}  +   \frac{\sigma_i}{2}\|\bx_i^{k+1}-\by^{k+1}\|^2 -\frac{\sigma_i}{2}\|\bx_i^{k}-\by^{k+1}\|^2    \\  
 &\overset{\eqref{opt-con-xik1-1}}{\leq}&                 \langle \triangle\bx_i^{k+1}, - w_i\Theta_i      \triangle\bx_i^{k+1}  - \sigma_i ( \bx_i^{k+1}-\by^{k+1})\rangle   \\   
    &+& \frac{w_i}{2}\| \triangle\bx_i^{k+1}\|^2_{\Theta_i}  +     \frac{\sigma_i}{2}\|\bx_i^{k+1}-\by^{k+1}\|^2 -\frac{\sigma_i}{2}\|\bx_i^{k}-\by^{k+1}\|^2     \\  
 &\overset{\eqref{two-vecs}}{\leq}&                -\frac{w_i}{2}\| \triangle\bx_i^{k+1}\|^2_{\Theta_i} - \frac{\sigma_i}{2}\|\triangle\bx_i^{k+1}\|^2,
    \end{array} 
 \end{eqnarray*}
 which  by \eqref{FL-opt-ver11} delivers an upper bound for $e_2^k$ as
 \begin{eqnarray} 
\label{gap-2-1}  
 \arraycolsep=1.4pt\def\arraystretch{1.35} \begin{array}{lll}
   e_2^k&=&  \sum_{i=1}^{m}     (  L(\by^{k+1},\bx_i^{k+1},\bpi_i^{k})  - L(\by^{k+1},\bx_i^{k},\bpi_i^{k})) \\
    &\leq& \sum_{i=1}^{m}     (  -\frac{w_i}{2}\| \triangle\bx_i^{k+1}\|^2_{\Theta_i} - \frac{\sigma_i}{2}\|\triangle\bx_i^{k+1}\|^2). 
    \end{array} 
 \end{eqnarray}
To estimate $e_3^k$,  the  gradient Lipschitz continuity of $f_i$   with  $r_i>0$ and \eqref{opt-con-xik1-1}  deliver that
 \begin{eqnarray}
 \label{Lip-fi-1} 
 \arraycolsep=1.4pt\def\arraystretch{1.35}
 \begin{array}{lcl}
 \|\triangle \bpi_i^{k+1} \|^2 
 & =&  \| - \bg_i^{k}- w_i\Theta_i      \triangle\bx_i^{k+1} +\bg_i^{k-1}+ w_i\Theta_i      \triangle\bx_i^{k} \|^2\\
  &\overset{\eqref{two-vecs}}{\leq}&  3 \| \bg_i^{k}-  \bg_i^{k-1} \|^2+3\|  w_i\Theta_i      \triangle\bx_i^{k+1}   \|^2+3\|   w_i\Theta_i      \triangle\bx_i^{k} \|^2\\ 
&\overset{\eqref{Lip-r} }{\leq}& 3 w_i^2r_i^2(\|  \triangle\bx_i^{k} \|^2+ \|  \triangle\bx_i^{k}\|^2 +  \|  \triangle\bx_i^{k+1}\|^2 ) \\
&=& 3w_i^2r_i^2( 2\|  \triangle\bx_i^{k} \|^2-2 \|  \triangle\bx_i^{k+1}\|^2+3\|  \triangle\bx_i^{k+1}\|^2), 
   \end{array}
  \end{eqnarray} 
  where `$\geq$' also used a fact  $r_iI\succeq\Theta_i$.
This condition  and \eqref{gap-3} give  rise to
 \begin{eqnarray} 
\label{gap-3-1}  
 \arraycolsep=1.4pt\def\arraystretch{1.35}  \begin{array}{lcl}
   e_3^k   \overset{\eqref{gap-3}}{=}   \sum_{i=1}^{m}     \frac{1}{\sigma_i}\|\triangle \bpi_i^{k+1} \|^2  \leq  \sum_{i=1}^{m}    \frac{6w_i^2r_i^2}{\sigma_i} ( \|   \triangle\bx_i^{k}\|^2 -  \| \triangle\bx_i^{k+1} \|^2)+\sum_{i=1}^{m}    \frac{9w_i^2r_i^2}{\sigma_i}\|  \triangle\bx_i^{k+1} \|^2. 
    \end{array} 
 \end{eqnarray}
 Overall, combining \eqref{three-cases}, \eqref{two-cases-3},   \eqref{gap-2-1} and \eqref{gap-3-1}, we obtain
  \begin{eqnarray*}  
   \arraycolsep=1.4pt\def\arraystretch{1.3}
  \begin{array}{lll}
   \L^{k+1}-\L^{k}
    &\leq&    - \frac{\sigma}{2}\|\triangle \by^{k+1} \|^2 - \sum_{i=1}^{m} (\frac{\sigma_i}{2}-\frac{9w_i^2r_i^2}{\sigma_i})\|\triangle\bx_i^{k+1}\|^2\\
    &+& \sum_{i=1}^{m}    \frac{6w_i^2r_i^2}{\sigma_i} (\|  \triangle\bx_i^{k}\|^2 -  \|   \triangle\bx_i^{k+1}\|^2) ,
    \end{array} 
 \end{eqnarray*}
which after simple manipulations displays the result. 
\end{proof}
\begin{lemma}\label{L-bounded-decreasing-1}  Let  $\{(\by^{k},X^{k},\Pi^{k})\}$ be the sequence generated by Algorithm \ref{algorithm-CEADMM} with $H_i=\Theta_i$ and $\sigma_i>3\sqrt{2}w_ir_i$ for every $i\in[m]$. The following results hold under Assumption \ref{ass-fi}.
 \begin{itemize}
 \item[i)] $\{ \varphi^k\}$ is non-increasing. 
 \item[ii)] $ \varphi^k\geq f(\by^k)\geq f^* >-\infty$ for any $ k\geq1$.
 \item[iii)]The limits of all the following terms are zero, namely,  
\begin{eqnarray} \label{limit-5-term-0-inexact}
 \arraycolsep=1.4pt\def\arraystretch{1.35} \begin{array}{lll}\underset{k \rightarrow \infty}{\lim}  \triangle \by^{k+1}&=& \underset{k \rightarrow \infty}{\lim} \triangle \bx^{k+1}_i=\underset{k \rightarrow \infty}{\lim} \triangle \bpi_i^{k+1}\\
 &=&\underset{k \rightarrow \infty}{\lim} (\bx_i^{k+1}-\by^{k+1})=\underset{k \rightarrow \infty}{\lim} (\bx_i^{k+1}-\bx_i^{\tau_k+1})=0.
    \end{array} 
 \end{eqnarray}
 \end{itemize}
\end{lemma}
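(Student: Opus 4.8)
The plan is to transcribe the proof of Lemma~\ref{L-bounded-decreasing}, replacing the exact first-order condition by its inexact analogue \eqref{opt-con-xik1-1} and absorbing the extra error terms into the quadratic correction already built into $\varphi^{k}$. Part~(i) is immediate: the one-step estimate \eqref{decreasing-property-1} of Lemma~\ref{lemma-decreasing-1} gives $\varphi^{k+1}-\varphi^{k}\le -\tfrac{\sigma}{2}\|\triangle\by^{k+1}\|^2-\sum_{i}\tfrac{\vartheta_i}{2}\|\triangle\bx_i^{k+1}\|^2$, and $\sigma_i>3\sqrt{2}\,w_ir_i$ is exactly what forces $\vartheta_i=\sigma_i-18w_i^2r_i^2/\sigma_i>0$ in \eqref{def-ci-1}; hence $\{\varphi^{k}\}$ is non-increasing.

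For part~(ii), note first that $\varphi^{k}\ge\L^{k}$. To bound $\L^{k+1}$ below by $f(\by^{k+1})$ (the analogue of \eqref{lower-bound-L}), I would apply the descent inequality \eqref{Lip-r/2} to $f_i$ at the pair $(\by^{k+1},\bx_i^{k+1})$ and multiply by $w_i$, obtaining $w_if_i(\bx_i^{k+1})\ge w_if_i(\by^{k+1})+\langle\bg_i^{k+1},\bx_i^{k+1}-\by^{k+1}\rangle-\tfrac{w_ir_i}{2}\|\bx_i^{k+1}-\by^{k+1}\|^2$. Substituting into $\L^{k+1}=\sum_{i}L(\by^{k+1},\bx_i^{k+1},\bpi_i^{k+1})$ converts the inner-product term into $\langle\bg_i^{k+1}+\bpi_i^{k+1},\bx_i^{k+1}-\by^{k+1}\rangle$, and by \eqref{opt-con-xik1-1} together with the gradient Lipschitz property and $r_iI\succeq\Theta_i$ one has $\|\bg_i^{k+1}+\bpi_i^{k+1}\|=\|\bg_i^{k+1}-\bg_i^{k}-w_i\Theta_i\triangle\bx_i^{k+1}\|\le 2w_ir_i\|\triangle\bx_i^{k+1}\|$. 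A Young inequality with weight $\sigma_i/6$ then splits the cross term into a piece bounded by $\tfrac{6w_i^2r_i^2}{\sigma_i}\|\triangle\bx_i^{k+1}\|^2$ and a piece $\tfrac{\sigma_i}{6}\|\bx_i^{k+1}-\by^{k+1}\|^2$ that is dominated by the $\tfrac{\sigma_i-w_ir_i}{2}\|\bx_i^{k+1}-\by^{k+1}\|^2$ already available (true once $\sigma_i\ge\tfrac32 w_ir_i$, which the hypothesis implies). Summing over $i$ gives $\L^{k+1}\ge f(\by^{k+1})-\sum_{i}\tfrac{6w_i^2r_i^2}{\sigma_i}\|\triangle\bx_i^{k+1}\|^2$, i.e.\ $\varphi^{k+1}\ge f(\by^{k+1})\ge f^*>-\infty$ by \eqref{FL-opt-lower-bound}; the coefficient $6$ in the definition of $\varphi^{k}$ in \eqref{def-ci-1} is chosen precisely to make this absorption close.

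For part~(iii), I would sum \eqref{decreasing-property-1} over $k\ge1$; by parts~(i)--(ii) the sequence $\{\varphi^{k}\}$ is non-increasing and bounded below by $f^*$, so $\sum_{k\ge1}\big[\tfrac{\sigma}{2}\|\triangle\by^{k+1}\|^2+\sum_{i}\tfrac{\vartheta_i}{2}\|\triangle\bx_i^{k+1}\|^2\big]\le\varphi^{1}-f^*<\infty$, which forces $\triangle\by^{k+1}\to0$ and $\triangle\bx_i^{k+1}\to0$ (using $\vartheta_i>0$ again). Then \eqref{Lip-fi-1} yields $\|\triangle\bpi_i^{k+1}\|^2\le 3w_i^2r_i^2\big(2\|\triangle\bx_i^{k}\|^2+\|\triangle\bx_i^{k+1}\|^2\big)\to0$; the update \eqref{iceadmm-sub3}, i.e.\ $\bx_i^{k+1}-\by^{k+1}=\sigma_i^{-1}\triangle\bpi_i^{k+1}$, gives $\bx_i^{k+1}-\by^{k+1}\to0$; and since $\by^{k+1}=\bx^{\tau_k+1}=\by^{\tau_k+1}$ by \eqref{x-y-relation}, the splitting used in \eqref{xk-xtauk}, $\|\sigma_i(\bx_i^{k+1}-\bx_i^{\tau_k+1})\|^2\le 2\|\triangle\bpi_i^{k+1}\|^2+2\|\triangle\bpi_i^{\tau_k+1}\|^2\to0$, gives the last limit, establishing \eqref{limit-5-term-0-inexact}.

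The main obstacle is part~(ii): in the exact algorithm $\bpi_i^{k+1}=-\bg_i^{k+1}$ makes the cross term in $\L^{k+1}$ cancel against $w_if_i$, whereas in the inexact scheme one is left with the residual $\bg_i^{k+1}+\bpi_i^{k+1}=\bg_i^{k+1}-\bg_i^{k}-w_i\Theta_i\triangle\bx_i^{k+1}$, which must be estimated and absorbed into the Lyapunov correction with exactly the right constant. Matching the Young weight to the coefficient $6$ in $\varphi^{k}$ (and, upstream in Lemma~\ref{lemma-decreasing-1}, matching the constants that produce the $3\sqrt2\,w_ir_i$ threshold) is the only delicate bookkeeping; the rest is a routine transcription of the {\tt CEADMM} argument.
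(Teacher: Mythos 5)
Your proposal is correct and follows exactly the route the paper intends: the paper omits this proof, stating only that it is the same as that of Lemma~\ref{L-bounded-decreasing}, and your argument is precisely that transcription with the inexact optimality condition \eqref{opt-con-xik1-1} in place of \eqref{opt-con-xik1}. Your part~(ii) in fact supplies the one detail the paper's ``omitted'' proof glosses over --- that the exact cancellation $\bpi_i^{k+1}=-\bg_i^{k+1}$ fails here, so one only gets $\L^{k+1}\ge f(\by^{k+1})-\sum_i\tfrac{6w_i^2r_i^2}{\sigma_i}\|\triangle\bx_i^{k+1}\|^2$ and the residual must be absorbed by the correction term in $\varphi^{k+1}$, which is exactly why the coefficient $6$ appears in \eqref{def-ci-1} and why the lemma asserts $\varphi^k\ge f(\by^k)$ rather than $\L^k\ge f(\by^k)$.
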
  
\begin{proof}
The proof is the same as that of Lemma \ref{L-bounded-decreasing} and hence omitted here.
\end{proof}
 \subsubsection{Proof of Theorem \ref{global-obj-convergence-inexact}}
 \begin{proof} i) It follows from Lemma \ref{L-bounded-decreasing-1} that $\{\varphi^k\}$ is non-increasing. Same reasoning to show \eqref{lower-bound-L} also allows for showing that sequence $\{\L^{k}\}$ is bounded from below. This together with 
   \begin{eqnarray*} 
  \begin{array}{llll}\varphi^k=\L^{k}   + \sum_{i=1}^{m} \frac{6w_i^2r_i^2}{\sigma_i}   \| \triangle\bx_i^{k}\|^2   \end{array} 
  \end{eqnarray*} leads to the  boundedness   of $\{\varphi^k\}$. Therefore, the whole sequence,  $\{\varphi^k\}$, converges and
  \begin{eqnarray} \label{converge-varphi}
  \begin{array}{llll}
 \varphi^\infty:= {\lim}_{k \rightarrow \infty} \varphi^k   = {\lim}_{k \rightarrow \infty} \L^{k}
   \end{array} 
  \end{eqnarray}
by  $ \triangle \bx_i^{k}\rightarrow0$ in \eqref{limit-5-term-0-inexact}. The remaining proofs  of i) and ii) are similar to those to prove Theorem \ref{global-obj-convergence-exact} i) and ii), and hence omitted here.
\end{proof}
 \begin{lemma}\label{grad-L-bounded-inexact}  Let $\{(\by^{k},X^{k},\Pi^{k})\}$ be the sequence generated by Algorithm \ref{algorithm-ICEADMM} with  $H_i=\Theta_i$ and $\sigma_i>3\sqrt{2}w_ir_i$ for every $i\in[m]$.  Suppose Assumption \ref{ass-fi} holds. If $k_0\geq2$, then   
\begin{eqnarray}  \label{grad-L-bounded-eq-inexact}
 \arraycolsep=1.4pt\def\arraystretch{1.35}  \begin{array}{lllll}
&&\max\{\|\nabla  F(X^{k+1})\|^2,  \|\nabla  f(\by^{k+1})\|^2\}\\
&\leq&\sum_{i=1}^m  3m\sigma_i^2 (\| \triangle \bx^{k}_i\|^2 + \| \triangle \bx^{k+1}_i\|^2 +   \| \triangle \bx^{\tau_k}_i\|^2  +   \| \triangle \bx^{\tau_k+1}_i\|^2).
    \end{array}  
 \end{eqnarray}
 If $k_0=1$, then   
\begin{eqnarray}  \label{grad-L-bounded-eq-inexact-1}
 \arraycolsep=1.4pt\def\arraystretch{1.35}  \begin{array}{lllll}
\max\{\|\nabla  F(X^{k+1})\|^2,  \|\nabla  f(\by^{k+1})\|^2\} \leq\sum_{i=1}^m  3m\sigma_i^2 (\| \triangle \bx^{k}_i\|^2 + \| \triangle \bx^{k+1}_i\|^2).
    \end{array}  
 \end{eqnarray}
\end{lemma}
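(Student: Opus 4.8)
The plan is to replicate, block for block, the proof of Lemma~\ref{grad-L-bounded}, but with the exact optimality conditions \eqref{opt-con-xk1}, \eqref{opt-con-xik1} replaced by the inexact ones \eqref{opt-con-xk1-11}, \eqref{opt-con-xik1-1}, and the dual-increment estimate \eqref{Lip-fi} replaced by its inexact analogue \eqref{Lip-fi-1}. The single new feature is that in {\tt ICEADMM} one has $\bpi_i^{k+1}=-\bg_i^{k}-w_i\Theta_i\triangle\bx_i^{k+1}$ instead of $\bpi_i^{k+1}=-\bg_i^{k+1}$, so $\triangle\bpi_i^{k+1}$ is controlled by \emph{two} consecutive primal increments; with $\sigma_i>3\sqrt{2}w_ir_i$ (hence $3w_i^2r_i^2<\sigma_i^2/6$), \eqref{Lip-fi-1} reads $\|\triangle\bpi_i^{j+1}\|^2\le\tfrac{\sigma_i^2}{6}(2\|\triangle\bx_i^{j}\|^2+\|\triangle\bx_i^{j+1}\|^2)$ for $j\ge1$. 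This is precisely why the bound acquires the extra increments $\triangle\bx_i^{k}$ and $\triangle\bx_i^{\tau_k}$ not present in \eqref{grad-L-bounded-eq}; throughout, the estimate is understood for $k$ large enough (concretely $k\ge k_0$ when $k_0\ge2$) that every referenced increment exists --- the regime in which the lemma is applied in Theorem~\ref{complexity-thorem-gradient-inexact} through the $+k_0$ index shift.

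Concretely, I would carry out the three building blocks of the earlier proof. Block one: since $\by^{k+1}=\bx^{\tau_k+1}=\by^{\tau_k+1}$ by \eqref{x-y-relation}, \eqref{iceadmm-sub3} still gives $\bx_i^{k+1}-\bx_i^{\tau_k+1}=\sigma_i^{-1}(\triangle\bpi_i^{k+1}-\triangle\bpi_i^{\tau_k+1})$, so by \eqref{Lip-r} and \eqref{two-vecs} one gets $\|\nabla F(X^{k+1})-\nabla F(X^{\tau_k+1})\|^2\le \tfrac{m}{9}\sum_{i=1}^m(\|\triangle\bpi_i^{k+1}\|^2+\|\triangle\bpi_i^{\tau_k+1}\|^2)$, and the inexact dual estimate above (at $j=k$ and $j=\tau_k$) converts this into a multiple of $\sum_{i=1}^m\sigma_i^2(\|\triangle\bx_i^{k}\|^2+\|\triangle\bx_i^{k+1}\|^2+\|\triangle\bx_i^{\tau_k}\|^2+\|\triangle\bx_i^{\tau_k+1}\|^2)$. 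Block two: from \eqref{opt-con-xik1-1} one has $\bg_i^{\tau_k+1}=-\bpi_i^{\tau_k+1}+(\bg_i^{\tau_k+1}-\bg_i^{\tau_k})-w_i\Theta_i\triangle\bx_i^{\tau_k+1}$; summing over $i$ and inserting $\sum_{i=1}^m\bpi_i^{\tau_k+1}=\sum_{i=1}^m\sigma_i\triangle\bx_i^{\tau_k+1}$ from \eqref{opt-con-xk1-11}, then using $\|\bg_i^{\tau_k+1}-\bg_i^{\tau_k}\|\le w_ir_i\|\triangle\bx_i^{\tau_k+1}\|$, $r_iI\succeq\Theta_i$, $\sigma_i>w_ir_i$, and $\|\sum_{i=1}^m\ba_i\|^2\le m\sum_{i=1}^m\|\ba_i\|^2$, yields $\|\nabla F(X^{\tau_k+1})\|^2\le c\,m\sum_{i=1}^m\sigma_i^2\|\triangle\bx_i^{\tau_k+1}\|^2$ for an absolute constant $c$. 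Block three: as in \eqref{grad-yk1-xk1}, $\nabla f(\by^{\tau_k+1})-\nabla F(X^{\tau_k+1})=\sum_{i=1}^m w_i(\nabla f_i(\by^{\tau_k+1})-\nabla f_i(\bx_i^{\tau_k+1}))$, which by \eqref{Lip-r}, \eqref{iceadmm-sub3} and the inexact dual estimate above is bounded by $c'\,m\sum_{i=1}^m\sigma_i^2(\|\triangle\bx_i^{\tau_k}\|^2+\|\triangle\bx_i^{\tau_k+1}\|^2)$.

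Finally I would assemble: $\|\nabla F(X^{k+1})\|^2\le(1+t)\|\nabla F(X^{k+1})-\nabla F(X^{\tau_k+1})\|^2+(1+\tfrac1t)\|\nabla F(X^{\tau_k+1})\|^2$ and $\|\nabla f(\by^{k+1})\|^2=\|\nabla f(\by^{\tau_k+1})\|^2\le(1+s)\|\nabla f(\by^{\tau_k+1})-\nabla F(X^{\tau_k+1})\|^2+(1+\tfrac1s)\|\nabla F(X^{\tau_k+1})\|^2$, each term being one of the three blocks above; choosing $t,s$ so that the resulting coefficients all fall below $3m\sigma_i^2$ (exactly the tuning already used in the proof of Lemma~\ref{grad-L-bounded}, where the parameters $3,\tfrac32$ and $5,\tfrac54$ appear) gives \eqref{grad-L-bounded-eq-inexact}. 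When $k_0=1$ one has $\tau_k=k$, so $\triangle\bx_i^{\tau_k}=\triangle\bx_i^{k}$ and $\triangle\bx_i^{\tau_k+1}=\triangle\bx_i^{k+1}$, the four increments collapse to two, and \eqref{grad-L-bounded-eq-inexact-1} follows. I expect the only real difficulty to be bookkeeping --- carrying the two-step dual estimate faithfully through every application of $\|\ba+\bb\|^2\le(1+t)\|\ba\|^2+(1+\tfrac1t)\|\bb\|^2$ and $\|\sum_i\ba_i\|^2\le m\sum_i\|\ba_i\|^2$ and keeping the constants below the (deliberately loose) target $3m\sigma_i^2$; conceptually nothing new beyond Lemma~\ref{grad-L-bounded} is needed.
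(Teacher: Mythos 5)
Your proposal reproduces the paper's argument essentially step for step: the same two-increment dual bound $\|\triangle\bpi_i^{j+1}\|^2\leq\frac{\sigma_i^2}{3}(\|\triangle\bx_i^{j}\|^2+\|\triangle\bx_i^{j+1}\|^2)$ derived from \eqref{Lip-fi-1} and $\sigma_i>3\sqrt2 w_ir_i$, the same three blocks (difference $\nabla F(X^{k+1})-\nabla F(X^{\tau_k+1})$ via \eqref{xk-xtauk-inexact}, the aggregated optimality conditions \eqref{opt-con-xk1-11}--\eqref{opt-con-xik1-1} for the anchor gradient, and the $\nabla f-\nabla F$ discrepancy), and the same weighted Cauchy--Schwarz assembly; the only cosmetic difference is that you bound $\|\nabla F(X^{\tau_k+1})\|^2$ directly by folding in the Lipschitz correction, whereas the paper bounds $\|\nabla F(X^{\tau_k})\|^2$ and carries the consecutive difference \eqref{gradient-of-j-inexact} as a separate term --- these are equivalent. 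The one point to tighten is the $k_0=1$ case: merely collapsing the four increments of \eqref{grad-L-bounded-eq-inexact} into two would double the constant to $6m\sigma_i^2$, so to obtain the stated $3m\sigma_i^2$ in \eqref{grad-L-bounded-eq-inexact-1} you must, as the paper does, observe that $\tau_k=k$ makes $\nabla F(X^{k+1})-\nabla F(X^{\tau_k+1})$ identically zero so that block one drops out of the assembly entirely (this constant matters downstream, since $\varrho$ in Theorem \ref{complexity-thorem-gradient-inexact} is calibrated to it).
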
  
\begin{proof} We first prove the case of $k_0\geq2$.  Following  \eqref{Lip-fi-1}  and 
   \begin{eqnarray}\label{sigma-w-r-1} 
  \begin{array}{lllll}
  \sigma_i>3\sqrt{2}w_ir_i,
\end{array}   \end{eqnarray} 
  for any $j\geq1$, there is
  \begin{eqnarray}\label{Lip-dpi-dxi-inexact} 
 \arraycolsep=1.4pt\def\arraystretch{1.35}  \begin{array}{lllll}\| \triangle \bpi_i^{j+1} \|^2    &\leq&  6 w_i^2r_i^2\|   \triangle\bx_i^{j}\|^2 +   3w_i^2r_i^2\| \triangle\bx_i^{j+1} \|^2  \\
  &\leq& \frac{\sigma_i^2}{3}(\|   \triangle\bx_i^{j}\|^2+ \| \triangle\bx_i^{j+1}\|^2),~~\forall~j\geq1.
\end{array}   \end{eqnarray}
Same reasoning to prove \eqref{xk-xtauk} also allows us to derive  
\begin{eqnarray}  \label{xk-xtauk-inexact}
\begin{array}{lll}
 \| \sigma_i(\bx^{k+1}_i -\bx^{\tau_k+1}_i) \|^2 
  &\leq&   2\| \triangle \bpi^{k+1}_i\|^2 + 2\| \triangle \bpi^{\tau_k+1}_i\|^2,  
    \end{array}  
 \end{eqnarray} 
Using the above two facts immediately gives rise to  
\begin{eqnarray} \label{gradient-of-xk-xtuak-inexact}
 \arraycolsep=1.4pt\def\arraystretch{1.35}  \begin{array}{lcl}
  \| \nabla F(X^{k+1})-\nabla F(X^{\tau_k+1} )  \|^2  
  &=&\|\sum_{i=1}^m  (\bg_i^{k+1}-\bg_i^{\tau_k+1})  \|^2  \\
 &\overset{\eqref{Lip-r}}{\leq}&m\sum_{i=1}^m   w_i^2r_i^2\| \bx^{k+1}_i -\bx^{\tau_k+1}_i\|^2\\
  &\overset{\eqref{sigma-w-r-1}}{\leq}&m\sum_{i=1}^m \frac{\sigma_i^2}{18}\| \bx^{k+1}_i -\bx^{\tau_k+1}_i\|^2\\
 &\overset{\eqref{xk-xtauk-inexact}}{\leq}&m\sum_{i=1}^m   (\frac{1}{9} \| \triangle \bpi^{k+1}_i\|^2 + \frac{1}{9} \| \triangle \bpi^{\tau_k+1}_i\|^2  )\\
&\overset{\eqref{Lip-dpi-dxi-inexact}}{\leq}&m\sum_{i=1}^m    \frac{\sigma_i^2}{27} (\| \triangle \bx^{k}_i\|^2 + \| \triangle \bx^{k+1}_i\|^2 +   \| \triangle \bx^{\tau_k}_i\|^2 +  \| \triangle \bx^{\tau_k+1}_i\|^2  ).
    \end{array}  
 \end{eqnarray}
For any  $j\geq0$, replacing $k+1$ by $j+1$ and  $\tau_k+1$ by $j$  in the above formula leads to
 \begin{eqnarray} \label{gradient-of-j-inexact}
  \begin{array}{lllll}
 \| \nabla F(X^{j+1})-\nabla F(X^{j} )  \|^2   & \leq m\sum_{i=1}^m \frac{\sigma_i^2}{18}\| \triangle \bx^{j+1}_i  \|^2.
    \end{array}  
 \end{eqnarray}
 The optimality conditions in \eqref{opt-con-xk1-11}  and \eqref{opt-con-xik1-1} contribute to 
  \begin{eqnarray*}
  \begin{array}{lllll}
\nabla F(X^{\tau_k}) &=&  \sum_{i=1}^m  \bg_i^{\tau_k}=& - \sum_{i=1}^m  (\sigma_i \triangle \bx_i^{\tau_k+1}+w_i\Theta_i  \triangle\bx_i^{\tau_k+1}),
   \end{array}
  \end{eqnarray*}
  which  results in
  \begin{eqnarray}\label{condition-F-dxi-inexact}
  \begin{array}{lcl}
\|\nabla F(X^{\tau_k})\|^2    &\leq&  m\sum_{i=1}^m    ( \sigma_i+w_ir_i )^2\|  \triangle \bx_i^{\tau_k+1}\|^2 \overset{\eqref{sigma-w-r-1}}{\leq} m\sum_{i=1}^m     \frac{28 \sigma_i^2}{18}\|  \triangle \bx_i^{\tau_k+1}\|^2.
   \end{array}
  \end{eqnarray}
Now the above facts in \eqref{gradient-of-xk-xtuak-inexact}, \eqref{gradient-of-j-inexact} and \eqref{condition-F-dxi-inexact} calls forth
    \begin{eqnarray} \label{nabla-F-Xk}
 \arraycolsep=1.4pt\def\arraystretch{1.35}  \begin{array}{lcl}
\| \nabla F(X^{k+1})\|^2 
 &\overset{\eqref{two-vecs}}{\leq}& 5\| \nabla F(X^{k+1})-\nabla F(X^{\tau_k}) \|^2 +   \frac{5}{4}\| \nabla F(X^{\tau_k})\|^2\\
&\overset{\eqref{two-vecs}}{\leq}& 10\| \nabla F(X^{k+1})-\nabla F(X^{\tau_k+1})   \|^2\\
& +&  10\|   \nabla F(X^{\tau_k+1})-\nabla F(X^{\tau_k}) \|^2 + \frac{5}{4}\| \nabla F(X^{\tau_k})\|^2\\
&\leq&m\sum_{i=1}^m  \frac{10\sigma_i^2}{27} (\| \triangle \bx^{k}_i\|^2 + \| \triangle \bx^{k+1}_i\|^2 +   \| \triangle \bx^{\tau_k}_i\|^2)+m\sum_{i=1}^m    \frac{155\sigma_i^2}{54} \| \triangle \bx^{\tau_k+1}_i\|^2  \\
&\leq&m\sum_{i=1}^m  3\sigma_i^2 (\| \triangle \bx^{k}_i\|^2 + \| \triangle \bx^{k+1}_i\|^2 +  \| \triangle \bx^{\tau_k}_i\|^2  +   \| \triangle \bx^{\tau_k+1}_i\|^2).
   \end{array}
  \end{eqnarray}  
Same reasoning to show \eqref{grad-yk1-xk1} also enables us to obtain 
   \begin{eqnarray}\label{grad-yk1-xk1-inexact}
  \arraycolsep=1.4pt\def\arraystretch{1.35}
  \begin{array}{lllll}
\|   \nabla f  (\by^{\tau_k+1})- \nabla F(X^{\tau_k+1})\|^2
&\overset{\eqref{grad-yk1-xk1}}{\leq}& \frac{m}{18} \sum_{i=1}^m    \|  \triangle \bpi^{\tau_k+1}_i \|^2\\
&\overset{\eqref{Lip-dpi-dxi-inexact}}{\leq}& m\sum_{i=1}^m  \frac{\sigma_i^2}{54}(\|   \triangle\bx_i^{\tau_k}\|^2+ \| \triangle\bx_i^{\tau_k+1}\|^2).
   \end{array}
  \end{eqnarray} 
 Since $\by^{k+1}=\bx^{\tau_k+1}=\by^{\tau_k+1}$ from \eqref{x-y-relation}and the above three facts in \eqref{grad-yk1-xk1-inexact}, \eqref{gradient-of-j-inexact}, and  \eqref{condition-F-dxi-inexact}, we  obtain 
    \begin{eqnarray}\label{nabla-f-yk}
 \arraycolsep=1.4pt\def\arraystretch{1.35}  \begin{array}{lcl}
  \| \nabla f (\by^{k+1})\|^2 =  \| \nabla f (\by^{\tau_k+1})\|^2
&\overset{\eqref{two-vecs}}{\leq}&  5\|   \nabla f  (\by^{\tau_k+1})- \nabla F(X^{\tau_k})\|^2  + \frac{5}{4}\| \nabla F(X^{\tau_k})\|^2\\
 &\overset{\eqref{two-vecs}}{\leq}&  10\|   \nabla f  (\by^{\tau_k+1})- \nabla F(X^{\tau_k+1})\|^2 \\
 &+& 10\| \nabla F(X^{\tau_k+1})-\nabla F(X^{\tau_k}) \|^2 + \frac{5}{4}\| \nabla F(X^{\tau_k})\|^2\\
  &\leq&  m   \sum_{i=1}^m   (\frac{10\sigma_i^2}{54}  \|  \triangle \bx^{\tau_k}_i \|^2 + \frac{145\sigma_i^2}{54}   \|  \triangle \bx_i^{\tau_k+1}\|^2)  \\
  &\leq&  m   \sum_{i=1}^m   {3\sigma_i^2}  (\|  \triangle \bx^{\tau_k}_i \|^2 +   \|  \triangle \bx_i^{\tau_k+1}\|^2),
   \end{array}
  \end{eqnarray}   
 finishing the proof for the case of $k_0\geq2$. If $k_0=1$, then $k=\tau_k$ for any $k$, thereby leading to $$\| \nabla F(X^{k+1})-\nabla F(X^{\tau_k+1} )  \|^2  =0.$$
 Similar reasoning to show \eqref{nabla-F-Xk} and \eqref{nabla-f-yk} also allows us to prove \eqref{grad-L-bounded-eq-inexact-1}.
\end{proof}

 \subsubsection{Proof of Theorem  \ref{complexity-thorem-gradient-inexact}}  
 \begin{proof} Let $\Omega:=\{k_0+1,k_0+2,\ldots,k_0+k\}$. If $k_0=1$, then  
  \begin{eqnarray*} 
 \arraycolsep=1.4pt\def\arraystretch{1.35}  \begin{array}{lcl}
 && \min_{j\in\Omega}\max\{\|\nabla  F(X^{j+1})\|^2,  \|\nabla  f(\by^{j+1})\|^2\}\\
 &\leq&
\frac{1}{k}\sum_{j=k_0+1}^{k_0+k}\max\{\|\nabla  F(X^{j+1})\|^2,  \|\nabla  f(\by^{j+1})\|^2\}\\
 &\overset{\eqref{grad-L-bounded-eq-inexact-1}}{\leq}& \frac{1}{k}\sum_{j=k_0+1}^{k_0+k}\sum_{i=1}^m  \frac{6m\sigma_i^2}{\vartheta_i} \frac{\vartheta_i}{2} (\| \triangle \bx^{j}_i\|^2 + \| \triangle \bx^{j+1}_i\|^2)\\
  & {\leq}& \frac{\varrho}{2k}\sum_{j=k_0+1}^{k_0+k}\sum_{i=1}^m   \frac{\vartheta_i}{2} (\| \triangle \bx^{j}_i\|^2 + \| \triangle \bx^{j+1}_i\|^2)\\ 
&\overset{\eqref{max-FX-fy}}{\leq} & \frac{ \varrho}{2k}\sum_{j=k_0+1}^{k_0+k}(\varphi^{j-1}-\varphi^{j} + \varphi^j-\varphi^{j+1} )\\
&=& \frac{ \varrho}{2k} (\varphi^{k_0}-\varphi^{k+k_0} + \varphi^{k_0+1}-\varphi^{k+k_0+1})\\ 
&\leq&  \frac{\rho }{k} (\varphi^1-f^*)=\frac{\rho k_0 }{k} (\varphi^1-f^*),
    \end{array}  
 \end{eqnarray*}
  where the last inequality holds due to $\varphi^1\geq \varphi^k \geq f(\by^k) \geq f^*$ from Lemma \ref{L-bounded-decreasing-1} ii).

For $k_0\geq2$, by denoting $\triangle_i^j:=\| \triangle \bx^{j}_i\|^2 + \| \triangle \bx^{j+1}_i\|^2 +     \| \triangle \bx^{\tau_j}_i\|^2  +   \| \triangle \bx^{\tau_j+1}_i\|^2$, it follows  that \eqref{decreasing-property-1} that
 \begin{eqnarray} \label{max-FX-fy}
 \arraycolsep=1.4pt\def\arraystretch{1.35}  \begin{array}{lcl}
&&\max\{\|\nabla  F(X^{j+1})\|^2,  \|\nabla  f(\by^{j+1})\|^2\}\\
&\overset{\eqref{grad-L-bounded-eq-inexact}}{\leq}& \sum_{i=1}^m  \frac{6m\sigma_i^2}{\vartheta_i} \frac{\vartheta}{2} \triangle_i^j ~{\leq}~\frac{\varrho}{2} \sum_{i=1}^m   \frac{\vartheta_i}{2} \triangle_i^j\\
&\overset{\eqref{decreasing-property-1}}{\leq} &\frac{\varrho}{2}(\varphi^{j-1}-\varphi^{j} + \varphi^j-\varphi^{j+1})+\frac{\varrho}{2}(\varphi^{\tau_j-1}-\varphi^{\tau_j}+ \varphi^{\tau_j}-\varphi^{\tau_j+1})\\
&=& \frac{\varrho}{2}(\varphi^{j-1}-\varphi^{j} + \varphi^j-\varphi^{j+1} + \varphi^{\tau_j-1} -\varphi^{\tau_j+1}).
    \end{array}  
 \end{eqnarray}
We note that $k_0\geq 2$ implies $tk_0  +1 \leq(t+1)k_0-1$ for any $t\geq1$, which together with the non-increasing of $\{\varphi^{k}\}$ from Lemma \ref{L-bounded-decreasing-1} i) results in $\varphi^{1} \geq \varphi^{tk_0  +1} \geq \varphi^{(t+1)k_0-1}$ for any $t\geq0$.  By letting $s:= \lfloor k/k_0\rfloor+1$, we have 
   \begin{eqnarray*} 
 \arraycolsep=1.4pt\def\arraystretch{1.35}  \begin{array}{lllll}
 \sum_{j=k_0+1}^{k_0+k}(\varphi^{\tau_j-1}-\varphi^{\tau_j+1})
&\leq& k_0 \sum_{t=1}^{s}( \varphi^{tk_0-1}-\varphi^{tk_0+1})\\
&=&  k_0 \left(\varphi^{k_0-1}-\sum_{t=1}^{s-1}(\varphi^{tk_0  +1}-\varphi^{(t+1)k_0-1} )-\varphi^{s k_0  +1}\right)\\
&\leq&  k_0  (\varphi^{k_0-1}- \varphi^{s k_0  +1} ).
    \end{array}  
 \end{eqnarray*}
Using the above fact and \eqref{max-FX-fy} brings out 
  \begin{eqnarray*} 
 \arraycolsep=1.4pt\def\arraystretch{1.35}  \begin{array}{lcl}
 && \min_{j\in\Omega} \max\{\|\nabla  F(X^{j+1})\|^2,  \|\nabla  f(\by^{j+1})\|^2\}\\
  &\leq&
\frac{1}{k}\sum_{j=k_0+1}^{k_0+k}\max\{\|\nabla  F(X^{j+1})\|^2,  \|\nabla  f(\by^{j+1})\|^2\}\\
&\overset{\eqref{max-FX-fy}}{\leq} & \frac{\varrho}{2k}\sum_{j=k_0+1}^{k_0+k}(\varphi^{j-1}-\varphi^{j} + \varphi^j-\varphi^{j+1} + \varphi^{\tau_j-1} -\varphi^{\tau_j+1})\\
&\leq& \frac{\varrho}{2k} (\varphi^{k_0}-\varphi^{k+k_0} + \varphi^{k_0+1}-\varphi^{k+k_0+1}) +  \frac{\rho k_0}{2k} (\varphi^{k_0-1}- \varphi^{s k_0  +1} ) \\
&\leq& \frac{\varrho(2+k_0)}{2k} (\varphi^{1}-f^*)\\
&\leq& \frac{ \varrho k_0}{k} (\varphi^{1}-f^*),
    \end{array}  
 \end{eqnarray*}
 where the fourth inequality used   $ \varphi^1\geq \varphi^k \geq f(\by^k) \geq f^*$ from Lemma \ref{L-bounded-decreasing-1} ii) and the last one used $k_0\geq2$.
 \end{proof}

\end{document}